\newtheorem{theorem}{Theorem}
\newtheorem{corollary}{Corollary}
\newtheorem{lemma}{Lemma}
\newtheorem{remark}{Remark}
\newtheorem{proposition}{Proposition}
\newcommand\myeq{\stackrel{\mathclap{\normalfont\mbox{def}}}{=}}
\title{Untangling tradeoffs between recurrence and self-attention in neural networks}
\author{%
  Giancarlo Kerg$^{1,2,}$ 
  \thanks{Indicates first authors. Ordering determined by coin flip. \newline
  \hangindent=4.3mm
  1: Mila - Quebec AI Institute, Canada \newline  
  2: Universit\'e de Montr\'eal, D\'epartement d'Informatique et Recherche Op\'erationelle, Montreal, Canada \newline
  3: Universit\'e de Montr\'eal, CIRRELT, Montreal, Canada\newline
  4: CIFAR senior fellow \newline
  5: Universit\'e de Montr\'eal, D\'epartement de Math\'ematiques et Statistiques, Montreal, Canada \newline
  \newline Correspondence to: <giancarlo.kerg@gmail.com>}
  \And
  Bhargav Kanuparthi $^{1,2,*}$
  \And 
  Anirudh Goyal $^{1,2}$
  \And
  Kyle Goyette $^{1,2,3}$
  \And
  Yoshua Bengio$^{1,2,4}$
  \And
  Guillaume Lajoie$^{1,2,5}$
}
\begin{document}

\maketitle

\begin{abstract}

Attention and self-attention mechanisms, are now central to state-of-the-art deep learning on sequential tasks.
However, most recent progress hinges on heuristic approaches with limited understanding of attention's role in model optimization and computation, and rely on considerable memory and computational resources that scale poorly. 
In this work, we present a formal analysis of how self-attention affects gradient propagation in recurrent networks, and prove that it mitigates the problem of vanishing gradients when trying to capture long-term dependencies by establishing concrete bounds for gradient norms.
Building on these results, we propose a relevancy screening mechanism, inspired by the cognitive process of memory consolidation, that allows for a scalable use of sparse self-attention with recurrence. 
While providing guarantees to avoid vanishing gradients, we use simple numerical experiments to demonstrate the tradeoffs in performance and computational resources by efficiently balancing attention and recurrence.
Based on our results, we propose a concrete direction of research to improve scalability of attentive networks.
\end{abstract}

\section{Introduction}
\label{introduction}

We live in a world where most of the information takes a sequential form, largely because it is delivered over time. Performing computations on streams of sequential inputs requires extracting relevant temporal dependencies and learning to recognize patterns across several timescales. 
Humans can effortlessly make associations relating events stored in memory which are far from each other in time and thus, capture long-term dependencies.

Historically, recurrent neural networks (RNNs) have been the deep network architecture of choice for this type of task since, just like neural circuits in the brain, they enable {\it dynamics} that can be shaped to interact with input streams. However, RNNs (including gated RNNs \cite{Schmidhuber:1997fq, gru}) still struggle with large timescales as their iterative nature leads to unstable information propagation \cite{Bengio:1994do, Pascanu:2013tw,Schmidhuber:1997fq,hochreiter1991}.This is because most standard RNNs rely on their current state $h_t$, a vector of fixed dimension, to represent a summary of relevant past information. Indeed, \citet{Bengio:1994do} showed that without making additional assumptions, storing information in a fixed-size state vector in a stable way necessarily leads to vanishing gradients when back-propagating through time (see also~\citep{hochreiter1991}).


%
Several attempts have been made to augment RNN dynamics with external memory to mitigate these issues \cite{end_to_end, NTM,RMC,graves2016hybrid}, but it is only recently that access to externally stored information has become effective with the introduction of {\it attention}, and more particularly {\it soft attention} mechanisms~\cite{attention}. 
Attention provides a way by which a system can dynamically access past states and inputs across several timescales, bypassing the need of sequential propagation and ignoring irrelevant information (or distractor information).
There is substantial empirical evidence that attention, especially {\it self-attention} (\citet{transformer, ke2018sparse}), is very helpful to improve learning and computations over long-term dependencies.
However, to the best of our knowledge, there is currently limited understanding of gradient scaling properties in the presence of attention. Moreover, attending over long sequences requires to hold inputs and/or past states in memory, a process that typically scales quadratically with sequence length.

Much like work from the '90s established formal results for gradient exploding/vanishing in deep/recurrent networks \cite{Bengio:1994do}, we believe it is crucial to establish similar theoretical tools for attention mechanisms, as these methods are under intense development where scalability and complexity are important issues.
In this paper, we contribute to this direction with a {\it formal analysis of gradient propagation in self-attentive systems which precisely quantify trade-offs between recurrence and attention}, offering valuable guarantees for attention mechanism development. Concretely exploiting these theorems, we propose a simple family of screening mechanisms to \textit{maximally reduce computational complexity and memory usage, while simultaneously maintaining good gradient propagation over large time scales}. 
Using simple tasks for their ease of interpretation, and their variety of computational demands, we illustrate the efficacy of this approach in numerical experiments.

The remainder of this paper is as follows. In Section \ref{fully_connected_graphs}, we give a brief outline of related cognitive processes and neural network mechanisms. In Section \ref{Theoretical analysis}, we present our central results: asymptotic guarantees for gradient propagation in self-attentive recurrent networks. To illustrate how to exploit these guarantees, in Section \ref{heuristics}, we showcase a simple \textit{relevancy screening mechanism} that aims to efficiently consolidate relevant memory, reducing the size of the computational graph from quadratic to linear in sequence length. Finally, in Section \ref{Experiments}, we compare various recurrent and attention models with our proposed relevancy screening mechanism on a series of simple numerical experiments, while, in Section \ref{analysis_section}, we analyze their gradient propagation properties together with their GPU usage.

\section{Background} 
\label{fully_connected_graphs}
To perform complex tasks, our brains rely on mechanisms to encode and retrieve information to and from memory \citep{zacks2007event, radvansky2017event}. 
%
In contrast, standard RNNs follow rigid sequential dynamics as they are parametric i.e with a fixed-size state vector. Self-attention methods can overcome this limitation by giving access to previous past states for computing the next state. For the sake of the discussion, we call such RNNs, which are augmented by the memory of the past states as \emph{semi-parametric} RNNs. The use of soft-attention \citep{attention} in such models has improved performance on many tasks such as reading comprehension, abstractive summarization,
textual entailment and learning task-independent sentence representations \citep{parikh2016decomposable, lin2017structured, paulus2017deep, yang2019xlnet} as well as  in the self-supervised training of extremely large language models \citep{devlin2018bert, radford2019language}
due to their ability to handle long-term dependencies. 

Intriguingly, the most notable advances in the use of attention is in purely attention-based systems such as the Transformer~\cite{transformer}, which completely foregoes recurrence and inspired some of the work listed above. While the performance of these systems is impressive, their memory and computation requirements grows quadratically with the total sequence length. 
%
To address this issue, many variants that aim to "sparsify" the attention matrix have been proposed.
Notably, \citet{ke2018sparse} developed the Sparse Attentive Backtracking model (SAB), a self-attentive Long Short-Term Memory network (LSTM)~\cite{Schmidhuber:1997fq} that leverages sparsity by selecting only the top-$k$ states in memory based on an attention score, propagating gradients only to those chosen hidden states. Recently, \citet{zhao2019explicit} propose to use a similar top-$k$ attention, and \citet{child2019generating} introduce sparse masks which attends to roughly $\sqrt{n}$ locations in memory, implementing explicit selection methods for Transformers. Reformer models \citep{kitaev2020reformer} replace the dot-product attention
by locality-sensitive hashing, changing its complexity from $O(T^2)$ to $O(T)$, where $T$ is the sequence length. Finally, TransformerXL \citep{dai2019transformer} enables learning dependencies beyond a fixed length without disrupting temporal coherence and has resulted in state of the art performance in language models.


Still, most of these approaches naively sub-sample input streams for memory storage. Our brains on the other hand, seem to select relevant information from the recent past to commit to long term memory based on their relevancy, a process often referred to as memory consolidation~\cite{memory_reconsolidation}. Attempts at mimicking this sparse temporal selectivity process has shown great promise in a variety of contexts~\citep{NTM,Munkhdalai:2019vt,Harutyunyan:2019ws, goyal2019recurrent}, and our work aims to formalize this idea for self-attentive recurrent networks.

\section{Theoretical analysis of gradient propagation}
\label{Theoretical analysis}

In this section, we analyze the influence of self-attention onto gradient propagation in recurrent networks with self-attention. In order to do so let us first recall the equations of a recurrent neural network with self-attention. We note that even though we are using "vanilla RNNs" in the formulations of our results, any recurrent network can take its place (see Section \ref{Experiments} where we use LSTMs in the experiments). Let $x_t \in \mathbb{R}^m$ be the input and $h_t \in \mathbb{R}^n$ be the hidden state at time step $t$, satisfying the update equation for all $t\geq 1$, 
\begin{align}
    h_{t+1} &= \phi(V s_{t}+ U x_{t+1} + b)\\
    s_t &= f(h_t,c_t)
\end{align}
where $\phi$ is a non-linearity, $f:\mathbb{R}^n\times\mathbb{R}^n \rightarrow \mathbb{R}^n $, $V\in \mathbb{R}^{n \times n}$, $U \in \mathbb{R}^{n \times m}$, $b\in \mathbb{R}^n$ and 
    $c_t = \alpha_{1,t} h_1 + \alpha_{2,t} h_2 + \ldots + \alpha_{t,t} h_t$
with $\alpha_{i,t} := \frac{\exp{(e_{i,t})}}{\sum_{j=1}^t \exp{(e_{j,t})}}$ and $e_{i,t} := a(s_{t-1},h_i)$, where 
$a: \mathbb{R}^{n}\times \mathbb{R}^n \rightarrow \mathbb{R}^n$ is the attention \textit{alignment function}. Throughout, we assume training is done via gradient descent of a cost function $L$ using back-propagation.

Oftentimes, one uses $s_t = f(h_t,c_t) = h_t + c_t$ (but concatenation would be more general), and for all $t>1$ and $1\leq j\leq t$, $a(s_{t-1},h_j) = v_a^T \cdot \tanh{(W_a \cdot s_{t-1}+ U_a \cdot h_j)}$, where $v_a \in \mathbb{R}^{n}$, and $W_a,U_a \in \mathbb{R}^{n \times n}$. The latter choice for alignment function is sometimes referred to as "additive self-attention" and was used in the original paper \cite{attention}. We emphasize that the results presented in this section hold independently of the choice of the alignment function as, we will discuss later in this section. Lastly, while results presented below are relatively succinct, their derivations are involved and we refer the interested reader to the Appendix for detailed proofs.


\subsection{Preliminaries}

Our goal in this section is to establish formal propagation rules for a system where multiple paths of signal propagation are possible. We would like to understand the relationship between skip connections (those coming from self-attention) and recurrent connections, as well as how the interplay between the two leads to good gradient propagation. In order to achieve this, we seek to analyze the asymptotic behaviour of $\|\nabla_{h_t} L\| = \|\left(\frac{d s_T}{ d h_t}\right)^T \nabla_{s_T} L\|$, as $T \rightarrow \infty$. We accomplish this by decomposing $\nabla_{h_t} L$ with respect to all possible gradient backpropagation paths, or in other words, by decomposing $\frac{d s_T}{ d h_t}$ into sums of products of Jacobian matrices corresponding to those gradient paths, using Proposition \ref{main_prop}.

\begin{proposition} \label{gradient_prop}
For all $t\geq 1$, $k\geq j\geq0$, $k'\geq 0$, let $E_{k'}^{(t)} = \frac{\partial s_{t+k'}}{\partial h_{t}}$, and $F_{k+1,j}^{(t)} = \frac{\partial s_{t+k+1}}{ \partial h_{t+j+1}} \cdot J_{t+j}+ 1_{j=k} \cdot \frac{\partial s_{t+k+1}}{\partial s_{t+k}}$, with $J_{t+j}$ the Jacobian matrix $\frac{d h_{t+j+1}}{ d s_{t+j}}$. Then, we have
\begin{align}
    \frac{d s_{t+k}}{d h_t} &= \sum_{s=0}^k \bar{\xi}_{0:k}^{(t)}(s)
\end{align}
where for all $s\geq 1$, 
$\bar{\xi}_{0:k}^{(t)}(s) = \sum_{0\leq i_1<\ldots<i_s<k}F_{k,i_s}^{(t)}\cdot F_{i_s,i_{s-1}}^{(t)}\cdot \ldots \cdot F_{i_2,i_1}^{(t)}\cdot E_{i_1}^{(t)}$
and where $\bar{\xi}_{0:k}^{(t)}(0) = E_{k}^{(t)}$. 
(Proof in Appendix \ref{appendix_prelim}, Proposition \ref{main_prop})
\end{proposition}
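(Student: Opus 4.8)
The plan is to expand $\frac{d s_{t+k}}{d h_t}$ recursively, peeling off one "layer" of the recursion at a time and tracking exactly which intermediate hidden states the gradient flows through. The natural device is induction on $k$. First I would set up the base case: for $k=0$ we simply have $\frac{d s_t}{d h_t} = E_0^{(t)} = \bar{\xi}_{0:0}^{(t)}(0)$, which matches the claimed formula since the only term in the sum is $s=0$.

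For the inductive step, the key observation is a one-step decomposition of $\frac{d s_{t+k+1}}{d h_t}$. The state $s_{t+k+1}$ depends on $h_t$ both directly (through the attention coefficient $c_{t+k+1}$, which attends to all past hidden states including $h_t$) and indirectly (through $s_{t+k}$, hence $h_{t+k+1}$, and more generally through every intermediate $h_{t+j+1}$ for $0 \le j \le k$). Writing the total derivative by summing over the "last" intermediate state $h_{t+j+1}$ that the path visits before reaching $s_{t+k+1}$, together with the direct dependence, yields
\begin{align}
\frac{d s_{t+k+1}}{d h_t} = \frac{\partial s_{t+k+1}}{\partial h_t} + \sum_{j=0}^{k} \left( \frac{\partial s_{t+k+1}}{\partial h_{t+j+1}} \cdot J_{t+j} + 1_{j=k}\cdot \frac{\partial s_{t+k+1}}{\partial s_{t+k}} \right) \cdot \frac{d s_{t+j}}{d h_t}.
\end{align}
The first term is $E_{k+1}^{(t)}$, and the bracketed factor is exactly $F_{k+1,j}^{(t)}$ by definition. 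So the recursion reads $\frac{d s_{t+k+1}}{d h_t} = E_{k+1}^{(t)} + \sum_{j=0}^k F_{k+1,j}^{(t)} \cdot \frac{d s_{t+j}}{d h_t}$. Substituting the inductive hypothesis for each $\frac{d s_{t+j}}{d h_t} = \sum_{s=0}^{j}\bar{\xi}_{0:j}^{(t)}(s)$ and regrouping terms by the number $s$ of $F$-factors, one checks that a path with $s+1$ factors of $F$ in the expansion of $\frac{d s_{t+k+1}}{d h_t}$ corresponds to a choice $0 \le i_1 < \dots < i_s < i_{s+1} \le k$ (with $i_{s+1} = j$ the outermost index), which is precisely the index set defining $\bar{\xi}_{0:k+1}^{(t)}(s+1)$; the $s=0$ term is $E_{k+1}^{(t)} = \bar{\xi}_{0:k+1}^{(t)}(0)$. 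This closes the induction.

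The main obstacle I anticipate is purely bookkeeping: making the one-step decomposition rigorous requires being careful that $s_{t+k+1} = f(h_{t+k+1}, c_{t+k+1})$ has the claimed dependency structure — in particular isolating the role of $\partial s_{t+k+1}/\partial s_{t+k}$, which only appears when $j=k$ because $s_{t+k}$ enters the update of $h_{t+k+1}$ (via $V s_{t+k}$) but the alignment scores $e_{i,t+k+1}$ also depend on $s_{t+k}$ directly — and then verifying that the combinatorial re-indexing after substituting the inductive hypothesis exactly reproduces the strictly-increasing index sets $0\le i_1 < \dots < i_s < k$. This is the "involved derivation" the authors defer to Appendix \ref{appendix_prelim}; the conceptual content is just the chain rule organized by gradient path, but the shift of indices between $F_{k,i_s}^{(t)}$, $F_{i_s,i_{s-1}}^{(t)}$, etc., and the convention $\bar\xi_{0:k}^{(t)}(0) = E_k^{(t)}$ must be matched term-by-term.
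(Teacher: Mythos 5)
Your proposal is correct and follows essentially the same route as the paper: the one-step chain-rule decomposition you write down is exactly the paper's Lemmas 1--2 (the recursion $C_{k+1}^{(t)} = E_{k+1}^{(t)} + \sum_{j=0}^{k} F_{k+1,j}^{(t)} C_{j}^{(t)}$), and your induction on $k$ with regrouping by the number of $F$-factors reproduces the content of its Lemmas 3--4. The only difference is organizational --- the paper first solves the recursion in the more general form $C_k = \xi_{0:k}C_0 + \sum_{r}\xi_{r:k}E_r$ before specializing to $C_0 = E_0$ and re-indexing by path depth, whereas you induct directly on the target formula --- but the underlying argument is identical.
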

Here, each term $F_{k,i_s}^{(t)}\cdot F_{i_s,i_{s-1}}^{(t)}\cdot \ldots \cdot F_{i_2,i_1}^{(t)}\cdot E_{i_1}^{(t)}$ corresponds to exactly one gradient path involving exactly $s+1$ skip connections going from $t$ to $t+k$, via the $s$ hidden states $h_{t+i_{s}+1}, \ldots, h_{t+i_{1}+1}$. In particular, $\bar{\xi}_{0:k}^{(t)}(s)$ is the sum of all terms containing exactly $s$ Jacobian matrices $J$, and thus the larger $s$ is, the more $\bar{\xi}_{0:k}^{(t)}(s)$ is prone to vanishing.  

{\bf Intuition:} In order to find paths that are not vanishing as $T \rightarrow \infty$, we want to find gradient paths with:
    {\bf (i)} a bounded path length $s$ so that the number of Jacobian matrices involved in the product is limited.
    %
   {\bf (ii)} attention scores that are sufficiently bounded away from $0$, so that the resulting product of attention scores is sufficiently bounded away from $0$ as well.
%
In order to see how exactly the attention weights come into play via matrices $E$ and $F$, we refer to Proposition \ref{prop_D} from Appendix \ref{appendix_prelim}. 

{\bf Defintions:}  
Let us fix an integer $t\geq 1$, an integer $s \in \{1,2,\ldots,T-t\}$, and an ordered set of indices $i_1,i_2,\ldots, i_s \in \{0,1,\ldots,T-t-1\}$, verifying $i_1 \leq i_2 \leq \ldots \leq i_s$. 
\begin{itemize}
    \item For sequences $\{g(T)\}_{T\geq 1}$ and $\{f(T)\}_{T\geq 1}$, we say that \ul{$f(T)=\Omega(g(T))$} if there exists positive constants $c$ and $T_0$ such that $f(T)\geq c\cdot g(T)$ for all $T\geq T_0$.
    \item At time $t$, we call a past hidden state $h_i$ a \ul{\it relevant event} if the attention weight $\alpha_{i,t}$ is sufficiently bounded away from zero. 
    \item We call the $s$-tuple $(i_1,i_2,\ldots,i_s)$  a \ul{\it dependency chain $\gamma$ of depth $s$}, as it induces a gradient backpropagation path going via the $s$ hidden states $h_{t+i_{s}+1}, \ldots, h_{t+i_{1}+1}$.
    \item We call \ul{\it dependency depth} the \textit{smallest} depth among all dependency chains where the product of the corresponding attention scores is $\Omega(1)$ as $T\to\infty$.
\end{itemize}
%
%


The central message  is that \textit{if the dependency depth is bounded from above and sufficiently small, then we mitigate gradient vanishing}. As we see below, task structure introduces different ways in which this may happen. We now present a formal treatment for specific cases, and lay the groundwork to take advantage of this structure during learning.

\subsection{Uniform relevance case}
Suppose each state has equal relevance in some task. What can be said about gradient propagation? This translates to having each attention weight $\alpha_{i,t} = 1/t$ for all $t \geq i\geq 1$. We then have dependency chains of depth $1$ but with vanishing rate $\Omega(1/T)$, as formalized in the following theorem (cf. \ref{uniform_attention_case})

\begin{theorem}\label{unif_thm}
Let $h_t$ be the hidden state at time $t$ of a vanilla RNN with uniform attention, under mild assumptions on the connectivity matrix $V$, and trained with respect to a loss $L$, then if $T$ is the total sequence length, we have 
\begin{align}\|\nabla_{h_t}L\| = \Omega(1/T)\end{align}
as $T\rightarrow \infty$. (proof in Appendix \ref{uniform_attention_case}, Theorem \ref{main_theorem_unif})
\end{theorem}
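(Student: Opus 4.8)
The plan is to lower-bound $\|\nabla_{h_t}L\|=\|(\tfrac{d s_T}{d h_t})^{T}\nabla_{s_T}L\|$ by exhibiting, inside the gradient, a contribution of order $1/T$ that does not decay exponentially and cannot be cancelled by the remaining paths. The first step is to specialize the path decomposition of Proposition \ref{gradient_prop} to uniform attention $\alpha_{i,k}=1/k$. In that case every skip Jacobian $E_{k'}^{(t)}$ and $F_{k+1,j}^{(t)}$ picks up an explicit factor $1/(t+\cdot)$ from the attention weights; in particular the depth-$0$ term carries no recurrent Jacobian at all and equals $\bar\xi_{0:T-t}^{(t)}(0)=E_{T-t}^{(t)}=\partial s_T/\partial h_t=\alpha_{t,T}\,I=\tfrac1T I$, while the shallow depth-$1$ chain routing through $h_{t+1}$ contributes $\tfrac1T(1+\tfrac1t)\,J_t$, again of order $1/T$ for fixed $t$ since $\alpha_{t,t}=\tfrac1t=\Theta(1)$ and $\alpha_{t+1,T}=\tfrac1T$. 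Equivalently, unrolling the chain rule through $s_k=h_k+c_k$ with $c_k=\tfrac1k\sum_{j\le k}h_j$ gives the exact identity $\tfrac{d s_T}{d h_t}=\tfrac{d h_T}{d h_t}+\tfrac1T\big(I+\sum_{j=t+1}^{T}\tfrac{d h_j}{d h_t}\big)$, which already displays an $\tfrac1T I$ summand that no choice of $V$ can suppress.

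The second step is to show this summand survives in the gradient. Under the mild assumptions on $V$, I would diagonalize the recurrent dynamics, writing $V=P\Lambda P^{-1}$ (and, if $\phi$ is nonlinear, bounding the per-step factor $\mathrm{diag}(\phi'(\cdot))V$), so that the matrix identity above decomposes eigenvalue by eigenvalue $\mu$ of $V$ into scalar recursions for $A_k^{(\mu)}$, the $\mu$-component of $\tfrac{d h_k}{d h_t}$, and $B_k^{(\mu)}$, that of $I+\sum_{j=t+1}^{k}\tfrac{d h_j}{d h_t}$, namely $A_k^{(\mu)}=\mu A_{k-1}^{(\mu)}+\tfrac{\mu}{k-1}B_{k-1}^{(\mu)}$ and $B_k^{(\mu)}=B_{k-1}^{(\mu)}+A_k^{(\mu)}$, started at $A_t^{(\mu)}=B_t^{(\mu)}=1$. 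In the simplest instance of the assumptions — say $V$ symmetric with spectral radius below $1$ and $\phi$ the identity — all coefficients and initial data are non-negative, so $A_k^{(\mu)}>0$ and $B_k^{(\mu)}\ge1$, hence $A_T^{(\mu)}+\tfrac1T B_T^{(\mu)}\ge\tfrac1T$ for every eigenvalue $\mu$, whence $\sigma_{\min}\!\big(\tfrac{d s_T}{d h_t}\big)=\Omega(1/T)$ and $\|\nabla_{h_t}L\|\ge\sigma_{\min}\!\big(\tfrac{d s_T}{d h_t}\big)\,\|\nabla_{s_T}L\|=\Omega(1/T)$ whenever the output gradient is itself $\Omega(1)$.

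The hard part is extending this to the full "mild assumptions" regime. For non-symmetric or complex-spectrum $V$, or nonlinear $\phi$ (so that the per-step factor varies with $t$ and is not diagonalized by a fixed basis), one can no longer argue by positivity and must instead rule out an accidental cancellation $A_T^{(\mu)}+\tfrac1T B_T^{(\mu)}=o(1/T)$, uniformly over $\mu$. The difficulty is that the attention-routed terms make $\sum_{j>t}\tfrac{d h_j}{d h_t}$ grow with $T$ (its norm is not bounded by a constant, so a naive operator-norm estimate fails), which forces a careful sign/phase analysis of this non-autonomous recursion with $1/k$-decaying coefficients — precisely the cases near the unit circle that the assumptions on $V$ are there to exclude. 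This is where the detailed estimates of Appendix \ref{uniform_attention_case} are needed, whereas the remaining bookkeeping — assembling the path sum from Proposition \ref{gradient_prop} and recombining the eigendirections — is routine.
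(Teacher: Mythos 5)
Your decomposition is faithful to the paper's: you correctly isolate the direct attention path $E_{T-t}^{(t)}=\alpha_{t,T}\,I=\tfrac1T I$ as the source of the $\Omega(1/T)$ bound, and your identity $\tfrac{d s_T}{d h_t}=\tfrac{d h_T}{d h_t}+\tfrac1T\bigl(I+\sum_{j>t}\tfrac{d h_j}{d h_t}\bigr)$ is a compact repackaging of Proposition~\ref{gradient_prop} under uniform attention, together with the paper's simplifications $X_{j,t}\approx 0$ and $\sum_i\alpha_{i,t}Y_{i,t}\approx 0$ (which you adopt implicitly by treating the weights $1/k$ as constants). From there the two arguments diverge. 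The paper writes $\tfrac{d s_T}{d h_t}=\sum_s V^s\chi_{0:T-t}(s)$ and spends Appendix~\ref{uniform_attention_case} on a combinatorial estimate of the scalar coefficients ($\omega(s)\sim(\ln T)^s/s!$, then $\theta$, then $\chi$), which it resums per eigenvalue into $(\Lambda_T)_{ii}\sim cT^{-1}+c'T^{\lambda_i-1}$ with $c,c'>0$ (Proposition~\ref{magical_proposition}); non-cancellation is established by explicit asymptotics, for any normal $V$ with spectral radius below $1$. Your two-term recursion plus positivity is genuinely more elementary and, where it applies, stronger: it yields $\sigma_{\min}\bigl(\tfrac{d s_T}{d h_t}\bigr)=\Omega(1/T)$, and you make explicit the passage from the Jacobian to $\|\nabla_{h_t}L\|$ via $\|\nabla_{s_T}L\|=\Omega(1)$, a step the paper leaves implicit (Theorem~\ref{main_theorem_unif} only lower-bounds the Frobenius norm of $\tfrac{d s_T}{d h_t}$).

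The gap is that your non-cancellation argument covers only part of the hypothesis, and even your ``simplest instance'' is stated too broadly. The coefficients of the recursion $A_k^{(\mu)}=\mu A_{k-1}^{(\mu)}+\tfrac{\mu}{k-1}B_{k-1}^{(\mu)}$ are non-negative only when $\mu\geq 0$; a symmetric $V$ with spectral radius below $1$ can have negative eigenvalues, for which $A_k^{(\mu)}$ and $B_k^{(\mu)}$ oscillate in sign and the inequality $A_T^{(\mu)}+\tfrac1T B_T^{(\mu)}\geq\tfrac1T$ ``for every eigenvalue'' does not follow, so the claimed $\sigma_{\min}$ bound fails already there. More importantly, the regime the paper actually treats --- normal $V$ with complex eigenvalues of modulus below $1$ --- is exactly where you defer to the appendix; ruling out cancellation in that regime is the entire analytic content of the theorem, and it is what the $\omega/\theta/\chi$ machinery is for. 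As written, your proposal proves the statement for $V$ normal with non-negative real spectrum and correctly identifies, but does not close, the remaining case.
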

This corresponds to the case where all past events contribute equally error signals. We also note that this result is independent of the choice of the alignment function $a$ (cf. Remark \ref{unif_assumptions} in the Appendix \ref{uniform_attention_case}).

{\bf Intuition:} As a "worst case scenario" Theorem \ref{unif_thm} reveals the true trade-off of early self-attentive recurrent networks~\cite{attention}. On one hand, the lower bound obtained on gradient norm is substantially better than in a vanilla RNN without attention, where vanishing happens at an exponential rate, as opposed to a polynomial one here. This situation does not lend itself to sparse memory approaches as 
all events need to be held in memory, thus conserving quadratically scaling complexity.
In contrast many inputs and tasks do not call for uniform attention and naturally lend themselves to sparse dependency paths for computation. The next case treats this situation.
Nevertheless, this uniform attention bound is applicable in practice for two reasons: (1) typically, attention weights are initialized uniformly, and early training may result in gradients best described by this regime. (2) We experimentally verified that gradient propagation remains stable throughout training for a fully self-attentive RNN, where this bound is relevant, see Fig \ref{fig:grad_norm_plots} (Section \ref{analysis_section}).
%

\subsection{Sparse relevance case with bounded dependency depth}

Now let us look at a more realistic case where only a sparse subset of past states are relevant for the task at hand, and the gradient needs to access those states efficiently for good learning.
Figure~\ref{fig:heatmaps} illustrates this scenario by showing the attention scores for two input examples computed by a simple self-attentive model~\cite{attention}, trained on Copy and Denoise tasks respectively (see Section~\ref{Experiments}).
This structure introduces the possibility to impose sparsity in the computational graph, and to limit memory use. With these constraints in mind, the goal is to engineer dependency chains that enable best gradient propagation between these relevant events.
\begin{figure*}[t!]
    \centering
   \begin{subfigure}
   \centering
    \includegraphics[width=2.5in,  height=1.7in]{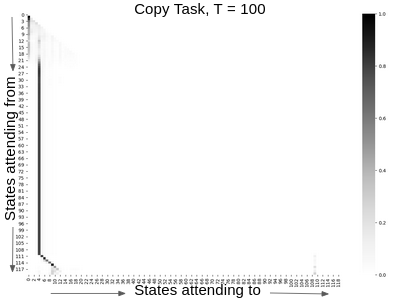}
    \end{subfigure}
    \begin{subfigure}
    \centering
    \includegraphics[width=2.5in,  height=1.7in]{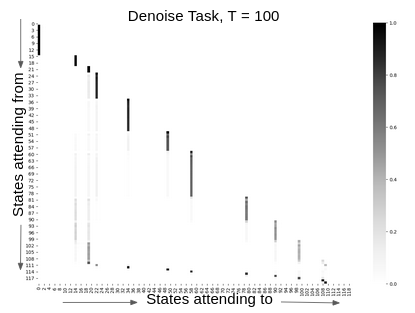}
    \end{subfigure}
    \caption{\textit{ Magnitude of attention weights between states in a trained, fully recurrent and fully attentive model} (\citet{attention}). Each pixel in the lower triangle corresponds to the attention weight of the skip connection departing from the the state marked on the $y$-axis to the state marked on the $x$-axis. 
    Left shows Copy task, right shows Denoise task. Task details in Section~\ref{Experiments}}\label{fig:heatmaps}
\end{figure*}

{\bf Notation}: We consider a $\kappa$-sparse attention mechanism of dependency depth $d$.
\begin{itemize}
    \item \ul{\it Sparsity coefficient}: $\kappa \geq 1$. Borrowing from the SAB model~\cite{ke2018sparse}, at each time step, attention is allowed at most $\kappa$ relevant events from the past. That is, for any $t$ there are at most $\kappa$ indices $i$ such that $\alpha_{i,t} \neq 0$, which gives rise to a sparse temporal segmentation via the most relevant events.
    
    \item \ul{\it Maximal dependency depth}: $d$. This is the maximal dependency depth across all time steps $t$.
\end{itemize}

\begin{theorem}\label{sparse_thm}
Let $h_t$ be the hidden state at time $t$ of a vanilla RNN with $\kappa$-sparse uniform attention mechanism of maximal dependency depth $d$, and under mild assumptions on the connectivity matrix $V$, then 
\begin{align}\|\nabla_{h_t}L\| = \Omega(1/\kappa^d)\end{align}
as $T\rightarrow \infty$. (proof in Appendix \ref{sparse_appendix}, Theorem \ref{main_theorem_sparse})
\end{theorem}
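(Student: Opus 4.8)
My plan is to reuse the path decomposition of Proposition~\ref{gradient_prop}: writing $\nabla_{h_t}L=\bigl(\tfrac{d s_T}{d h_t}\bigr)^{\!T}\nabla_{s_T}L$ with $\tfrac{d s_T}{d h_t}=\sum_{s=0}^{T-t}\bar\xi_{0:T-t}^{(t)}(s)$, I will lower-bound the norm by isolating the contribution of a single, well-chosen dependency chain of depth at most $d$, and then show that the remaining (exponentially many) paths cannot cancel it. This is the $\kappa$-sparse counterpart of the uniform computation behind Theorem~\ref{unif_thm}, with the decaying factor $1/T$ replaced by the fixed constant $1/\kappa^{d}$.

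First, the dominant chain. By hypothesis the maximal dependency depth is $d$, so for the fixed $t$ there is a dependency chain $\gamma^{\ast}$ of depth $r\le d$ whose product of attention weights is $\Omega(1)$ as $T\to\infty$; since attention is $\kappa$-sparse and uniform, each nonzero weight along $\gamma^{\ast}$ equals the reciprocal of the number of relevant events at that step and is therefore $\ge 1/\kappa$, so the product of weights along $\gamma^{\ast}$ is $\ge\kappa^{-r}\ge\kappa^{-d}$. I take $\gamma^{\ast}$ to originate at $h_t$ itself — i.e. the endpoint $E_{0}^{(t)}$ — which costs no attention factor because $\tfrac{\partial s_t}{\partial h_t}=I+\alpha_{t,t}I+(\text{grad.\ of the }\alpha\text{'s})$ is already $\Omega(1)$; such a chain of depth $r$ then carries exactly $r$ attention weights and $r$ recurrent Jacobians.

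Second, lower-bounding that chain's matrix product. Expanding the factors $F_{\cdot}^{(t)},E_{\cdot}^{(t)}$ along $\gamma^{\ast}$ via Proposition~\ref{prop_D}, the corresponding term factors, to leading order, as (i) $r\le d$ scalar attention weights $\ge 1/\kappa$ times identities, (ii) a product of $r$ recurrent Jacobians $J=\tfrac{dh}{ds}$, and (iii) correction terms built from derivatives of the alignment function $a$, which for the additive choice of $a$ are globally Lipschitz and hence subleading. Under the mild assumptions on $V$ (the same as in Theorem~\ref{unif_thm}: spectrum of $V$ and $\phi'$ bounded away from $0$ on the reachable state space), the norm of each $J$, and thus of the product of the $r$ of them, is bounded below by a constant independent of $T$ — crucially because $d$, hence $r$, is fixed while $T\to\infty$. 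Combining (i)--(iii), the $\gamma^{\ast}$-term has norm $\Omega(\kappa^{-d})$.

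Third — and this is the step I expect to be the main obstacle — I must show this term is not annihilated by the sum of all remaining paths. Here $\kappa$-sparsity does the heavy lifting: a path contributes nonzero only if every skip connection along it has nonzero attention weight, so of the $\binom{T-t}{s}$ a-priori depth-$s$ chains only those respecting the sparse temporal segmentation survive, and one can bound their number by a quantity growing at most polynomially in $T$ (controlled by $\kappa$ and $d$), while each surviving depth-$s$ chain has magnitude $\le\kappa^{-s}\|J\|^{s}(1+o(1))$. Aggregating the surviving paths of depth $>d$ and the non-dominant depth-$\le d$ paths, and then following Theorem~\ref{unif_thm}'s argument — either exploiting the positivity of all attention weights together with the sign structure enforced by the assumptions on $V$, so that leading contributions add rather than cancel, or choosing $\nabla_{s_T}L$ aligned with the $\gamma^{\ast}$-term (an $\Omega$ lower bound only needs a worst-case instance) — yields $\|\nabla_{h_t}L\|\ge c\,\kappa^{-d}-o(\kappa^{-d})=\Omega(1/\kappa^{d})$. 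Quantifying exactly how sparsity caps the number and magnitude of competing paths, and ruling out their cancellation, is where the bookkeeping is heaviest; the first two steps are essentially routine once the uniform-case machinery is in hand.
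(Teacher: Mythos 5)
Your skeleton matches the paper's proof: decompose $\tfrac{ds_T}{dh_t}$ via Proposition~\ref{gradient_prop}, isolate one dependency chain of depth $s'\le d$ whose attention-weight product is at least $\kappa^{-(d+1)}$, and argue the remaining paths cannot cancel it. But the step you flag as "the main obstacle" is exactly where your proposal stops short, and the two escape routes you offer are not how the paper closes it. The paper's resolution is more structural and entirely avoids your proposed bookkeeping: under the centering approximation $X_{i,t}\approx 0$ and $\sum_i\alpha_{i,t}Y_{i,t}\approx 0$ (Remark~\ref{sparse_assumptions}), every $E_{k'}^{(t)}$ becomes a nonnegative scalar times $I$ and every $F_{k+1,j}^{(t)}$ a nonnegative scalar times $V$ (using $J_t=V$, i.e.\ no hidden-to-hidden nonlinearity). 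Hence $\tfrac{ds_{t+k}}{dh_t}=\sum_s V^s\chi_{0:k}^{(t)}(s)$ is a polynomial in $V$ with nonnegative coefficients. Taking $V$ normal with at least one positive real eigenvalue $\lambda$ (these are the "mild assumptions"), the Frobenius norm is $\sqrt{\sum_i|p_{0:k}(\lambda_i)|^2}\ge|p_{0:k}(\lambda)|$, and since every monomial $\lambda^s\chi_{0:k}^{(t)}(s)$ is then nonnegative, a single surviving chain already gives $|p_{0:k}(\lambda)|\ge\lambda^{s'}\kappa^{-(d+1)}=\Omega(1/\kappa^d)$. No count of competing paths is needed -- positivity means extra paths can only help -- so your plan to bound their number polynomially in $T$ and their magnitudes individually is unnecessary work, and your alternative of aligning $\nabla_{s_T}L$ with the dominant term is not what the paper does (the bound is on the Jacobian norm itself).

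Two further points where your sketch would not survive being made rigorous as written. First, you lower-bound "the product of $r$ recurrent Jacobians" by the product of lower bounds on individual norms; for general matrices this is false (each factor can have large norm while the product is small or zero). The paper sidesteps this entirely by diagonalizing $V$ and working with the scalar $\lambda^{s}$ at a fixed positive eigenvalue. Second, your treatment of the alignment-function corrections as "globally Lipschitz and hence subleading" is replaced in the paper by the explicit assumption that the centered terms $h_i-\sum_j\alpha_{j,t}h_j$ vanish, which is what kills $X$ and $Y$ and makes the coefficients exactly nonnegative scalars; without that (or some substitute), the sign structure you need for the no-cancellation argument is not available.
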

Similarly to uniform case, Theorem \ref{sparse_thm} is independent of the choice of the alignment function $a$ (cf. remark \ref{sparse_assumptions} in the appendix).

{\bf Intuition:}
Notice the dependency depth $d$ affects the lower bound exponentially, while $\kappa$ affects it polynomially. In other words, the number of relevant events attended to at each time step contributes far less to gradient vanishing than the number of events in the longest dependency chain. Theorem~\ref{sparse_thm} outlines the tradeoff between computational complexity as $T\to\infty$ and gradient propagation when balancing attention and recurrence. 
Attending directly to many relevant past events reduces $d$ and ensures good gradients at the expense of the complexity cost associated with storing past events and computing attention scores (the strategy employed by Transformers~\cite{transformer}). On the other hand, enforcing small sparsity coefficient $\kappa$ helps keep computational complexity low ($O(\kappa T)$), but forces the error gradient through recurrent paths, thereby augmenting the dependency depth $d$ and degrading gradient signal strength. 
Importantly, $\kappa$ and $d$ co-vary in ways that depend on the task's underlying relevancy structure, a point that is explained in detail in Appendix \ref{tradeoff_results} (See Fig \ref{fig:trade_off}). In the extreme case where $\kappa$ and $d$ are assumed to be bounded, we have $\Omega(1/\kappa^d)= \Omega(1)$, and thus we mitigate gradient vanishing. In other situations where $\kappa$ and $d$ scale in other ways, an explicit sparsification strategy can be derived by exploiting Theorem~\ref{sparse_thm}, as we illustrate in the next section.
%

 \section{Relevancy screening mechanism}\label{heuristics}

Equipped with the results from the previous section, we wish to refine heuristics that strike a balance between good gradient propagation and computational/memory complexity. Building on the SAB model~\cite{ke2018sparse}, we remark that although sparse attention attends to the top-$\kappa$ events at any point in time, attention scores must be computed on all events stored in memory to extract the $\kappa$ best ones. Thus, the resource bottleneck is not controlled by $\kappa$, but rather by the number of stored events in memory. In SAB, there is a naive attempt to control this number by only recording network states at each 10 time steps. However, this reduces the size of the computational graph only by a constant factor, but retains $O(T^2)$ complexity. In contrast, Theorem~\ref{sparse_thm} tells us that the only important events to conserve for good gradient propagation are the {\it relevant} ones (also see Remark \ref{relevant_states_remark} in Appendix \ref{appendix_prelim}). Thus, we propose to reduce complexity while maintaining good gradient propagation by selectively storing events that are predicted to be relevant in the future, using a {\it relevancy screening mechanism}.

\begin{wrapfigure}{L}{0.46\textwidth}
    \begin{minipage}{0.46\textwidth}
      \begin{algorithm}[H]
        \caption{Relevancy Screening}\label{relevancy_algo}
            \begin{algorithmic}[1]
            \State \textbf{procedure:} RelRNN$(\mathbf{s}_{t-1}, \mathbf{x}_t)$ 
            \newline \textbf{Require:} Previous macro-state - $\mathbf{s}_{t-1}$
            \newline \textbf{Require:} Input - $\mathbf{x}_{t}$, $\nu>0$, $\rho > 0$
            \newline \textbf{Require:} Short-term buffer $s_{t-1}^{(i)} \in S_{t-1}$
            \newline \textbf{Require:} Relevant set $r_{t-1}^{(i)} \in R_{t-1}$
            \State $h_t \leftarrow \phi(V\mathbf{s}_{t-1} + U\mathbf{x}_t + b)$
            \State $S_t = S_{t-1}$.add($h_t$)
            \If{$t - \nu > 0$}
            \State $S_t = S_t$.remove($h_{t-\nu}$) 
            \EndIf
            \If{$t - \rho > 0$ \textbf{and} $C(t-\rho) = True$}
            \State $R_t = R_{t-1}$.replaceWith($h_{t-\rho}$)
            \EndIf
            \State $M_{t} = [S_{t}, R_{t}]$
            \For{\textbf{all} $m^{(i)} \in M_{t}$}
            \State $\tilde{z}^{(i)} \leftarrow v_a^T \cdot \tanh{(W_a \mathbf{s}_{t-1} + U_a m^{(i)})}$
            \EndFor
            \State $z \leftarrow \textrm{softmax}(\tilde{z})$
            \State $\mathbf{s}_t = h_t + \sum_i z^{(i)}m^{(i)}$
            \State \textbf{return} $\mathbf{s}_t$
            \end{algorithmic}
      \end{algorithm}
    \end{minipage}
\end{wrapfigure}

The idea is simple: devise a screening function $C(i)$ which estimates the future relevance of $h_i$, and store selected events in a {\it relevant set} $R_t = \{h_i | i<t \wedge  C(i)=True\}$ for future attention. In principle, one can explicitly control how $R_t$ grows with $t$, thus mitigating the complexity scaling outlined above.
Here, $C(i)$ could take many forms, the best of which depends on task structure. In what follows, we present an example screening mechanism meant to showcase the lessons learned from Theorem~\ref{sparse_thm}, but we refer the interested reader to Section \ref{Discussion} for further possibilities. 

We take inspiration from memory consolidation principles in human cognition~\cite{memory_reconsolidation}, which defines the transfer of events from short-term to long-term memory. We remark that for some tasks such as those depicted in Figure~\ref{fig:heatmaps}, relevance varies very little across time. 
To implement relevancy screening for such tasks, at every time step $t$ we attend to two subsets of the past hidden states. We call the first subset a \textit{short-term buffer} $S_t = \{h_{t-\nu}, h_{t-\nu+1}, .. , h_{t-1}\}$ which consists of the hidden states of the last $\nu$ time steps, while the second subset is the relevant set $R_t$. We compute the {\it relevance score} at time step $i$, $\beta(i) = \sum_{j=i}^{i+\nu-1} \alpha_{i,j}$, measuring the integrated attention scores over our short-term buffer $S_t$. More precisely, $C(i)$ is satisfied if $\beta(i)$ is part of the top $\rho$ relevance scores when compared to all previously observed hidden states, where $\rho$ is a fixed hyper-parameter satisfying $\rho \geq |R_t|$ for all $t$. The pseudo-code in Algorithm \ref{relevancy_algo} describes the screening mechanisms and the interaction between the short-term buffer $S_t$ and a finite size relevant set $R_t$.  '.replaceWith()' is a function replacing the hidden state with the lowest relevance score by the hidden state in the argument.

To see how the relevancy screening mechanism is grounded in the theory developed in Section \ref{Theoretical analysis}, note that the sets $S_t$ and $R_t$ give rise to a sparse attention mechanism with sparsity coefficient $\kappa$ satisfying $\kappa = \nu + \rho  \geq |S_t|+|R_t|$. Hence, memory complexity is constant while the $O(T^2)$ bottleneck of computational complexity is replaced by $O((\rho+\nu)\cdot T) = O(T)$. Lastly, applying Theorem~\ref{sparse_thm}, we get the following guarantee for all $t\geq 0$: $\|\nabla_{h_t}L\| = \Omega(1/(\rho+\nu)^d)$ as $T\rightarrow \infty$. Thus the choices of $\nu$ and $\rho$ not only directly impact computational complexity and gradient propagation, but also indirectly influence gradient propagation via the implicit effect of $\kappa = \nu + \rho$ on $d$ as already discussed in Section \ref{Theoretical analysis}. Finally, as already mentioned, see Fig \ref{fig:trade_off} in Appendix \ref{tradeoff_results}, where we perform an experimental trade-off analysis between $\kappa$ and $d$ by tweaking $\rho$ and $\nu$ in the relevancy screening mechanism.

\section{Experiments}
\label{Experiments}

Before describing experiments, we make a few remarks. 
First, we stress that Relevancy Screening can be applied to any semi-parametric attentive model but we refer to the version presented below, which uses an RNN/LSTM base, as  RelRNN/RelLSTM ("{\it Relevance RNN /LSTM}"). 
Second, our objective is not to find state-of-the-art performance but to highlight the advantages of event relevancy and selective sparsity.   
Finally, we note that relevancy-based sparsity does not necessarily improve performance over fully attentive models, but rather allows efficient and scalable usage. As we show below, RelRNN and RelLSTM perform almost identically to other self-attentive recurrent models (e.g.~\cite{attention,ke2018sparse}) on simple tasks, but use considerably less memory and compute complexity. In what follows, we denote MemRNN/MemLSTM for vanilla self-attention RNN/LSTM as defined in \cite{attention}.
%
We also refer to Appendices \ref{tradeoff_results}, \ref{minigrid}, \ref{extra_results} for additional experimental results and implementation details.





\subsection{Tasks with sparse dependency chains}

A good stereotypical task type that captures sparse sequences of important events are memorization tasks. Here, the network has to memorize a sequence of relevant characters presented among several non-relevant ones, store it for some given time delay and output them in the same order as they were read towards the end of the sequence. 

{\bf Copy task}~\citep{hochreiter1997long}:  The characters to be copied are presented in the first $10$ time steps, then must be outputted after a long delay of $T$ time steps (see full description in~\citet{uRNN}). Thus, all the \textit{relevant events} occur in the first $10$ time steps. This can be corroborated by the attention score found in Figure \ref{fig:heatmaps} which was generated using full self-attention. \citet{henaff2016recurrent} show that orthogonal RNNs (orth-RNN) provide an optimal solution. We also consider expRNN \citep{DBLP:journals/corr/abs-1901-08428} which does optimization in the unitary space and is so far the best purely performing recurrent model for large time delays for this task. 
%

\begin{table}
  \parbox{.50\linewidth}{
    \small
    \caption{Results for Transfer Copy task. }
    \begin{tabular}{cc c c c c}
    \toprule
    $T$ & $100$ & $200$ & $400$ & $2000$ & $5000$ \\
    \midrule
    orth-RNN & $99\%$ & $4\%$ & $16\%$ & $10\%$ & $0\%$ \\
    expRNN & $100\%$ & $86\%$ & $73\%$ & $58\%$ & $50\%$ \\
    MemRNN & $99\%$ & \bm{$99\%$} & \bm{$99\%$} & $92\%$ & OOM \\
    RelRNN & \bm{$100\%$} & \bm{$99\%$} & \bm{$99\%$} & \bm{$99\%$} & \bm{$99\%$} \\
    \midrule
    LSTM & $99\%$ & $64\%$ & $48\%$ & $19\%$ & $14\%$ \\
    h-detach & $100\%$ & $91\%$ & $77\%$ & $51\%$ & $42\%$ \\
    SAB & $99\%$ & $95\%$ & $95\%$ & $95\%$ & $95\%$ \\
    RelLSTM & \bm{$100\%$} & \bm{$99\%$} & \bm{$99\%$} & \bm{$99\%$} & \bm{$99\%$} \\
    \bottomrule
    \end{tabular}

    \label{tab:transfer_copy}
  }
 \hfill  
 \parbox{.50\linewidth}{
    \small
    \caption{Results for Denoise task.}
    \begin{tabular}{cccccc}
        \toprule
         $T$ & $100$ & $300$ & $500$ & $1000$ & $2000$ \\
         \midrule
         orth-RNN & $90\%$ & $71\%$ & $61\%$ & $29\%$ & $3\%$ \\
         expRNN & $34\%$ & $25\%$ & $20\%$ & $16\%$ & $11\%$ \\
         MemRNN & $99\%$ & \bm{$99\%$} & \bm{$99\%$} & \bm{$99\%$} & OOM \\
         RelRNN & \bm{$100\%$} & \bm{$99\%$} & \bm{$99\%$} & \bm{$99\%$} & \bm{$99\%$}\\
         \midrule
         LSTM & $82\%$ & $41\%$ & $33\%$ & $21\%$ & $15\%$ \\
         GORU & $92\%$ & $93\%$ & $91\%$ & $93\%$ & $73\%$ \\
         SAB & $99\%$ & \bm{$99\%$} & \bm{$99\%$} & \bm{$99\%$} & \bm{$99\%$} \\
         RelLSTM & \bm{$100\%$} & \bm{$99\%$} & \bm{$99\%$} & \bm{$99\%$} & \bm{$99\%$}\\
         \bottomrule
    \end{tabular}
    \label{tab:denoise_table}
  } 
  \vskip -0.2in
\end{table}

Table~\ref{tab:copy_table} (Appendix \ref{extra_results}) reports test performances of orth-RNN, expRNN, MemRNN, SAB, RelRNN and RelLSTM for $T = \{100, 200, 300, 500, 1000, 2000 \}$ on the Copy Task. We find that orth-RNN solves this task up to $T=500$, but that accuracy decays beyond that point, similarly to LSTM. RelRNN, RelLSTM, SAB and expRNN perfectly solve this task with \bm{$100\%$} accuracy for all $T$, while Fig \ref{fig:exp} in Appendix \ref{extra_results} shows that RelRNN learn copy and denoise tasks with significantly fewer number of updates as compared to other baselines. MemRNN solves this task until $T=100$ but overflows memory (OOM) afterwards.

{\bf Transfer Copy task}: An important advantage of sparse attentive recurrent models such as RelRNN is that of generalization. This is illustrated by the Transfer Copy scores~\citep{hochreiter1997long} where models are trained on Copy task for $T=100$ and evaluated for $T > 100$.
Table~\ref{tab:transfer_copy} shows results for the models listed above, in addition to h-detach~\citep{arpit2018h}, an LSTM-based model with improved gradient propagation.
Importantly, where purely recurrent networks performed well on the original task, all fail to transfer, with discrepancy growing with $T$. As expected, MemRNN and SAB keep good performance but RelRNN outperforms them, with almost perfect performance for all $T$. While both SAB and RelRNN use sparse memory storage and retrieval, the distinguishing factor is RelRNN's use of relevancy screening, indicating it's importance for transfer.  The performance of RelLSTM on Transfer Copy is exactly the same as RelRNN.

{\bf Denoise task}~\citet{jing2019gated}: This generalizes the Copy task as the symbols that need to be copied are now randomly distributed among the $T$ time steps, requiring the model to selectively pick the inputs that need to be copied.
We test our method against all the previously mentioned models in addition to GORU \cite{jing2019gated} for various values of $T$ (Table~\ref{tab:denoise_table}). RelLSTM performs exactly as RelRNN and again, we see RelRNN maintain complete performance across all $T$ values, outperforming all purely recurrent models. MemRNN performs as RelRNN/RelLSTM but fails to train due to memory overflow beyong $T=500$.

\subsection{Tasks with dense temporal dependencies}

In contrast to sparse information found in the tasks above, we now illustrate RelRNN and RelLSTM's performance on tasks with densely distributed information on long sequences. 

\begin{wraptable}{L}{0.45\textwidth}
    \small
    \caption{PTB and pMNIST results.}
    \begin{tabular}{c ccc}
        \toprule
         & \multicolumn{2}{c}{\textbf{PTB Task}} & \textbf{pMNIST} \\
         Model & BPC & Accuracy & Accuracy \\
         \midrule
         RNN & $1.56$ & $66\%$ & $90.4\%$ \\
         orth-RNN & $1.53$ & $66\%$ & $93.4\%$ \\
         expRNN & $1.49$ & $68\%$ & \bm{$96.6\%$}\\
         RelRNN & \bm{$1.43$} & \bm{$69\%$} & $92.8\%$ \\
         \midrule
         LSTM & \bm{$1.36$} & \bm{$73\%$} & $91.1\%$ \\
         h-detach & - & - & $92.3\%$ \\
         SAB & $1.37$ & - & $94.2\%$  \\
         RelLSTM & \bm{$1.36$} & \bm{$73\%$} & \bm{$94.3\%$}\\
         \bottomrule
    \end{tabular}
    \label{tab:ptb_res}
\end{wraptable}

Here, we perform tests on pMNIST  \cite{le2015simple}, a variant of MNIST~\cite{mnist} where pixels are fed sequentially in a permuted order to the network, as well as character level Penn Tree Bank corpus (PTB)~\citep{marcus1993building} where the next letter in a text needs to be predicted. 

\vspace*{-4 pt}
See Table~\ref{tab:ptb_res} for results. Implementation details and further test data found in Appendix \ref{extra_results}, including attention heatmaps such as the ones found in Figure~\ref{fig:heatmaps}, showing dense attention for RelRNN in both tasks.
We note that gated RNNs such as LSTMs are known to perform well here, and that orthogonal RNNs such as those tested here are also very good.
The full attention model (MemRNN) fails to train on the optimization setup used here for both tasks, again due to overflow in memory.

\section{Analysis} \label{analysis_section}
In this section we analyze the maximal GPU usage and gradient norm of $\|\nabla_{h_t}L\|$ across time $t$ for the Denoise Task. All the models were run using a NVIDIA TitanXP GPU and their peak usage was recorded in order to quantify the amount of computational resources used for each of them. We varied sequence length $T$ from $200$ to $2000$ in order to measure the trend in the usage. 
%
To measure propagating gradients as a function of $t$, we trained models on $T=1000$ and computed $\log\|\nabla_{h_t}L\|$.

As illustrated in Figure \ref{fig:grad_norm_plots} (center), we confirm MemRNN scales quadratically with $T$, same as SAB which shows an improvement but only by a constant factor. We also confirm that RelLSTM scales linearly with $T$ similar to RNN and LSTM.
Figure \ref{fig:grad_norm_plots} (left) shows that the gradient norms for RNN explode and for LSTM vanish as $t$ increases. The gradient norms of all attention models were stable, as expected from the results of Section \ref{Theoretical analysis}.
To better visualize the interplay between gradient norm and GPU usage, Figure \ref{fig:grad_norm_plots} (right) shows the final averaged log gradient norm against Max GPU usage for different times $T=\{400, 600, 800\}$. 
As expected, purely recurrent models (RNN, LSTM) show very little GPU usage differences across distinct $T$ values, while their performance and gradients degrade with increasing $t$. Note that the RNN's gradients explode while the LSTM's vanish, both exponentially in $t$. Standard self attentive models (MemRNN, SAB) on the other hand, show opposite trends, with stable gradients but GPU usage quadratically increasing in $T$. As expected from Theorem 2 (Section \ref{Theoretical analysis}), RelLSTM shows both stable gradients and stable GPU usage\footnote{The measurements for both GPU usage and gradient norm are identical for both RelLSTM and RelRNN.}. 

The optimal trade-off between memory usage and good gradient propagation achieved by RelLSTM highlights the importance of a dynamic memory that attempts to predict relevancy in order to only store exactly those events that help with learning. We note the Denoise task has a small number of relevant events and that not all tasks share this structure. Nevertheless, this experiment highlights how important resource gains can be made by shifting efforts from offsetting memory growth by a constant factor, to a relevancy screening method.

\begin{figure*}[t!]
    \centering
    \begin{subfigure}
   \centering
    \includegraphics[scale=0.2]{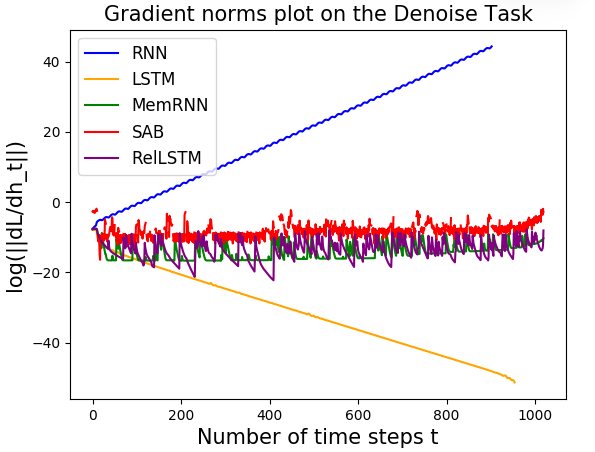} 
    \end{subfigure}
    \begin{subfigure}
    \centering
    \includegraphics[scale=0.2]{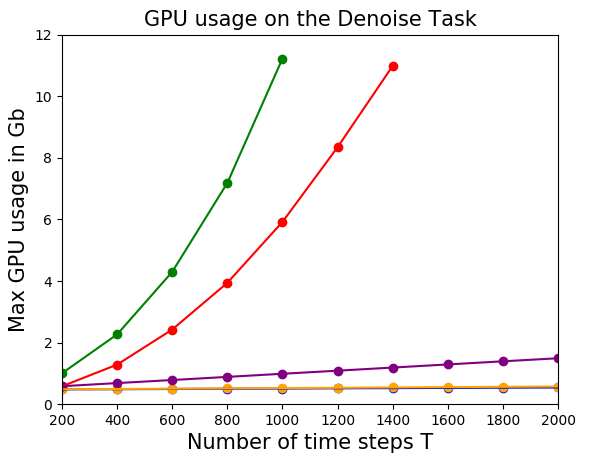}
    \end{subfigure}
    \begin{subfigure}
    \centering
    \includegraphics[scale=0.25]{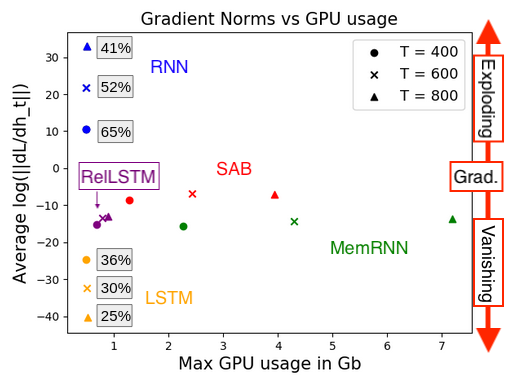}
    \end{subfigure}
    
    \caption{\textbf{(Left)} gradient norm plots of $\|\nabla_{h_t}L\|$ in log scale after training for Denoise Task with $t$ ranging from 0 (latest time step) to 1000 (furthest time step).\textbf{(Center)} Maximal GPU usage as a function of total sequence length $T$.\textbf{(Right)} Mean log gradient norm v.s. Max GPU usage for $T=400,600,800$. Model testing accuracy is $100\%$ unless indicated by marker label (see Table~\ref{tab:denoise_table}).}
    \label{fig:grad_norm_plots}
\end{figure*}




\section{Conclusion \& Discussion} 
\label{Discussion}


Our main contribution is a formal analysis of gradient propagation in self-attention RNNs, from which we derive two quantities that are governing gradient propagation: sparsity and dependency depth. Meanwhile we identify event relevancy as a key concept to efficiently scale attentive systems to very long sequential computations. This is illustrated via a {\it Relevancy Screening Mechanism}, inspired by the cognitive process of memory consolidation in the brain, that efficiently selects network states, called relevant events, to be committed to long-term memory based on a screening heuristic operating on a fixed-size short-term memory buffer. We showcase the benefits of this mechanism in an attentive RNN and LSTM which we call RelRNN and RelLSTM respectively, using simple but illustrative numerical experiments, and demonstrate the optimal trade-off between memory usage and good gradient propagation it achieves. 

As outlined in Sections \ref{Theoretical analysis} and \ref{heuristics}, this trade-off is a reflection of the task-specific balance between sparsity and dependency depth parameters. 
%
While our proposed relevancy screening mechanism exploits "local" attention scores (measured while events are in short-term memory buffer), we acknowledge other types of relevancy could be formulated with heuristics better suited to distinct environments. For instance, promising directions include leveraging predictive coding principles to select "surprising events", or neural networks could be used to learn the screening function $C(i)$ in an end-to-end fashion.

\section*{Broader Impact}

We provide a framework for researchers to shape gradient propagation and memory footprint in self-attentive RNNs, which is helpful in tasks requiring ongoing online predictions that cannot be based on future inputs (i.e. in an online sequential setting) and where long-term credit assignment is crucial, such as various RL tasks \cite{Harutyunyan:2019ws, TVT}. The added resource gains can save GPU hours and thus have a positive environmental impact. Along this line, we firmly believe that researchers should take environmental impact of model training seriously, and we are hopeful that our work contributes to this direction.

Meanwhile, the theoretical tools provided in the proofs lay the ground for more theoretical work on attentive systems to emerge in the future. 
More effective RNN models can amplify already existing biases in RNN-based NLP systems through an increased exposure to bias. Finally, we cannot exclude that the cognitive inductive bias we use to build our relevancy screening mechanism may induce prediction quality disparity (e.g. in language modelling) because of the memory tokens it throws away. 

\begin{ack}
We would like to thank Gauthier Gidel, Naz Sepah, Yassine Yaakoubi, Sarthak Mittal, and Chen Sun for useful discussions. BK acknowledges IVADO Masters Excellence Scholarship.
YB acknowledges support from CIFAR, Microsoft and NSERC. 
GL is funded by an NSERC Discovery Grant (RGPIN-2018-04821), an FRQNT Young Investigator Startup Program (2019-NC-253251), and an FRQS Research Scholar Award, Junior 1 (LAJGU0401-253188).
\end{ack}

\bibliography{GC_bib.bib}
\bibliographystyle{plainnat}
\appendix 

\newpage
\begin{center}
{\bf Supplemental Material for:}\\
Untangling tradeoffs between recurrence and self-attention in artificial neural networks
\end{center}

\setcounter{theorem}{0}
\setcounter{lemma}{0}
\setcounter{proposition}{0}
\setcounter{corollary}{0}

\section{Theoretical analysis of gradient propagation}
\label{math}
\subsection{Notational convention}
In this paper, we use the notation $\frac{df}{dx}$ to denote the total derivative of $f$ with respect to $x$, and $\frac{\partial f}{\partial x}$ to denote the partial derivative of $f$ with respect to $x$. \\

If we assume $f:\mathbb{R}^n \rightarrow \mathbb{R}^m$, and $x\in \mathbb{R}^n$, then 
$\frac{df}{dx}$ denotes the Jacobian matrix $J_f$ such that \begin{align}(J_f)_{ij}= \frac{df_i}{dx_j}\end{align}
In particular, with this notation, we have that if a function $L: \mathbb{R}^{m}\rightarrow \mathbb{R}$, and $y\in \mathbb{R}^m$ then
$\frac{dL}{dy}$ is a row vector, while the conventional notation for $\nabla_y L$ indicates a column vector. In other words, $(\nabla_y L)^T = \frac{d L}{d y} $. Hence if $L$ is a function of $f(x)$, then  
\begin{align}\frac{dL}{dx} = \frac{dL}{df}\cdot \frac{df}{dx}\end{align}
while 
\begin{align}\nabla_x L = \left(\frac{df}{dx}\right)^T\cdot \nabla_{f(x)} L = J_f^T \cdot \nabla_{f(x)} L\end{align}
Similarly, we have that $\frac{\partial L}{\partial y}$ is a row vector.

\noindent\rule{\textwidth}{1pt} 
\subsection{Preliminary results} \label{appendix_prelim}
Let \begin{align}s_t=\psi_t(h_1,h_2,\ldots,h_t,s_{t-1})\end{align} where \begin{align}h_{i+1} = \phi(V s_i + U x_{i+1}+b)\end{align}
\begin{lemma}
For all $t,k \geq 0$, we have 
\begin{align}\frac{d s_{t+k+1}}{d h_t} = \frac{\partial s_{t+k+1}}{\partial h_t} + \left(\sum_{j=0}^{k} \frac{\partial s_{t+k+1}}{\partial h_{t+j+1}}\frac{d h_{t+j+1}}{d h_t} \right)+ \frac{\partial s_{t+k+1}}{\partial s_{t+k}}\frac{d s_{t+k}}{d h_t}\end{align}
\end{lemma}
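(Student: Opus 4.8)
The plan is to unfold the total derivative $\frac{d s_{t+k+1}}{d h_t}$ by applying the multivariate chain rule to the explicit functional dependence $s_{t+k+1}=\psi_{t+k+1}(h_1,\ldots,h_{t+k+1},s_{t+k})$, and then grouping the resulting terms according to which ``type'' of argument they flow through. Concretely, $s_{t+k+1}$ depends on $h_t$ in three qualitatively distinct ways: (i) \emph{directly}, as one of the explicit hidden-state arguments $h_1,\ldots,h_{t+k+1}$ of $\psi_{t+k+1}$ --- this contributes the partial $\frac{\partial s_{t+k+1}}{\partial h_t}$; (ii) \emph{through the intermediate hidden states} $h_{t+1},\ldots,h_{t+k+1}$, each of which is itself a function of $h_t$ (via the recurrence $h_{i+1}=\phi(Vs_i+Ux_{i+1}+b)$ and the upstream $s_i$'s) --- this contributes $\sum_{j=0}^{k}\frac{\partial s_{t+k+1}}{\partial h_{t+j+1}}\frac{d h_{t+j+1}}{d h_t}$; and (iii) \emph{through the previous macro-state} $s_{t+k}$, which depends on $h_t$ --- contributing $\frac{\partial s_{t+k+1}}{\partial s_{t+k}}\frac{d s_{t+k}}{d h_t}$. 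Summing these three groups gives exactly the claimed identity.

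First I would state the chain rule in the form appropriate to the notational convention fixed in the previous subsection: for $s_{t+k+1}$ viewed as a function of the arguments $(h_1,\ldots,h_{t+k+1},s_{t+k})$, where each argument is in turn a function of $h_t$, the total Jacobian is
\begin{align}
\frac{d s_{t+k+1}}{d h_t} = \sum_{i=1}^{t+k+1}\frac{\partial s_{t+k+1}}{\partial h_i}\frac{d h_i}{d h_t} + \frac{\partial s_{t+k+1}}{\partial s_{t+k}}\frac{d s_{t+k}}{d h_t}.
\end{align}
Next I would split the sum $\sum_{i=1}^{t+k+1}$ at $i=t$: for $i<t$, the hidden state $h_i$ does not depend on $h_t$ (it is produced strictly earlier in the recurrence), so $\frac{d h_i}{d h_t}=0$ and these terms drop out. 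For $i=t$, we have $\frac{d h_t}{d h_t}=\mathrm{Id}$, leaving $\frac{\partial s_{t+k+1}}{\partial h_t}$. For $i>t$, reindex $i=t+j+1$ with $j$ ranging over $0,\ldots,k$, which yields precisely the middle summation. Reassembling the three pieces gives the lemma.

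The main obstacle --- really the only subtle point --- is justifying that the partial derivatives $\frac{\partial s_{t+k+1}}{\partial h_i}$ and $\frac{\partial s_{t+k+1}}{\partial s_{t+k}}$ are well defined and that the chain rule applies in this ``held-fixed'' sense, i.e.\ being careful that $\psi_{t+k+1}$ is treated as a genuine function of $t+k+2$ formal arguments so that a partial derivative means differentiating with all other listed arguments frozen, while the total derivative accounts for their dependence on $h_t$. One must also confirm that $h_i$ for $i<t$ genuinely carries no dependence on $h_t$ under the recurrence --- this is immediate since $h_{i+1}$ is a function only of $s_i$ (hence of $h_1,\ldots,h_i$ and earlier macro-states), so by induction nothing indexed below $t$ can depend on $h_t$. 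Given the smoothness of $\phi$, $f$, and the alignment function, differentiability is not an issue, so the whole argument is a bookkeeping exercise once the chain rule is set up with the right variable groupings; I would keep the write-up short and emphasize the three-way decomposition as the conceptual content.
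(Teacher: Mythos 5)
Your proposal is correct and follows essentially the same route as the paper: the paper's proof simply invokes the multivariable chain rule for $g(t)=f(g_1(t),\ldots,g_n(t))$ and leaves the bookkeeping implicit, while you spell out the same decomposition (direct dependence on $h_t$, dependence through later hidden states, dependence through $s_{t+k}$) together with the observations that $\frac{d h_i}{d h_t}=0$ for $i<t$ and $\frac{d h_t}{d h_t}=\mathrm{Id}$. Nothing is missing; your write-up is just a more explicit version of the paper's one-line argument.
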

\begin{proof} Follows directly from the following multivariable chain rule: if \begin{align}g(t)=f(g_1(t),g_2(t),\ldots,g_n(t))\end{align}
then 
\begin{align}\frac{dg}{dt} = \sum_{i=1}^n \frac{\partial f}{ \partial g_i} \frac{d g_i}{dt}\end{align}

\end{proof}
\noindent\rule{\textwidth}{1pt}
\begin{lemma}\label{lemma2}
If we further denote the Jacobian matrix $J_k = \frac{\partial s_{k+1}}{\partial h_k}$, then we get that for all $t,k \geq 0$, we have 
\begin{align}\frac{d s_{t+k+1}}{d h_t} = \frac{\partial s_{t+k+1}}{\partial h_t} + \sum_{j=0}^{k} \left(\frac{\partial s_{t+k+1}}{\partial h_{t+j+1}}\cdot J_{t+j}+ 1_{j=k}\cdot \frac{\partial s_{t+k+1}}{\partial s_{t+k}}\right)\cdot \frac{d s_{t+j}}{d h_t}\end{align}
\end{lemma}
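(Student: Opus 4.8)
\textbf{Proof proposal for Lemma \ref{lemma2}.}

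The plan is to start from the identity established in the previous lemma and massage the two sums appearing there into a single sum of the desired form. Recall that the previous lemma gives, for all $t,k\geq 0$,
\begin{align*}
\frac{d s_{t+k+1}}{d h_t} = \frac{\partial s_{t+k+1}}{\partial h_t} + \left(\sum_{j=0}^{k} \frac{\partial s_{t+k+1}}{\partial h_{t+j+1}}\frac{d h_{t+j+1}}{d h_t} \right)+ \frac{\partial s_{t+k+1}}{\partial s_{t+k}}\frac{d s_{t+k}}{d h_t}.
\end{align*}
The first step is to rewrite $\frac{d h_{t+j+1}}{d h_t}$ in terms of $\frac{d s_{t+j}}{d h_t}$. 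Since $h_{t+j+1} = \phi(V s_{t+j} + U x_{t+j+1} + b)$ depends on $h_t$ only through $s_{t+j}$, the chain rule gives $\frac{d h_{t+j+1}}{d h_t} = \frac{\partial s_{t+j+1}}{\partial h_{t+j}}\cdot \frac{d s_{t+j}}{d h_t} = J_{t+j}\cdot \frac{d s_{t+j}}{d h_t}$, using the definition $J_{k} = \frac{\partial s_{k+1}}{\partial h_k}$ (note that $\frac{\partial s_{k+1}}{\partial h_k}$ is exactly the Jacobian of $\phi(V\,\cdot\, + Ux_{k+1}+b)$, i.e.\ the "recurrent" Jacobian $\frac{dh_{k+1}}{ds_k}$). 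Substituting this into the middle sum turns it into $\sum_{j=0}^k \frac{\partial s_{t+k+1}}{\partial h_{t+j+1}}\cdot J_{t+j}\cdot \frac{d s_{t+j}}{d h_t}$.

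The second step is to absorb the trailing term $\frac{\partial s_{t+k+1}}{\partial s_{t+k}}\frac{d s_{t+k}}{d h_t}$ into this same sum. Observe it is precisely the $j=k$ contribution one would get from a term $1_{j=k}\cdot\frac{\partial s_{t+k+1}}{\partial s_{t+k}}\cdot \frac{d s_{t+j}}{d h_t}$, since when $j=k$ we have $\frac{d s_{t+j}}{d h_t} = \frac{d s_{t+k}}{d h_t}$ and the indicator is $1$, while for $j<k$ the indicator is $0$ and contributes nothing. Adding this indicator term inside the sum and combining with the previously rewritten middle sum yields
\begin{align*}
\frac{d s_{t+k+1}}{d h_t} = \frac{\partial s_{t+k+1}}{\partial h_t} + \sum_{j=0}^{k}\left(\frac{\partial s_{t+k+1}}{\partial h_{t+j+1}}\cdot J_{t+j} + 1_{j=k}\cdot \frac{\partial s_{t+k+1}}{\partial s_{t+k}}\right)\cdot \frac{d s_{t+j}}{d h_t},
\end{align*}
which is exactly the claimed identity.

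This is essentially a bookkeeping argument, so there is no serious obstacle; the only point requiring a little care is the very first substitution, namely justifying that $h_{t+j+1}$ depends on $h_t$ only via $s_{t+j}$ (so that $\frac{dh_{t+j+1}}{dh_t} = J_{t+j}\frac{ds_{t+j}}{dh_t}$ with no extra partial-derivative term), which follows immediately from the update equation $h_{i+1}=\phi(Vs_i+Ux_{i+1}+b)$ since $x_{t+j+1}$ and $b$ do not depend on $h_t$. One should also keep the conventions straight: all the objects $\frac{ds_{t+k+1}}{dh_t}$, $\frac{\partial s_{t+k+1}}{\partial h_{t+j+1}}$, $J_{t+j}$, etc., are matrices, and the products above are matrix products, composed in the correct left-to-right order as dictated by the chain rule.
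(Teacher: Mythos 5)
Your proof is correct and takes essentially the same route as the paper, whose own proof consists of exactly your first step — the observation that $\frac{d h_{t+j+1}}{d h_t} = \frac{\partial h_{t+j+1}}{\partial s_{t+j}}\cdot\frac{d s_{t+j}}{d h_t} = J_{t+j}\cdot\frac{d s_{t+j}}{d h_t}$ — with the absorption of the trailing term into the $1_{j=k}$ indicator left implicit. Your parenthetical correctly resolves the notational slip in the lemma statement: $J_k$ is meant to be the recurrent Jacobian $\frac{d h_{k+1}}{d s_k}$ (as defined in Proposition \ref{gradient_prop}), not literally $\frac{\partial s_{k+1}}{\partial h_k}$.
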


\begin{proof}
Follows directly from the observation that
\begin{align}\frac{d h_{t+j+1}}{d h_t} = \frac{\partial h_{t+j+1}}{\partial s_{t+j}} \frac{d s_{t+j}}{d h_t} = J_{t+j} \cdot \frac{d s_{t+j}}{d h_t}\end{align}
\end{proof}
\noindent\rule{\textwidth}{1pt}
\begin{remark} \label{remark1}
Let us denote \begin{align}C_{k+1}^{(t)}= \frac{d s_{t+k+1}}{d h_t}\end{align} \begin{align}E_{k+1}^{(t)} = \frac{\partial s_{t+k+1}}{\partial h_{t}}\end{align} and \begin{align}F_{k+1,j}^{(t)} = \frac{\partial s_{t+k+1}}{ \partial h_{t+j+1}} \cdot J_{t+j}+ 1_{j=k} \cdot \frac{\partial s_{t+k+1}}{\partial s_{t+k}}\end{align}
and thus the recursion formula in Lemma \ref{lemma2} rewrites as 
\begin{align}C_{k+1}^{(t)} = E_{k+1}^{(t)}+ \sum_{j=0}^k F_{k+1,j}^{(t)}\cdot C_j^{(t)}\end{align}
The next two results highlight how to solve this recursion.\\
\end{remark}
\noindent\rule{\textwidth}{1pt}
\begin{lemma} Let $C_i,E_i, F_{i,j} \in \mathbb{R}^{n \times n}$ such that for all $k\geq 0$, we have 
\begin{align}C_{k+1} = E_{k+1} + \sum_{j=0}^k F_{k+1,j}\cdot C_{j}\end{align}
Then for all $k\geq 1$, we have 
\begin{align}\boxed{C_k = \xi_{0:k} C_0 + \sum_{r=1}^k \xi_{r:k} E_r}\end{align}
where \begin{align}\xi_{r:k} = \sum_{s=1}^{k-r} \xi_{r:k}(s)\end{align}

with \begin{align}\xi_{r:k}(s) = \sum_{r=i_1<\ldots<i_{s+1}=k}F_{i_{s+1},i_{s}}\cdot F_{i_{s-1},i_{s-2}}\cdot \ldots \cdot F_{i_2,i_1}\end{align} and $\xi_{k:k} = \textrm{Id}$.
\end{lemma}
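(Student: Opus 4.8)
The statement asserts a closed-form solution of the linear recursion $C_{k+1} = E_{k+1} + \sum_{j=0}^k F_{k+1,j}\cdot C_j$. The natural approach is induction on $k$. First I would verify the base case $k=1$: here the recursion gives $C_1 = E_1 + F_{1,0}\cdot C_0$, and I would check that the claimed formula reproduces this, with $\xi_{0:1} = \xi_{0:1}(1) = F_{1,0}$ (the only increasing chain $i_1 = 0 < i_2 = 1$) and $\xi_{1:1} = \mathrm{Id}$, so that $C_1 = F_{1,0} C_0 + \mathrm{Id}\cdot E_1$, as required.

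For the inductive step, I would assume the formula holds for all indices $\le k$ and substitute the induction hypothesis $C_j = \xi_{0:j}C_0 + \sum_{r=1}^j \xi_{r:j}E_r$ into the right-hand side of the recursion for $C_{k+1}$. This yields
\begin{align}
C_{k+1} = E_{k+1} + \sum_{j=0}^k F_{k+1,j}\Bigl(\xi_{0:j}C_0 + \sum_{r=1}^j \xi_{r:j}E_r\Bigr).
\end{align}
Now I would collect the coefficient of $C_0$, namely $\sum_{j=0}^k F_{k+1,j}\xi_{0:j}$, and the coefficient of each $E_r$ for $1\le r\le k$, namely $\sum_{j=r}^k F_{k+1,j}\xi_{r:j}$, plus the coefficient $\mathrm{Id}$ of $E_{k+1}$. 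The crux is then a combinatorial identity: I must show $\sum_{j=r}^k F_{k+1,j}\xi_{r:j} + 1_{r=k+1}\mathrm{Id} = \xi_{r:k+1}$ (and the analogous statement with $r$ replaced by the $C_0$-slot $0$, treating $\xi_{0:j}$ in place of $\xi_{r:j}$). This follows by splitting the sum $\xi_{r:k+1} = \sum_s \xi_{r:k+1}(s)$ over chains $r = i_1 < \cdots < i_{s+1} = k+1$ according to the value of the second-to-last index $i_s = j$: every such chain decomposes uniquely as a chain $r = i_1 < \cdots < i_s = j$ (contributing $\xi_{r:j}$) followed by the single factor $F_{k+1,j}$, with the degenerate case $j = r$ (chain of length one) contributing the $\xi_{r:r} = \mathrm{Id}$ term when $r = k+1$.

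The main obstacle is bookkeeping: making the chain-decomposition argument precise requires being careful about the indexing convention in the definition of $\xi_{r:k}(s)$ — in particular matching the subscript pattern $F_{i_{s+1},i_s}\cdot F_{i_{s-1},i_{s-2}}\cdots$ to the way $F_{k+1,j}$ prepends to a shorter chain, and handling the edge cases $s=1$ and the empty/identity chains consistently. Once the decomposition is set up correctly, the identity is purely a reindexing of a finite sum, and the induction closes. I would also note that the displayed recursion only needs to hold for $k\ge 0$, which is exactly what is assumed, so no boundary issues arise at the top of the range.
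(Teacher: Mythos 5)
Your proposal is correct and follows essentially the same route as the paper: induction on $k$, substitution of the induction hypothesis into the recursion, an exchange of the order of summation, and the chain-decomposition identity $\sum_{j=r}^{k}F_{k+1,j}\xi_{r:j}=\xi_{r:k+1}$ together with $\xi_{k+1:k+1}=\mathrm{Id}$. If anything, you justify the key combinatorial identity (splitting chains by their second-to-last index) more explicitly than the paper does, which simply absorbs it into the definition of $\xi_{r:k+1}$.
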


\begin{proof} Let us prove the statement by induction on $k \geq 1$.\\
For $k=1$, we have \begin{align}C_1 = E_1 + F_{1,0} C_0 = \xi_{1:1} E_1 + \xi_{0:1} C_0\end{align}
Now let us assume the statement to be true for $k$, then we get
\begin{align}
    C_{k+1} &= E_{k+1} + \sum_{j=0}^k F_{k+1,j}\cdot \left(\xi_{0:j}C_0+\sum_{r=1}^j \xi_{r:j}E_r\right)\\
    &= E_{k+1}+ \left(\sum_{j=0}^k F_{k+1,j}\cdot  \xi_{0:j}\right)\cdot C_0+\sum_{j=0}^k \sum_{r=1}^j F_{k+1,j}\xi_{r:j}E_r\\
    &= E_{k+1} + \xi_{0:k+1} C_0+\sum_{r=1}^k \left(\sum_{j=r}^k F_{k+1,j}\xi_{r:j}\right)\cdot E_r \\
    &= \xi_{k+1:k+1}E_{k+1} + \xi_{0:k+1} C_0+ \sum_{r=1}^k \xi_{r:k+1}E_r \\
    &= \xi_{0:k+1} C_0 + \sum_{r=1}^{k+1} \xi_{r:k+1}E_r\\
\end{align}
\end{proof}
\noindent\rule{\textwidth}{1pt} 
\begin{lemma} \label{lemma4}
If we further assume that $C_0 = E_0$, then we have for all $k \geq 1$ 
\begin{align}C_k = E_k + \sum_{s=1}^{k} \sum_{q=s}^k \xi_{k-q:k}(s)E_{k-q}\end{align}
\end{lemma}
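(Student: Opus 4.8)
\textbf{Proof proposal for Lemma \ref{lemma4}.}

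The plan is to specialize the previous lemma to the case $C_0 = E_0$ and then reorganize the resulting double sum by a simple change of summation variable. Recall that the preceding lemma gives, for all $k \geq 1$,
\begin{align}
C_k = \xi_{0:k} C_0 + \sum_{r=1}^k \xi_{r:k} E_r,
\end{align}
where $\xi_{r:k} = \sum_{s=1}^{k-r} \xi_{r:k}(s)$ for $r < k$ and $\xi_{k:k} = \mathrm{Id}$. First I would substitute $C_0 = E_0$ into this expression, which folds the $r=0$ term into the same form as the others: the term $\xi_{0:k} C_0$ becomes $\xi_{0:k} E_0$, so that
\begin{align}
C_k = \sum_{r=0}^k \xi_{r:k} E_r.
\end{align}

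Next I would peel off the $r=k$ term, which contributes $\xi_{k:k} E_k = E_k$, leaving $C_k = E_k + \sum_{r=0}^{k-1} \xi_{r:k} E_r$. For each remaining $r$ with $0 \le r \le k-1$, I expand $\xi_{r:k} = \sum_{s=1}^{k-r} \xi_{r:k}(s)$, so that
\begin{align}
C_k = E_k + \sum_{r=0}^{k-1} \sum_{s=1}^{k-r} \xi_{r:k}(s)\, E_r.
\end{align}
The final step is the reindexing $q = k - r$, i.e. $r = k - q$, which ranges over $q \in \{1, 2, \ldots, k\}$ as $r$ ranges over $\{0,\ldots,k-1\}$; under this substitution the inner sum $\sum_{s=1}^{k-r}$ becomes $\sum_{s=1}^{q}$, and $\xi_{r:k}(s) E_r$ becomes $\xi_{k-q:k}(s) E_{k-q}$. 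Swapping the order of the $q$ and $s$ summations (both finite, so this is legitimate) so that $s$ runs outermost from $1$ to $k$ and $q$ runs from $s$ to $k$ — which is exactly the region $1 \le s \le q \le k$ — yields
\begin{align}
C_k = E_k + \sum_{s=1}^{k} \sum_{q=s}^{k} \xi_{k-q:k}(s)\, E_{k-q},
\end{align}
as claimed.

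I do not expect any genuine obstacle here; the only point requiring a little care is bookkeeping the index ranges correctly — in particular checking that the $r=k$ (equivalently $q=0$) term is precisely the isolated $E_k$ and is not double-counted, and that after the swap of summation order the constraint becomes $1 \le s \le q \le k$ rather than something off by one. One could alternatively prove the identity directly by induction on $k$ using the recursion $C_{k+1} = E_{k+1} + \sum_{j=0}^k F_{k+1,j} C_j$ together with the recursive structure of the $\xi_{r:k}(s)$, but the change-of-variables route above is shorter and cleaner.
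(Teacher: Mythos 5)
Your proposal is correct and follows essentially the same route as the paper's proof: both substitute $C_0=E_0$ into the preceding lemma, isolate the $r=k$ term as $E_k$, expand $\xi_{r:k}$ into its $\xi_{r:k}(s)$ components, reindex with $q=k-r$, and swap the order of the finite double sum over the region $1\le s\le q\le k$. The index bookkeeping you flag is handled identically in the paper, so there is nothing further to add.
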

\begin{proof}
Using the previous lemma, we get 
\begin{align}C_{k} &= E_k+\sum_{s'=1}^k \xi_{0:k}(s')C_0 + \sum_{r=1}^{k-1} \sum_{s=1}^{k-r}\xi_{r:k}(s)E_r \end{align}
Using the assumption $C_0=E_0$, we get 
\begin{align}
    C_{k} &= E_k + \sum_{s'=1}^k \xi_{0:k}(s')E_0 + \sum_{r=1}^{k-1} \sum_{s=1}^{k-r}\xi_{r:k}(s)E_r \\
    &=  E_k+ \sum_{r=0}^{k-1} \sum_{s=1}^{k-r}\xi_{r:k}(s)E_r\\
\end{align}
Now let us put $q = k-r$, we get 
\begin{align}
    C_k &=E_k + \sum_{q=1}^{k} \sum_{s=1}^{q}\xi_{k-q:k}(s)E_{k-q}\\ 
    &= E_k + \sum_{s=1}^{k} \sum_{q=s}^{k}\xi_{k-q:k}(s)E_{k-q}\\
\end{align}
\end{proof}
\noindent\rule{\textwidth}{1pt}
\begin{remark}\label{remark2} First, note that Lemma \ref{lemma4} applies here, since $C_0^{(t)}=E_{0}^{(t)}$, and thus 
\begin{align}C_k^{(t)} = E_k^{(t)} + \sum_{s=1}^{k} \sum_{q=s}^k \xi_{k-q:k}^{(t)}(s)E_{k-q}^{(t)}\end{align}
The idea of Lemma \ref{lemma4} was to regroup all terms with the same number of $F$ factors (where each $F$ contains a Jacobian matrix $J_k$ which contains the connectivity matrix $V$ of the recurrent net). One could roughly perceive the term \begin{align}\sum_{q=s}^k \xi_{k-q:k}^{(t)}(s)E_{k-q}^{(t)}\end{align} as being the term of degree $s$ for $s=1,2,\ldots,k$ and $E_k^{(t)}$ the term of degree $0$. This will allow us to consider the terms $C$ roughly as a polynomial in $V$ and we can look the asymptotic behaviour of each of the coefficients of this polynomial individually. This will then give us a very good understanding on how the distribution of the attention weights are affecting the magnitude of total gradient.\\ 
\end{remark}
\noindent\rule{\textwidth}{1pt}
\begin{proposition} \label{main_prop}
For all $t\geq 1$, and all $k\geq 0$, we have that 
\begin{align}\frac{d s_{t+k}}{d h_t} = \sum_{s=0}^k \bar{\xi}_{o:k}^{(t)}(s)\end{align}
where for all $s\geq 1$, 
\begin{align}\bar{\xi}_{o:k}^{(t)}(s) = \sum_{0\leq i_1<\ldots<i_s<k}F_{k,i_s}^{(t)}\cdot F_{i_s,i_{s-1}}^{(t)}\cdot \ldots \cdot F_{i_2,i_1}^{(t)}\cdot E_{i_1}^{(t)}\end{align}
and where $\bar{\xi}_{o:k}^{(t)}(0) = E_{k}^{(t)}$. With for all $k\geq 0$ we have \begin{align}E_{k}^{(t)} = \frac{\partial s_{t+k}}{\partial h_{t}}\end{align} and for all $k\geq j$ we have \begin{align}F_{k+1,j}^{(t)} = \frac{\partial s_{t+k+1}}{ \partial h_{t+j+1}} \cdot J_{t+j}+ 1_{j=k} \cdot \frac{\partial s_{t+k+1}}{\partial s_{t+k}}\end{align}

\end{proposition}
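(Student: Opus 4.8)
The plan is to prove Proposition \ref{main_prop} by stitching together the chain of lemmas already established in Appendix \ref{appendix_prelim}. The statement is essentially a repackaging of Lemma \ref{lemma4}: the claim $\frac{d s_{t+k}}{d h_t} = \sum_{s=0}^{k} \bar\xi_{0:k}^{(t)}(s)$ with the displayed formula for $\bar\xi_{0:k}^{(t)}(s)$ should follow once we identify $C_k^{(t)} = \frac{d s_{t+k}}{d h_t}$, $E_k^{(t)} = \frac{\partial s_{t+k}}{\partial h_t}$, and $F_{k+1,j}^{(t)}$ as in Remark \ref{remark1}. So first I would verify that the hypotheses of the abstract recursion lemmas hold for these concrete matrices: by Lemma \ref{lemma2} and Remark \ref{remark1}, the quantities $C_k^{(t)}, E_k^{(t)}, F_{k+1,j}^{(t)}$ satisfy $C_{k+1}^{(t)} = E_{k+1}^{(t)} + \sum_{j=0}^k F_{k+1,j}^{(t)} \cdot C_j^{(t)}$, and the base case $C_0^{(t)} = \frac{d s_t}{d h_t} = \frac{\partial s_t}{\partial h_t} = E_0^{(t)}$ holds since $s_t = \psi_t(h_1,\dots,h_t,s_{t-1})$ depends on $h_t$ only directly (the dependence of $s_{t-1}$ on $h_t$ is absent because $s_{t-1}$ is a function of $h_1,\dots,h_{t-1},s_{t-2}$). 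That is exactly the hypothesis $C_0 = E_0$ needed to invoke Lemma \ref{lemma4}.

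Next I would apply Lemma \ref{lemma4} verbatim to obtain $C_k^{(t)} = E_k^{(t)} + \sum_{s=1}^k \sum_{q=s}^k \xi_{k-q:k}^{(t)}(s) E_{k-q}^{(t)}$, as recorded in Remark \ref{remark2}. The remaining work is purely a reindexing/bookkeeping step: I must show that $\bar\xi_{0:k}^{(t)}(s) = \sum_{q=s}^k \xi_{k-q:k}^{(t)}(s)\, E_{k-q}^{(t)}$ for each $s \geq 1$, and that the $s=0$ term $E_k^{(t)}$ matches $\bar\xi_{0:k}^{(t)}(0)$. For the latter this is immediate by definition. For the former, I would expand $\xi_{r:k}^{(t)}(s) = \sum_{r = i_1 < i_2 < \dots < i_{s+1} = k} F_{i_{s+1},i_s}^{(t)} \cdots F_{i_2,i_1}^{(t)}$, multiply on the right by $E_r^{(t)} = E_{i_1}^{(t)}$, and then sum over $r = k-q$ ranging so that $i_1 = r$ runs over all values $0 \le i_1 \le k-s$; the constraint $i_1 < i_2 < \dots < i_s < i_{s+1} = k$ with $i_{s+1}$ forced to equal $k$ is exactly the indexing $0 \le i_1 < \dots < i_s < k$ appearing in the statement of $\bar\xi_{0:k}^{(t)}(s)$, once we relabel $i_{s+1} = k$ as the top index and note that $F_{i_{s+1},i_s}^{(t)} = F_{k,i_s}^{(t)}$. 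Collecting these sums over all admissible $s$-tuples $(i_1,\dots,i_s)$ with $0 \le i_1 < \dots < i_s < k$ yields precisely $\sum_{0 \le i_1 < \dots < i_s < k} F_{k,i_s}^{(t)} \cdot F_{i_s,i_{s-1}}^{(t)} \cdots F_{i_2,i_1}^{(t)} \cdot E_{i_1}^{(t)}$, which is the claimed form of $\bar\xi_{0:k}^{(t)}(s)$.

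Finally I would assemble: $\frac{d s_{t+k}}{d h_t} = C_k^{(t)} = E_k^{(t)} + \sum_{s=1}^k \bar\xi_{0:k}^{(t)}(s) = \sum_{s=0}^k \bar\xi_{0:k}^{(t)}(s)$, completing the proof. The only genuinely delicate point—and the step I expect to require the most care—is the index juggling in the second paragraph: making sure the off-by-one conventions in the definition of $\xi_{r:k}(s)$ (which uses $s+1$ indices $i_1 < \dots < i_{s+1}$ with endpoints pinned to $r$ and $k$) line up correctly with the convention in $\bar\xi_{0:k}^{(t)}(s)$ (which uses $s$ free indices $i_1 < \dots < i_s$ strictly between $0$ and $k$, with the product starting at $F_{k,i_s}^{(t)}$), and correctly accounting for the substitution $q = k - r$ that converts the double sum $\sum_{s=1}^k \sum_{q=s}^k$ into a single sum over all depth-$s$ dependency chains. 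Everything else is a direct citation of the already-proven lemmas, so modulo this bookkeeping the proposition follows without new ideas.
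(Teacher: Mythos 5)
Your proposal is correct and follows essentially the same route as the paper's own proof: verify the base case $C_0^{(t)}=E_0^{(t)}$, invoke Lemma \ref{lemma4}, and then reindex the double sum $\sum_{s=1}^{k}\sum_{q=s}^{k}\xi_{k-q:k}^{(t)}(s)E_{k-q}^{(t)}$ into the sum over chains $0\leq i_1<\ldots<i_s<k$. The index bookkeeping you flag as delicate is handled identically in the paper (and your reading of $\xi_{r:k}(s)$ as the telescoping product $F_{i_{s+1},i_s}\cdots F_{i_2,i_1}$ correctly matches what the paper actually uses in the proof).
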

\begin{proof}
Let $t\geq 1$, and recall that we defined $C_{k}^{(t)} = \frac{d s_{t+k}}{d h_t}$, for all $k\geq 0$.\\ 

As already pointed out, we know that 
$C_0^{(t)} = E_0^{(t)}$ (thus the claim holds for $k=0$).\\

Then by Lemma \ref{lemma4}, we know that for all $k \geq 1$ we have 
\begin{align}
    C_k^{(t)} &= E_k^{(t)} + \sum_{s=1}^{k} \sum_{q=s}^k \xi_{k-q:k}^{(t)}(s)E_{k-q}^{(t)}\\
    &= \bar{\xi}_{o:k}^{(t)}(0) + \sum_{s=1}^{k} \sum_{q=s}^k \sum_{k-q=i_1<\ldots<i_{s+1}=k} F_{k,i_s}^{(t)}\cdot F_{i_s,i_{s-1}}^{(t)} \cdot \ldots \cdot F_{i_2,i_1}^{(t)} \cdot E_{i_1}^{(t)}\\
    &= \bar{\xi}_{o:k}^{(t)}(0) + \sum_{s=1}^{k} \sum_{0\leq i_1<\ldots<i_{s}<k} F_{k,i_s}^{(t)}\cdot F_{i_s,i_{s-1}}^{(t)} \cdot \ldots \cdot F_{i_2,i_1}^{(t)} \cdot E_{i_1}^{(t)}\\
    &= \bar{\xi}_{o:k}^{(t)}(0) + \sum_{s=1}^{k} \bar{\xi}_{o:k}^{(t)}(s)\\
    &= \sum_{s=0}^{k} \bar{\xi}_{o:k}^{(t)}(s)
\end{align}
\end{proof}
\noindent\rule{\textwidth}{1pt}
\begin{remark}\label{remark_assumption}
In what follows the main emphasis will be to calculate the $F_{i,j}^{(t)}$ and $E_{i}^{(t)}$ terms explicitly, since they are the building blocks of the mentioned polynomials in \ref{remark2}.\\

We will assume that 
\begin{align}s_t = f(h_t,c_t)\end{align} with 
\begin{align} c_t = \alpha_{1,t}h_1 + \alpha_{2,t} h_2+ \ldots+ \alpha_{t,t} h_t \end{align}
and 
\begin{align}\alpha_{j,t} = \frac{\exp{(e_{j,t})}}{\sum_{i=1}^t \exp{(e_{i,t})}} \end{align}
where
\begin{align}e_{i,t} = a(s_{t-1},h_i)\end{align}

Let us recall that for all $k\geq 0$ we have \begin{align} E_{k}^{(t)} = \frac{\partial s_{t+k}}{\partial h_{t}} \end{align} and for all $k\geq j$ we have \begin{align}F_{k+1,j}^{(t)} = \frac{\partial s_{t+k+1}}{ \partial h_{t+j+1}} \cdot J_{t+j}+ 1_{j=k} \cdot \frac{\partial s_{t+k+1}}{\partial s_{t+k}} \end{align}

\end{remark}
\noindent\rule{\textwidth}{1pt}
\begin{lemma}\label{expr1} With the assumption of Remark \ref{remark_assumption}, we have that for all $t \geq 2$
\begin{align}\frac{\partial s_{t}}{\partial s_{t-1}} = \partial_2 f(h_t,c_t) \cdot \left(\sum_{i=1}^t \alpha_{i,t} Y_{i,t}\right) \end{align}
where $\partial_2 f$ is the the partial derivative of $f$ with respect to the second variable, and where we define
\begin{align}Y_{i,t} = h_i\cdot  \left(\frac{\partial e_{i,t}}{\partial s_{t-1}}-\sum_{j=1}^t \alpha_{j,t} \cdot \frac{\partial e_{j,t}}{\partial s_{t-1}}\right)\end{align}
\end{lemma}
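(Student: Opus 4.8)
\textbf{Proof plan for Lemma \ref{expr1}.}

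The statement computes $\frac{\partial s_t}{\partial s_{t-1}}$, the \emph{partial} derivative of $s_t = f(h_t, c_t)$ with respect to $s_{t-1}$, holding $h_1,\dots,h_t$ fixed (which is exactly the quantity needed for the $F$ and $E$ blocks of Proposition \ref{main_prop}). The plan is to apply the chain rule through $f$ and then differentiate $c_t$ through its dependence on the attention weights $\alpha_{i,t}$, which in turn depend on $s_{t-1}$ via the alignment scores $e_{i,t} = a(s_{t-1},h_i)$.

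First I would write $\frac{\partial s_t}{\partial s_{t-1}} = \partial_1 f(h_t,c_t)\cdot\frac{\partial h_t}{\partial s_{t-1}} + \partial_2 f(h_t,c_t)\cdot\frac{\partial c_t}{\partial s_{t-1}}$. Since we are taking the partial derivative with $h_1,\dots,h_t$ held fixed, $\frac{\partial h_t}{\partial s_{t-1}} = 0$, leaving only the $\partial_2 f$ term. Next, $\frac{\partial c_t}{\partial s_{t-1}} = \sum_{i=1}^t h_i\,\frac{\partial \alpha_{i,t}}{\partial s_{t-1}}$ (again $h_i$ held fixed), so the crux is computing $\frac{\partial \alpha_{i,t}}{\partial s_{t-1}}$ for the softmax $\alpha_{i,t} = \exp(e_{i,t})/\sum_{j=1}^t \exp(e_{j,t})$. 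The standard softmax-Jacobian calculation gives $\frac{\partial \alpha_{i,t}}{\partial s_{t-1}} = \alpha_{i,t}\left(\frac{\partial e_{i,t}}{\partial s_{t-1}} - \sum_{j=1}^t \alpha_{j,t}\frac{\partial e_{j,t}}{\partial s_{t-1}}\right)$, where each $\frac{\partial e_{i,t}}{\partial s_{t-1}}$ is a row vector (Jacobian of a vector-valued $a(\cdot,h_i)$, or scalar-valued if $a$ returns a scalar). Substituting this back and recognizing the bracketed expression times $h_i$ as exactly $Y_{i,t}$ yields $\frac{\partial c_t}{\partial s_{t-1}} = \sum_{i=1}^t \alpha_{i,t} Y_{i,t}$, and hence the claimed formula.

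The main subtlety — not really an obstacle but the place to be careful — is bookkeeping of matrix/vector shapes and the ordering of factors: $h_i$ is a column vector in $\mathbb{R}^n$, $\frac{\partial e_{i,t}}{\partial s_{t-1}}$ is (for the general $a:\mathbb{R}^n\times\mathbb{R}^n\to\mathbb{R}^n$ convention used here) an $n\times n$ Jacobian, so $Y_{i,t} = h_i\cdot\left(\frac{\partial e_{i,t}}{\partial s_{t-1}} - \sum_j \alpha_{j,t}\frac{\partial e_{j,t}}{\partial s_{t-1}}\right)$ must be read as an outer-product-style $n\times n$ matrix, and the sum $\sum_i \alpha_{i,t} Y_{i,t}$ is then left-multiplied by the $n\times n$ matrix $\partial_2 f(h_t,c_t)$. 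One should also note the derivation of the softmax Jacobian itself: writing $Z_t = \sum_j \exp(e_{j,t})$, differentiate $\alpha_{i,t} = \exp(e_{i,t})/Z_t$ using the quotient rule, with $\frac{\partial Z_t}{\partial s_{t-1}} = \sum_j \exp(e_{j,t})\frac{\partial e_{j,t}}{\partial s_{t-1}}$, then divide through by $Z_t$ to reintroduce the $\alpha$'s. Everything else is a routine substitution, so the lemma follows directly once these shape conventions are fixed.
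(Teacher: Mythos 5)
Your proposal is correct and follows essentially the same route as the paper: chain rule through $\partial_2 f$ (with the $\partial_1 f\cdot\frac{\partial h_t}{\partial s_{t-1}}$ term vanishing because $h_t$ is held fixed in the partial derivative), then the standard softmax Jacobian $\frac{\partial \alpha_{i,t}}{\partial e_{j,t}} = \alpha_{i,t}(1_{i=j}-\alpha_{j,t})$ composed with $\frac{\partial e_{j,t}}{\partial s_{t-1}}$, and a final regrouping into the $Y_{i,t}$ terms. Your added remarks on the shape conventions for $h_i$ and the Jacobians are a fair observation about a point the paper leaves implicit, but they do not change the argument.
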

\begin{proof}
\begin{align}
    \frac{\partial s_{t}}{\partial s_{t-1}} &= \partial_2 f(h_t,c_t)\cdot \frac{\partial c_t}{\partial s_{t-1}}\\
    &= \partial_2 f(h_t,c_t) \cdot \left[\sum_{i=1}^t h_i \cdot \left(\frac{\partial \alpha_{i,t}}{\partial s_{t-1}}\right)\right] \\
    &= \partial_2 f(h_t,c_t) \cdot \left[\sum_{i=1}^t h_i \cdot \left(\sum_{j=1}^t\frac{\partial \alpha_{i,t}}{\partial e_{j,t}} \cdot \frac{\partial e_{j,t}}{\partial s_{t-1}}\right)\right]\\
    &= \partial_2 f(h_t,c_t) \cdot \left[\sum_{i=1}^t h_i \cdot \left( \sum_{j=1}^t \alpha_{i,t} (1_{i=j}-\alpha_{j,t})\cdot \frac{\partial e_{j,t}}{\partial s_{t-1}}\right)\right]\\
    &= \partial_2 f(h_t,c_t) \cdot \left[\sum_{i=1}^t \alpha_{i,t} h_i \left(\frac{\partial e_{i,t}}{\partial s_{t-1}}- \sum_{j=1}^t \alpha_{j,t} \frac{\partial e_{j,t}}{\partial s_{t-1}}\right)\right]\\
    &= \partial_2 f(h_t,c_t) \cdot \left(\sum_{i=1}^t \alpha_{i,t} Y_{i,t}\right)
\end{align}
\end{proof}
\noindent\rule{\textwidth}{1pt}
\begin{lemma} \label{expr2}
With the assumption of Remark \ref{remark_assumption}, we have that for all $k \geq j$:
\begin{align}\frac{\partial s_k}{\partial h_j} = 1_{k=j}\cdot \partial_1 f(h_k,c_k) +\alpha_{j,k} \partial_2 f(h_k,c_k) \cdot \left(I+X_{j,k}\right)\end{align}
where $\partial_1 f$ and $\partial_2 f$ are the partial derivatives of $f$ with respect to the first and second variable, respectively, and where we define 
\begin{align}X_{j,k} = \left(h_j - \sum_{i=1}^k h_i \alpha_{i,k}\right)\cdot \frac{\partial e_{j,k}}{\partial h_j} \end{align}
\end{lemma}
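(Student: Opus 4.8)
\textbf{Proof proposal for Lemma \ref{expr2}.}

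The plan is to expand $\frac{\partial s_k}{\partial h_j}$ by the multivariable chain rule applied to $s_k = f(h_k, c_k)$, treating $h_j$ as the variable of interest and carefully tracking the two ways $h_j$ enters: directly as the first argument $h_k$ when $k=j$, and indirectly through the convex combination $c_k = \sum_{i=1}^k \alpha_{i,k} h_i$. First I would write
\begin{align}
\frac{\partial s_k}{\partial h_j} = \mathbf{1}_{k=j}\cdot \partial_1 f(h_k,c_k) + \partial_2 f(h_k,c_k)\cdot \frac{\partial c_k}{\partial h_j},
\end{align}
so that the remaining work is to compute $\frac{\partial c_k}{\partial h_j}$. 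Here the indicator term is exactly the contribution from the first slot of $f$, which is nonzero only when the hidden state $h_k$ being fed into $f$ is literally $h_j$.

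Next I would differentiate $c_k = \sum_{i=1}^k \alpha_{i,k} h_i$ with respect to $h_j$. There are two sources: the explicit appearance of $h_j$ in the sum (the $i=j$ term), giving $\alpha_{j,k} I$, and the dependence of every weight $\alpha_{i,k}$ on $h_j$ through $e_{j,k} = a(s_{k-1}, h_j)$ (note $s_{k-1}$ does not depend on $h_j$ for $j \le k-1$, and one should check the $j=k$ boundary case separately, where the same computation still goes through). This yields
\begin{align}
\frac{\partial c_k}{\partial h_j} = \alpha_{j,k} I + \sum_{i=1}^k h_i \cdot \frac{\partial \alpha_{i,k}}{\partial e_{j,k}}\cdot \frac{\partial e_{j,k}}{\partial h_j}.
\end{align}
Then I would plug in the softmax Jacobian identity $\frac{\partial \alpha_{i,k}}{\partial e_{j,k}} = \alpha_{i,k}(\mathbf{1}_{i=j} - \alpha_{j,k})$ — the same identity used in the proof of Lemma \ref{expr1} — and collect terms: the $\mathbf{1}_{i=j}$ part contributes $\alpha_{j,k} h_j \cdot \frac{\partial e_{j,k}}{\partial h_j}$, and the $-\alpha_{j,k}$ part contributes $-\alpha_{j,k}\big(\sum_i \alpha_{i,k} h_i\big)\cdot \frac{\partial e_{j,k}}{\partial h_j}$. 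Factoring out $\alpha_{j,k}$ gives $\alpha_{j,k}\big(I + (h_j - \sum_i \alpha_{i,k} h_i)\cdot \frac{\partial e_{j,k}}{\partial h_j}\big) = \alpha_{j,k}(I + X_{j,k})$, which is the claimed form after multiplying by $\partial_2 f(h_k,c_k)$.

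I expect the computation itself to be routine — it is essentially the same softmax-differentiation bookkeeping as Lemma \ref{expr1}, with an extra $I$ term from the explicit occurrence of $h_j$ in $c_k$. The one genuine subtlety, and the step I would be most careful about, is the dimensional/ordering convention: $h_i$ is a column vector, $\frac{\partial e_{j,k}}{\partial h_j}$ is a row vector (since $e_{j,k}$ is scalar-valued here — or rather, per the paper's stated signature $a:\mathbb{R}^n\times\mathbb{R}^n\to\mathbb{R}^n$, one must interpret the outer-product structure $h_i \cdot \frac{\partial e_{j,k}}{\partial h_j}$ consistently), so that $X_{j,k}$ is an $n\times n$ matrix and the whole expression type-checks against $\frac{\partial s_k}{\partial h_j} \in \mathbb{R}^{n\times n}$. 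I would also note explicitly that $s_{k-1}$ is independent of $h_j$ whenever $j \le k-1$, so that $e_{i,k} = a(s_{k-1}, h_i)$ depends on $h_j$ only through its second argument when $i=j$; the case $j=k$ needs a separate one-line check but produces the same formula since $s_{k-1}$ still does not involve $h_k$.
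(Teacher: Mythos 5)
Your proposal is correct and follows essentially the same route as the paper's proof: the chain rule split into $\partial_1 f$ and $\partial_2 f \cdot \frac{\partial c_k}{\partial h_j}$, the softmax Jacobian identity $\frac{\partial \alpha_{i,k}}{\partial e_{j,k}} = \alpha_{i,k}(1_{i=j}-\alpha_{j,k})$, and the final factoring out of $\alpha_{j,k}$ to obtain $I + X_{j,k}$. Your additional remarks on the $j=k$ boundary case and on the row/column conventions for $\frac{\partial e_{j,k}}{\partial h_j}$ are sensible points of care that the paper's proof leaves implicit.
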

\begin{proof}
\begin{align}
    \frac{\partial s_k}{\partial h_j} &= 1_{k=j}\cdot \partial_1 f(h_k,c_k)\cdot \frac{\partial h_k}{\partial h_k} + \partial_2 f(h_k,c_k) \cdot \frac{\partial c_k}{\partial h_j} \\
    &= 1_{k=j}\cdot \partial_1 f(h_k,c_k) + \partial_2 f(h_k,c_k) \cdot \left[\alpha_{j,k}\cdot I+\sum_{i=1}^k h_i \cdot \frac{\partial \alpha_{i,k}}{\partial h_j}\right]\\
    &= 1_{k=j}\cdot \partial_1 f(h_k,c_k) + \partial_2 f(h_k,c_k) \cdot \left[\alpha_{j,k}\cdot I+\sum_{i=1}^k h_i \cdot \frac{\partial \alpha_{i,k}}{\partial e_{j,k}}\frac{\partial e_{j,k}}{\partial h_j}\right]\\
    &= 1_{k=j}\cdot \partial_1 f(h_k,c_k) + \partial_2 f(h_k,c_k) \cdot \left[\alpha_{j,k}\cdot I+\sum_{i=1}^k h_i \cdot \alpha_{i,k}(1_{i=j}-\alpha_{j,k})\frac{\partial e_{j,k}}{\partial h_j}\right]\\
    &= 1_{k=j}\cdot \partial_1 f(h_k,c_k) + \partial_2 f(h_k,c_k) \cdot \left[\alpha_{j,k}\cdot I+\left(h_j \alpha_{j,k}- \alpha_{j,k}\sum_{i=1}^k h_i \cdot \alpha_{i,k}\right)\frac{\partial e_{j,k}}{\partial h_j}\right]\\
    &= 1_{k=j}\cdot \partial_1 f(h_k,c_k) +\alpha_{j,k} \partial_2 f(h_k,c_k) \cdot \left[I+\left(h_j-\sum_{i=1}^k h_i \cdot \alpha_{i,k}\right)\frac{\partial e_{j,k}}{\partial h_j}\right]\\
    &= 1_{k=j}\cdot \partial_1 f(h_k,c_k) +\alpha_{j,k} \partial_2 f(h_k,c_k) \cdot \left(I+X_{j,k}\right)
\end{align}
\end{proof}
\noindent\rule{\textwidth}{1pt} 
\begin{corollary}\label{main_corollary}
With the assumption of Remark \ref{remark_assumption}, and the notations of lemma \ref{expr1} and \ref{expr2}, we have for all $k'\geq 0$,
\begin{align} E_{k'}^{(t)} = 1_{k'=0} \partial_1 f(h_t,c_t) + \alpha_{t,t+k'} \partial_2 f(h_{t+k'},c_{t+k'}) \cdot \left[I+X_{t,t+k'}\right] \end{align}
and for all $k\geq j$,
\begin{align}
    F_{k+1,j}^{(t)} &= \alpha_{t+j+1,t+k+1} \cdot \partial_2 f(h_{t+k+1},c_{t+k+1})\cdot \left[I+ X_{t+j+1,t+k+1}\right]\cdot J_{t+j}\\
    &+ 1_{k=j}\cdot \left(\partial_1 f(h_{t+k+1},c_{t+k+1})J_{t+j}+ \partial_2 f(h_{t+k+1},c_{t+k+1})\cdot \left[\sum_{i=1}^{t+k+1}\alpha_{i,t+k+1}Y_{i,t+k+1}\right]\right)
\end{align}
\end{corollary}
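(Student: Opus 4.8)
The plan is to obtain both identities by plugging Lemmas \ref{expr1} and \ref{expr2} into the definitions of $E_{k'}^{(t)}$ and $F_{k+1,j}^{(t)}$ recalled in Remark \ref{remark_assumption}, while carefully tracking the index shifts. There is no genuine difficulty here beyond bookkeeping: the entire content is substitution plus making sure each Kronecker indicator lands on the correct pair of indices.

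First I would handle $E_{k'}^{(t)} = \partial s_{t+k'}/\partial h_t$ by applying Lemma \ref{expr2} with the roles $k \mapsto t+k'$ and $j \mapsto t$, which is legitimate since $t+k'\geq t$. The indicator $1_{k=j}$ in the lemma becomes $1_{t+k'=t}=1_{k'=0}$, and the term $\alpha_{j,k}\,\partial_2 f(h_k,c_k)(I+X_{j,k})$ becomes $\alpha_{t,t+k'}\,\partial_2 f(h_{t+k'},c_{t+k'})(I+X_{t,t+k'})$, which is exactly the claimed formula for $E_{k'}^{(t)}$.

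Next I would treat $F_{k+1,j}^{(t)} = \frac{\partial s_{t+k+1}}{\partial h_{t+j+1}}\,J_{t+j} + 1_{j=k}\,\frac{\partial s_{t+k+1}}{\partial s_{t+k}}$ and expand the two summands separately. For the first summand I apply Lemma \ref{expr2} with $k\mapsto t+k+1$ and $j\mapsto t+j+1$, valid since $j\leq k$ forces $t+j+1\leq t+k+1$, so its indicator $1_{t+k+1=t+j+1}$ collapses to $1_{k=j}$; right-multiplying the resulting expression by $J_{t+j}$ produces the $\alpha_{t+j+1,t+k+1}\,\partial_2 f(h_{t+k+1},c_{t+k+1})(I+X_{t+j+1,t+k+1})\,J_{t+j}$ term together with a $1_{k=j}\,\partial_1 f(h_{t+k+1},c_{t+k+1})\,J_{t+j}$ contribution. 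For the second summand I apply Lemma \ref{expr1} at time index $t+k+1$ (permitted since $t\geq 1$ gives $t+k+1\geq 2$), which yields $\partial_2 f(h_{t+k+1},c_{t+k+1})\big(\sum_{i=1}^{t+k+1}\alpha_{i,t+k+1}Y_{i,t+k+1}\big)$, carrying the prefactor $1_{j=k}$. Collecting the two $1_{k=j}$ contributions into a single bracket gives the stated formula for $F_{k+1,j}^{(t)}$.

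The step I expect to be most error-prone — rather than hard — is keeping the substitutions in Lemma \ref{expr2} consistent between the $E$ computation (shift by $t$) and the $F$ computation (shift by $t+1$), and confirming that the lone $\partial_1 f$ term appears only on the diagonal $k=j$. Once both applications of the lemma are set up with the correct arguments, the remainder is a one-line regrouping.
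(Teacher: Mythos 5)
Your proposal is correct and follows exactly the same route as the paper's own proof: substitute Lemma \ref{expr2} (with the appropriate index shifts) into the definitions of $E_{k'}^{(t)}$ and of the first summand of $F_{k+1,j}^{(t)}$, apply Lemma \ref{expr1} at time $t+k+1$ for the second summand, and collect the two $1_{k=j}$ contributions. No gaps.
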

\begin{proof}
Applying lemma \ref{expr2}, we get that for all $k\geq 0$,
\begin{align}
    E_{k'}^{(t)} &= \frac{\partial s_{t+k'}}{\partial h_t}\\
    &= 1_{k'=0}\cdot \partial_1 f(h_t,c_t) + \alpha_{t,t+k'} \cdot \partial_2 f(h_{t+k'},c_{t+k'})\cdot \left[I+X_{t,t+k'}\right]\\
\end{align}
and then by applying lemma \ref{expr1} and \ref{expr2}, we get that for all $k \geq j$,

\begin{align}
   F_{k+1,j}^{(t)} &= \frac{\partial s_{t+k+1}}{ \partial h_{t+j+1}} \cdot J_{t+j}+ 1_{j=k} \cdot \frac{\partial s_{t+k+1}}{\partial s_{t+k}}\\
   &= [1_{k=j} \partial_1 f(h_{t+k+1},c_{t+k+1})\\
   &+ \alpha_{t+j+1,t+k+1}\partial_2 f(h_{t+k+1},c_{t+k+1})\cdot(I+X_{t+j+1,t+k+1})]\cdot J_{t+j}\\ 
   &+ 1_{k=j} \cdot \partial_2 f(h_{t+k+1},c_{t+k+1})\cdot \left(\sum_{i=1}^{t+k+1} \alpha_{i,t+k+1} Y_{i,t+k+1}\right)\\
   &= \alpha_{t+j+1,t+k+1} \cdot \partial_2 f(h_{t+k+1},c_{t+k+1})\cdot \left[I+ X_{t+j+1,t+k+1}\right]\cdot J_{t+j}\\
    &+ 1_{k=j}\cdot \left(\partial_1 f(h_{t+j+1},c_{t+k+1})J_{t+j}+ \partial_2 f(h_{t+k+1},c_{t+k+1})\cdot \left[\sum_{i=1}^{t+k+1}\alpha_{i,t+k+1}Y_{i,t+k+1}\right]\right)
\end{align}
\end{proof}
\noindent\rule{\textwidth}{1pt}
\begin{proposition}\label{prop_D}
We can rewrite for all $k' \geq 0$ and all $k \geq j \geq 0$
\begin{align}
    E_{k'}^{(t)} &= \alpha_{t,t+k'}\cdot \tilde{D}_{k',0}^{(t)}+ 1_{k'=0} \tilde{R}_0^{(t)}\\
    F_{k+1,j}^{(t)} &= \alpha_{t+j+1,t+k+1} \cdot D_{k+1,j}^{(t)} + 1_{k=j} \cdot R_{k+1}^{(t)}
\end{align}
where 
\begin{align}
    D_{k+1,j+1}^{(t)} &= \partial_2 f(h_{t+k+1},c_{t+k+1}) \cdot [I+ X_{t+j+1,t+k+1}]\cdot J_{t+j}\\
    R_{k+1}^{(t)} &= \partial_1 f(h_{t+k+1}, c_{t+k+1})\cdot J_{t+k} + \partial_2 f(h_{t+k+1},c_{t+k+1})\cdot [\sum_{i=1}^{t+k+1} \alpha_{i,t+k+1} Y_{i,t+k+1}]\\
    \tilde{D}_{k'}^{(t)} &= \partial_2 f(h_{t+k'},c_{t+k'}) \cdot [I+ X_{t,t+k'}]\\
    \tilde{R}_0^{(t)} &= \partial_1 f(h_t,c_t)
\end{align}
while $X_{i,i'}$ and $Y_{i,i'}$ are defined as in lemma \ref{expr1} and \ref{expr2}.
\end{proposition}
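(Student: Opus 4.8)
The plan is to obtain Proposition~\ref{prop_D} as a direct repackaging of the explicit formulas already established in Corollary~\ref{main_corollary}: there is essentially nothing new to compute, only a bookkeeping of which summands carry an attention-weight prefactor $\alpha$ and which are ``residual'' contributions supported on the diagonal $k=j$ (respectively on $k'=0$).

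First I would treat $E_{k'}^{(t)}$. Corollary~\ref{main_corollary} gives $E_{k'}^{(t)} = 1_{k'=0}\,\partial_1 f(h_t,c_t) + \alpha_{t,t+k'}\,\partial_2 f(h_{t+k'},c_{t+k'})\cdot[I+X_{t,t+k'}]$. Setting $\tilde D_{k'}^{(t)} := \partial_2 f(h_{t+k'},c_{t+k'})\cdot[I+X_{t,t+k'}]$ and $\tilde R_0^{(t)} := \partial_1 f(h_t,c_t)$, the claimed identity $E_{k'}^{(t)} = \alpha_{t,t+k'}\tilde D_{k'}^{(t)} + 1_{k'=0}\tilde R_0^{(t)}$ is immediate; here $X_{i,i'}$ is exactly the matrix introduced in Lemma~\ref{expr2}, so every object on the right-hand side is already defined.

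Next I would treat $F_{k+1,j}^{(t)}$. Corollary~\ref{main_corollary} writes it as $\alpha_{t+j+1,t+k+1}\,\partial_2 f(h_{t+k+1},c_{t+k+1})\cdot[I+X_{t+j+1,t+k+1}]\cdot J_{t+j}$ plus, only when $k=j$, the term $\partial_1 f(h_{t+k+1},c_{t+k+1})\,J_{t+k} + \partial_2 f(h_{t+k+1},c_{t+k+1})\cdot\bigl[\sum_{i=1}^{t+k+1}\alpha_{i,t+k+1}Y_{i,t+k+1}\bigr]$, using that $h_{t+j+1}=h_{t+k+1}$ and $J_{t+j}=J_{t+k}$ on the set $\{k=j\}$. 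Defining $D_{k+1,\cdot}^{(t)}$ to be the first ($\alpha$-weighted) factor and $R_{k+1}^{(t)}$ to be the second, one reads off $F_{k+1,j}^{(t)} = \alpha_{t+j+1,t+k+1}D_{k+1,j}^{(t)} + 1_{k=j}R_{k+1}^{(t)}$, which is the stated decomposition; $Y_{i,i'}$ is again the matrix from Lemma~\ref{expr1}.

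The only points requiring care --- which is why the ``hard'' part here is purely notational --- are (i) keeping the index shift consistent, i.e.\ whether the residual-free block is labelled $D_{k+1,j}^{(t)}$ or $D_{k+1,j+1}^{(t)}$ according to whether the index counts the source hidden state or the associated Jacobian, and (ii) placing the indicator $1_{k=j}$ so that both the $\partial_1 f\cdot J$ piece and the $\sum_i\alpha_{i,t+k+1}Y_{i,t+k+1}$ piece get bundled together into $R_{k+1}^{(t)}$, since both of them arise solely from the $\partial s_{t+k+1}/\partial s_{t+k}$ summand in the definition of $F_{k+1,j}^{(t)}$. Once these conventions are fixed, the proposition follows with no further argument.
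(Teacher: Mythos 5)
Your proposal is correct and matches the paper's proof, which likewise derives the proposition as an immediate regrouping of the two formulas in Corollary~\ref{main_corollary} (the paper simply states ``follows straight from Corollary~\ref{main_corollary}''). Your extra remarks on the index-labelling of $D$ and on bundling both $k=j$ contributions into $R_{k+1}^{(t)}$ are sound and in fact flag a genuine notational slip in the statement (the definition is given for $D_{k+1,j+1}^{(t)}$ while the decomposition uses $D_{k+1,j}^{(t)}$).
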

\begin{proof}
Follows straight from Corollary \ref{main_corollary}.
\end{proof}
\noindent\rule{\textwidth}{1pt}
\begin{remark}
If we are further assuming that \begin{align}s_t = f(h_t,c_t) = h_t+c_t \end{align} then for all $k\geq 0$, we have 
\begin{align}E_{k}^{(t)} = 1_{k=0}\cdot I + \alpha_{t,t+k} \cdot \left[I+X_{t,t+k}\right]\end{align}
and for all $k\geq j$, we have
\begin{align}
    F_{k+1,j}^{(t)} &= \alpha_{t+j+1,t+k+1} \cdot \left[I+ X_{t+j+1,t+k+1}\right]\cdot J_{t+j} + 1_{k=j}\cdot \left(J_{t+j}+ \left[\sum_{i=1}^{t+k+1}\alpha_{i,t+k+1}Y_{i,t+k+1}\right]\right)
\end{align}

\end{remark}
\begin{proof} This follows directly form corollary \ref{main_corollary} and the observation that \begin{align}\partial_1 f(h_t,c_t) = \partial_2 f(h_t,c_t) = I\end{align}
\end{proof}
\noindent\rule{\textwidth}{1pt}
\begin{remark} \label{dima_attention}
If we are further assuming that \begin{align}e_{j,t}=a(s_{t-1},h_j)= v_a^{T}\cdot\tanh{(W_a s_{t-1}+ U_a h_j)} \end{align}
as done in \cite{attention}, we get that 
\begin{align}
    \frac{\partial e_{j,t}}{\partial h_j} &= v_a^T\cdot \textrm{diag}[1-\tanh^2{(W_a s_{t-1}+U_a h_j)}] \cdot U_a
\end{align}
and
\begin{align}
    \frac{\partial e_{j,t}}{\partial s_{t-1}}&= v_a^T\cdot \textrm{diag}[1-\tanh^2{(W_a s_{t-1}+U_a h_j)}] \cdot W_a
\end{align}
which we can plug into the definitions of $X_{j,k}$ and $Y_{j,k}$ to get explicit expressions for matrices $E_{k'}^{(t)}$ and $F_{k+1,j}^{(t)}$.
\end{remark}
\noindent\rule{\textwidth}{1pt}
\begin{lemma} \label{gradient_alignment_weights}
If, with the assumptions Remark \ref{remark_assumption}, we assume that for all $i,t\geq 1$, we have $e_{i,t} = a(s_{t-1},h_i, \theta)$ depending on some parameter $\theta \in \mathbb{R}^{N\times M}$, then we have
\begin{align}\frac{dL}{d\theta} = \sum_{j,t} \alpha_{j,t} \cdot \frac{dL}{d s_t} \cdot \partial_2 f(h_t,c_t) \cdot h_j \cdot \left[\sum_{i}(1_{i=j}-\alpha_{i,t})\cdot \frac{\partial e_{i,t}}{\partial \theta}\right]\end{align}
\end{lemma}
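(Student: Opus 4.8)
The plan is to expand $\frac{dL}{d\theta}$ by the chain rule, carefully tracking the single place where $\theta$ enters the unrolled computational graph, namely through the alignment logits $e_{i,t} = a(s_{t-1},h_i,\theta)$. Since $\theta$ occurs nowhere else, and since each $e_{i,t}$ depends on $\theta$ directly (with $s_{t-1}$ and $h_i$ treated as upstream quantities whose own $\theta$-dependence is subsumed by the total derivative), the multivariable chain rule gives
\begin{align}\frac{dL}{d\theta} = \sum_{i,t} \frac{dL}{d e_{i,t}}\cdot \frac{\partial e_{i,t}}{\partial \theta},\end{align}
where $\frac{dL}{d e_{i,t}}$ is the total derivative of $L$ with respect to the scalar $e_{i,t}$ along the full unrolled graph and $\frac{\partial e_{i,t}}{\partial\theta}$ is the explicit partial of $a$ in its last argument. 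To make this step rigorous one replaces $\theta$ by an independent copy at each of its occurrences and sums the resulting total derivatives; since each such perturbation only propagates forward, the downstream total derivatives are unaffected.

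Next I would compute $\frac{dL}{d e_{i,t}}$. The logit $e_{i,t}$ feeds into $L$ only through the softmax outputs $\alpha_{1,t},\ldots,\alpha_{t,t}$, so
\begin{align}\frac{dL}{d e_{i,t}} = \sum_{j=1}^t \frac{dL}{d \alpha_{j,t}}\cdot \frac{\partial \alpha_{j,t}}{\partial e_{i,t}} = \sum_{j=1}^t \frac{dL}{d \alpha_{j,t}}\cdot \alpha_{j,t}\,(1_{i=j}-\alpha_{i,t}),\end{align}
using the softmax Jacobian identity $\frac{\partial \alpha_{j,t}}{\partial e_{i,t}} = \alpha_{j,t}(1_{i=j}-\alpha_{i,t})$ already invoked in the proofs of Lemmas \ref{expr1} and \ref{expr2}. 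In turn, $\alpha_{j,t}$ enters $L$ only through $c_t = \sum_j \alpha_{j,t} h_j$, hence through $s_t = f(h_t,c_t)$, which yields
\begin{align}\frac{dL}{d\alpha_{j,t}} = \frac{dL}{d s_t}\cdot \partial_2 f(h_t,c_t)\cdot h_j.\end{align}

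Finally I would substitute these two identities into the first display and reorganize the triple sum over $i,j,t$. The factor $\alpha_{j,t}\,\frac{dL}{ds_t}\,\partial_2 f(h_t,c_t)\,h_j$ does not depend on $i$, so pulling it out of the inner sum over $i$ leaves exactly
\begin{align}\frac{dL}{d\theta} = \sum_{j,t} \alpha_{j,t}\cdot \frac{dL}{d s_t}\cdot \partial_2 f(h_t,c_t)\cdot h_j \cdot \left[\sum_i (1_{i=j}-\alpha_{i,t})\cdot \frac{\partial e_{i,t}}{\partial\theta}\right],\end{align}
which is the claim. The only genuinely delicate point is the bookkeeping of total versus partial derivatives in the first display: because $\theta$ reappears at every time step and the logits themselves depend on earlier states $s_{t-1}$, one must be careful that $\frac{dL}{ds_t}$, $\frac{dL}{d\alpha_{j,t}}$ and $\frac{dL}{de_{i,t}}$ are all read as total derivatives over the full recurrence, so that no path from $\theta$ to $L$ is dropped or double-counted; the independent-copies argument settles this. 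Everything else reduces to the softmax-derivative computation and a relabelling of summation indices.
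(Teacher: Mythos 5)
Your proposal is correct and follows essentially the same route as the paper's proof: replace $\theta$ by independent copies at each occurrence (the paper's $\theta^{(i,t)}$), apply the softmax Jacobian identity $\frac{\partial \alpha_{j,t}}{\partial e_{i,t}} = \alpha_{j,t}(1_{i=j}-\alpha_{i,t})$, compute $\frac{dL}{d\alpha_{j,t}} = \frac{dL}{ds_t}\cdot\partial_2 f(h_t,c_t)\cdot h_j$ via the path through $c_t$, and reorganize the triple sum. The only cosmetic difference is that you isolate $\frac{dL}{de_{i,t}}$ as an intermediate quantity, whereas the paper writes out the full chain in one display; the computations are identical.
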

\begin{proof} If we denote $\theta^{(i,t)}$ to be the parameter for $e_{i,t}$, then we have 
\begin{align}
    \frac{dL}{d\theta} &= \sum_{i,t}\frac{dL}{d\theta^{(i,t)}}\\
    &= \sum_{i,j,t} \frac{dL}{d \alpha_{j,t}}\cdot \frac{\partial \alpha_{j,t}}{\partial e_{i,t}}\cdot \frac{\partial e_{i,t}}{\partial \theta^{(i,t)}}\\
    &= \sum_{i,j,t} \alpha_{j,t}(1_{i=j}-\alpha_{i,t})\cdot \frac{dL}{d \alpha_{j,t}}\cdot \frac{\partial e_{i,t}}{\partial \theta}
\end{align}
where \begin{align}\frac{dL}{d\alpha_{j,t}}= \frac{dL}{ d s_t}\cdot \frac{\partial s_t}{\partial c_t} \cdot \frac{\partial c_t}{\partial \alpha_{j,t}} = \frac{dL}{ d s_t}\cdot \partial_2 f(h_t,c_t) \cdot h_j\end{align}
Hence 
\begin{align}
    \frac{dL}{d\theta} &= \sum_{i,j,t} \alpha_{j,t}(1_{i=j}-\alpha_{i,t})\cdot \frac{dL}{ d s_t}\cdot \partial_2 f(h_t,c_t) \cdot h_j\cdot \frac{\partial e_{i,t}}{\partial \theta}\\
    &= \sum_{j,t} \alpha_{j,t} \cdot \frac{dL}{d s_t} \cdot \partial_2 f(h_t,c_t) \cdot h_j \cdot \left[\sum_{i}(1_{i=j}-\alpha_{i,t})\cdot \frac{\partial e_{i,t}}{\partial \theta}\right]
\end{align}
\end{proof}
\noindent\rule{\textwidth}{1pt} 

\begin{lemma} \label{gradient_connectivity}
Let us recall that for all $t\geq 0$, we have  \begin{align} h_{t+1}=\phi(\underbrace{V s_{t}+ U x_{t+1}+b}_{=a_t}) \end{align}
where $\phi$ is a non-linear activation function, $V\in \mathbb{R}^{n\times n}$, $U \in \mathbb{R}^{n\times m}$ and $b\in \mathbb{R}^n$. Then we have that
\begin{align}\left[\frac{d L}{ d V}, \frac{d L}{d U}, \frac{d L}{d b}\right] = \sum_{t=1}^T [s_{t-1},x_t,1]\cdot \frac{d L}{d h_{t}}\cdot \textrm{diag}(\phi'(a_t))\end{align}
\end{lemma}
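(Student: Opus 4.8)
The plan is to compute the total derivatives $\frac{dL}{dV}$, $\frac{dL}{dU}$, $\frac{dL}{db}$ by summing contributions over all time steps $t$, using the fact that $V$, $U$, $b$ enter the computation only through the pre-activations $a_t = V s_{t-1} + U x_t + b$ at each step. Concretely, I would introduce a "per-step copy" $\theta^{(t)}$ of each shared parameter, so that $a_t$ depends only on $\theta^{(t)}$, and then write $\frac{dL}{d\theta} = \sum_{t=1}^T \frac{dL}{d\theta^{(t)}}$ by the multivariable chain rule (this is exactly the same bookkeeping device used in Lemma \ref{gradient_alignment_weights}). This sidesteps the subtlety that $s_{t-1}$ itself depends on $V,U,b$ through earlier steps: that dependence is already fully accounted for inside the total derivative $\frac{dL}{dh_t}$.

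Next I would apply the chain rule along the single edge $a_t \to h_t$. Since $h_t = \phi(a_t)$ componentwise, we have $\frac{\partial h_t}{\partial a_t} = \mathrm{diag}(\phi'(a_t))$, hence
\begin{align}
\frac{dL}{da_t} = \frac{dL}{dh_t}\cdot \mathrm{diag}(\phi'(a_t)).
\end{align}
Then for the $V$-block, since $a_t = V s_{t-1} + \ldots$, the derivative of the scalar $L$ with respect to the matrix $V$ (restricted to its appearance at step $t$) is the outer product $s_{t-1}\cdot \frac{dL}{da_t}$, using the identity $\frac{\partial (Vs)_i}{\partial V_{jk}} = 1_{i=j} s_k$, which makes $\frac{dL}{dV^{(t)}} = s_{t-1} \frac{dL}{da_t}$ as an $n\times n$ matrix (with the row-vector convention for $\frac{dL}{da_t}$ established in the notational section). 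Identical reasoning gives $\frac{dL}{dU^{(t)}} = x_t \frac{dL}{da_t}$ and $\frac{dL}{db^{(t)}} = 1\cdot \frac{dL}{da_t}$ (treating $b$'s coefficient as the constant vector $1$). Stacking these three blocks and summing over $t$ yields exactly
\begin{align}
\left[\frac{dL}{dV}, \frac{dL}{dU}, \frac{dL}{db}\right] = \sum_{t=1}^T [s_{t-1}, x_t, 1]\cdot \frac{dL}{dh_t}\cdot \mathrm{diag}(\phi'(a_t)).
\end{align}

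The only genuinely delicate point — and the one I would be most careful to state explicitly — is the per-step unrolling: justifying that differentiating with respect to the shared parameter equals the sum of the partial contributions at each step, while the recurrent propagation of gradients through $s_{t-1}$ is entirely absorbed into the total derivative $\frac{dL}{dh_t}$ (as opposed to a partial derivative). Everything else is a routine matrix-calculus verification of the outer-product form and a bit of transpose/row-vector-convention care to match the stated formula. I would therefore keep the proof short: set up the $\theta^{(t)}$ decomposition, invoke the chain rule through $a_t$, identify the outer products, and sum.
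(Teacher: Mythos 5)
Your proposal is correct and follows essentially the same route as the paper's proof: the paper likewise introduces per-step copies $V^{(t)},U^{(t)},b^{(t)}$, sums $\frac{dL}{d\theta}=\sum_t \frac{dL}{d\theta^{(t)}}$, identifies the outer products $[s_{t-1},x_t,1]\cdot\frac{dL}{da}$, and expands $\frac{dL}{da}=\frac{dL}{dh_t}\cdot\mathrm{diag}(\phi'(\cdot))$ via the chain rule through the pre-activation. The only difference is a trivial relabeling of the pre-activation index (the paper writes $a_{t-1}$ for what you call $a_t$), which does not affect the argument.
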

\begin{proof} Let us denote $V^{(t)}, U^{(t)}, b^{(t)}$ the matrices $V,U,b$ of $a_{t-1}$ respectively, then 
\begin{align}
    \left[\frac{d L}{d V},\frac{d L}{d U}, \frac{d L}{d b} \right] &= \sum_{t} \left[\frac{d L}{d V^{(t)}},\frac{d L}{d U^{(t)}}, \frac{d L}{d b^{(t)}}\right]\\
    &= \sum_t [s_{t-1},x_t,1]\cdot \frac{d L}{d a_{t-1}}\\
    &= \sum_t [s_{t-1},x_t,1]\cdot \frac{d L}{d h_t}\cdot \frac{d h_t}{d a_{t-1}}\\
    &= \sum_t [s_{t-1},x_t,1]\cdot \frac{d L}{d h_t}\cdot \textrm{diag}(\phi'(a_{t-1}))
\end{align}    
\end{proof}
\noindent\rule{\textwidth}{1pt} 
\begin{remark} \label{relevant_states_remark}
Combining the fact that $\frac{dL}{dh_t} = \frac{dL}{ds_T}\frac{d s_T}{d h_t}$, the results from propositions \ref{main_prop} and \ref{prop_D}, with lemma \ref{gradient_connectivity}, we see that attention weights $\alpha_{i,t}$ which are very close to $0$, do not contribute to the gradient and the learning of $V,U$ and $b$. \\

Similarly, it follows directly from lemma \ref{gradient_alignment_weights}, that attention weights $\alpha_{i,t}$ which are very close to $0$, do not contribute to the gradient and the learning of any parameters $\theta$ of the alignment function $e_{i,t} = a(s_{t-1},h_i,\theta)$. In case we have an alignment function as in remark \ref{dima_attention}, these parameters are $W_a, U_a$ and $v_a$.\\

If we have the case where one state $h_i$ is such that all attention weights $\alpha_{i,t} \approx 0$ for all $t\geq i$, then we can see that $h_i$ does not contribute to the gradient and learning to any parameters be it parameters from the recurrence or the alignment function.\\

In practice we have observed that in the majority of tasks, most states $h_i$ fall in either of two categories:
\begin{itemize}
    \item $\alpha_{i,t}$ is sufficiently bounded away from $0$ for most $t\geq i$, and thus contributes to learning. This is what we call a "relevant state". 
    \item $\alpha_{i,t} \approx 0$ for almost all $t\geq i$, and thus doesn't contribute much to learning, and the gradient can be approximated by assuming $\alpha_{i,t} = 0$ for all $t\geq i$. This is what he call a "non-relevant state".
\end{itemize}
This observation is what lead us to the intuition that we can approximate the gradient, by decomposing it via proposition \ref{main_prop}, into gradient paths involving only skip connections between "relevant states". 
\end{remark}
\noindent\rule{\textwidth}{1pt}
\subsection{Uniform attention case}
\label{uniform_attention_case}
\begin{remark} \label{gen_assumptions}
In this subsection, we are going to assume:
\begin{itemize}
    \item no non-linearity in the hidden-to-hidden connection: $J_t = V$ for all $t$.
    \item all assumptions from Remark \ref{remark_assumption}.
    \item uniform attention: $\alpha_{i,t}=1/t$ for all $t\geq 1$.
\end{itemize}

\end{remark}
\noindent\rule{\textwidth}{1pt} 
\subsubsection{Overview}
\begin{remark}\label{unif_assumptions} Recalling corollary \ref{main_corollary}, together the main proposition \ref{main_prop} form last section, we can hope to simplify these expressions using the new assumptions from the previous remark \ref{gen_assumptions}. Recalling expression from lemma \ref{expr1} and \ref{expr2}:
\begin{align}
    X_{j,t} &= \left(h_j - \sum_{i=1}^t h_i \alpha_{i,t}\right)\cdot \frac{\partial e_{j,t}}{\partial h_j}\\
    &= \left(h_j - \frac{1}{t}\sum_{i=1}^t h_i \right)\cdot \frac{\partial e_{j,t}}{\partial h_j}
\end{align}
Hence, for our calculations we are going to assume that $\left(h_j - \frac{1}{t}\sum_{i=1}^t h_i \right) \approx 0$, and thus $X_{j,t} \approx 0$ for all $1\leq j \leq t$. Similarly,
\begin{align}
    \sum_{i=1}^t \alpha_{i,t}Y_{i,t} &=
    \sum_{i=1}^t \alpha_{i,t} h_i\cdot \left(\frac{\partial e_{i,t}}{\partial s_{t-1}}-\sum_{j=1}^t \alpha_{j,t} \cdot \frac{\partial e_{j,t}}{\partial s_{t-1}}\right)\\
    &= \frac{1}{t}\sum_{i=1}^t  h_i\cdot \left(\frac{\partial e_{i,t}}{\partial s_{t-1}}-\sum_{j=1}^t \frac{1}{t} \cdot \frac{\partial e_{j,t}}{\partial s_{t-1}}\right)\\
    &= \frac{1}{t}\sum_{i=1}^t  h_i\cdot \frac{\partial e_{i,t}}{\partial s_{t-1}}
    - \frac{1}{t}\sum_{j=1}^t \left(\frac{1}{t}\sum_{i=1}^t h_i\right)\cdot \frac{\partial e_{j,t}}{\partial s_{t-1}}\\
    &= \frac{1}{t}\sum_{i=1}^t  h_i\cdot \frac{\partial e_{i,t}}{\partial s_{t-1}}
    - \frac{1}{t}\sum_{i=1}^t \left(\frac{1}{t}\sum_{j=1}^t h_j\right)\cdot \frac{\partial e_{i,t}}{\partial s_{t-1}}\\
    &= \frac{1}{t}\sum_{i=1}^t \left(h_i - \frac{1}{t}\sum_{j=1}^t h_j\right) \cdot \frac{\partial e_{i,t}}{\partial s_{t-1}}\\
    &\approx 0 
\end{align}
Recalling the expression from corollary \ref{main_corollary} and that $f(h_t,c_t)= h_t+c_t$ by remark \ref{remark_assumption}, and that $J_t = V$ for all $t$, this will give for all $k'\geq 0$
\begin{align}
    E_{k'}^{(t)} &= \left(\frac{1}{t+k'}+1_{k'=0}\right)\cdot \textrm{I}
\end{align}
and for all $k\geq j$, we get
\begin{align}
 F_{k+1,j}^{(t)} &= \left(\frac{1}{t+k+1} + 1_{k=j}\right)\cdot V   
\end{align}
Hence by recalling proposition \ref{main_prop}, the main expression of interest becomes 
\begin{align}\frac{d s_{t+k}}{d h_t} = \sum_{s=0}^k \bar{\xi}_{0:k}^{(t)}(s) = \sum_{s=0}^k V^s \cdot \chi_{0:k}^{(t)}(s)\end{align}

where 
\begin{align}\chi_{0:k}^{(t)}(s)&\myeq \sum_{0\leq i_1<\ldots <i_s<k}\left(\frac{1}{t+k}+ 1_{k-i_{s}=1}\right)\cdot\left(\frac{1}{t+i_s}+ 1_{i_{s}-i_{s-1}=1}\right)\cdot \ldots\\ 
&\ldots\cdot \left(\frac{1}{t+i_2}+1_{i_2-i_1=1}\right)\cdot \left(\frac{1}{t+i_1}+1_{i_1=0}\right)\end{align}
\end{remark}
\noindent\rule{\textwidth}{1pt}
\begin{remark}
The goal is thus to have a good estimation of the terms \begin{align}\chi_{0:k}^{(t)}(s)\end{align} in order to then find an asymptotic estimation for \begin{align}\frac{d s_{t+k}}{d h_t} = \sum_{s=0}^k V^s \cdot \chi_{0:k}^{(t)}(s) \end{align} as $k\rightarrow \infty$. In order to do so, we will adopt the following strategy:\\

\paragraph{Step 1.} Estimate the expression 
\begin{align}\omega_{l:k}^{(t)}(s) \myeq  \sum_{l\leq i_1<\ldots<i_s<k} \frac{1}{t+i_s}\cdot \frac{1}{t+i_{s-1}}\cdot \ldots \cdot \frac{1}{t+i_2}\cdot \frac{1}{t+i_1}\end{align}
for all $s\geq 1$. This will be done in sub-subsection \ref{subsub_omega}.\\

\paragraph{Step2.} Estimate the expression

\begin{align}\theta_{l:k}^{(t)}(s) &\myeq \sum_{l\leq i_1<\ldots<i_s<k} \left(\frac{1}{t+i_s}+1_{i_s-i_{s-1}=1}\right)\cdot \left(\frac{1}{t+i_{s-1}}+1_{i_{s-1}-i_{s-2}=1}\right)\cdot \ldots\\ 
&\ldots \cdot \left(\frac{1}{t+i_2}+1_{i_2-i_1=1}\right)\cdot \left(\frac{1}{t+i_1}+1_{i_1=0}\right)\end{align}
for all $s\geq 1$, because as we will see the expression  $\theta_{l:k}^{(t)}(s)$ can be decomposed into $\omega_{l':k'}^{(t)}(s')$ expressions for $s\geq s'\geq 1$. This will be done in sub-subsection \ref{subsub_theta}.\\

\paragraph{Step 3.} The final step will consist in putting the results from the two previous sub-subsections together, and getting a final asymptotic estimate for $\frac{d s_{t+k}}{d h_t}$ as $k \rightarrow \infty$, by noting that 

\begin{align}\chi_{0:k}^{(t)}(s) &= \frac{1}{t+k}\cdot \theta_{0:k}^{(t)}(s)+\frac{1}{t+k-1}\cdot\theta_{0:k-1}^{(t)}(s-1)+\ldots \\
&\ldots+\frac{1}{t+k-s+1}\cdot \theta_{0:k-s+1}^{(t)}(1) + \frac{1}{t+k-s}+1_{k=s} \end{align}
This will be treated in sub-subsection \ref{subsub_all}.
\end{remark}
\noindent\rule{\textwidth}{1pt} 
\subsubsection{Estimating $\omega$} \label{subsub_omega}
\begin{remark}
In this sub-subsection we are going to estimate $\omega_{0:k}^{(t)}(s)$, which is a sum of products of $s$ distinct factors. The idea will be to start from the expression 
\begin{align}\left(\frac{1}{t}+\frac{1}{t+1}+\ldots + \frac{1}{t+k-1}\right)^s \end{align} and substract all products containing at least two identical factors, followed by a division by $s!$.\\

This approach will be similar in spirit to the inclusion-exclusion principle, with the only difference that the desired term will not computed directly, but instead one first establishes a recursive formula using $\omega_{0:k}^{(t)}(s')$ with $s'\leq s$.\\ 

Solving this recursive formula will enable us to express $\omega_{0:k}^{(t)}(s)$ only in terms of $(\frac{1}{t}+\frac{1}{t+1}+\ldots + \frac{1}{t+k-1})$. In fact, $\omega_{0:k}^{(t)}(s)$ will be a polynomial of degree $s$ in $(\frac{1}{t}+\frac{1}{t+1}+\ldots + \frac{1}{t+k-1})$.\\

We adopt this approach, because we have a very good estimate for \begin{align}\frac{1}{t}+\frac{1}{t+1}+\ldots + \frac{1}{t+k-1}\end{align} Namely, we know that for all $n$, we have 

\begin{align} 1+\frac{1}{2}+\ldots+\frac{1}{n-1}+\frac{1}{n} = \ln{n} + \gamma + \varepsilon_n \leq \ln{n} + 1\end{align}
where $\gamma >\frac{1}{2}$ is the Euler-Mascheroni constant and $\varepsilon_n$ behaves asymptotically as $\frac{1}{2n}$. In other words, 
\begin{align}
    \frac{1}{t}+\frac{1}{t+1}+\ldots + \frac{1}{t+k-1} &= \ln{\left(\frac{t+k-1}{t-1}\right)}+\varepsilon_{t+k-1}-\varepsilon_{t-1}\\ 
    &= \ln{\left[\frac{t+k-1}{t-1}\cdot \exp{(\varepsilon_{t+k-1}-\varepsilon_{t-1})}\right]}\\ 
    &=  \ln{\beta_{t-1,t+k-1}}
\end{align}
where $\beta_{l,l'} \myeq \frac{l'}{l}\cdot \exp{(\varepsilon_{l'}-\varepsilon_{l})}$. In order to reinforce the intuition here, let us imagine that $T=t+k$, then \begin{align}\ln{\beta_{t-1,t+k-1}} \sim \ln{T}\end{align} as $T\rightarrow \infty$. Hence we should expect $\omega_{0:k}^{(t)}(s)$ to behave asymptotically as a polynomial of degree $s$ in $\ln{T}$.

Let us emphasize that we would like to express $\omega_{0:k}^{(t)}(s)$ with as much precision as possible (i.e. not omitting the monomials in $\ln{T}$ of degree less than $s$), since we would like to later on use this estimate in subsequent steps when summing multiple $\omega_{0:k}^{(t)}(s)$ terms over $s$.\\ 

In order to further ease notation, we will simply write $\omega(s)$ for $\omega_{0:k}^{(t)}(s)$, whenever there is no ambiguity.\\

Finally, for this sub-subsection only we will use the following notation

\begin{align}S_l \myeq \frac{1}{t^l}+\frac{1}{(t+1)^l}+ \ldots + \frac{1}{(t+k-1)^l}\end{align}
for all $l\geq 1$, and keeping in mind that $S_l$ converges as $k\rightarrow \infty$, for all $l\geq 2$.\\ 

\end{remark}
\noindent\rule{\textwidth}{1pt} 

\begin{remark}
Let us now build a first intuition on how to apply an inclusion-exclusion-like principle in order to calculate $\omega(s)$ for small $s$.\\

For $\underline{s=1}$: \begin{align}\omega(1) = S_1\end{align}
For $\underline{s=2}$: \begin{align}\omega(2)=\frac{1}{2!}\left(S_1^2-S_2\right)\end{align}
Here we expand $S_1^2$, then substract the sum of products of doubles $S_2$, followed by a division of $2!=2$ to divide out the number of permutations. \\

For $\underline{s=3}$: first we need to substract the sum of products of triples $S_3$, and then the sum products where exactly two factors are identical $S_2\cdot \omega(1)-S_3$. The latter appears ${3\choose 2,1} = \frac{3!}{2!1!}=3$ times in the expansion of $S_1^3$. Similarly, we need to divide out the number of permutations $3!$. Hence 
\begin{align}\omega(3) = \frac{1}{3!} \left[S_1^3-S_3-3\cdot (S_2\cdot \omega(1)-S_3)\right] = \frac{S_1^3}{3!}-\frac{1}{2}S_2\cdot \omega(1)+\frac{1}{3}S_3\end{align}

Let us form now on denote $(3)$ for the sum of products of triples, and $(2,1)$ the sum of products where exactly two factors are the same.\\ 

More generally we would denote \begin{align}(j_1,j_2,\ldots,j_k)\end{align} with $j_1\geq j_2 \geq \ldots \geq j_k\geq 1$, to denote the sum of products where one factor appears exactly $j_1$ times, another factor (distinct from the previous one!) appears exactly $j_2$ times, and another factor (distinct from the previous two!) appears exactly $j_3$ times, etc. This leaves us with exactly $k$ distinct factors each having multiplicity $j_1,j_2,\ldots,j_k$ respectively. This sum appears with 

\begin{align}{s\choose j_1,j_2,\ldots,j_k}= \frac{s!}{j_1!\cdot j_2!\cdot \ldots\cdot j_k!}\end{align}
repetitions in the expansion of $S_1^s$, where $s=j_1+j_2+\ldots+j_k$.\\

For $\underline{s=4}$: when expanding $S_1^4$, we need to take into account 
\begin{itemize}
    \item $(4)=S_4$ with ${4\choose 4} = \frac{4!}{4!} = 1$ repetition. 
    \item $(3,1) = S_3\cdot \omega(1)-S_4$ with ${4\choose 3,1} = \frac {4!}{3!\cdot 1!} = 4$ repetitions.
    \item $(2,2) = S_2^2-S_4$ with ${4 \choose 2,2} = \frac{4!}{2!\cdot 2!}= 6$ repetitions.
    \item $(2,1,1) = S_2\cdot \omega(2)- (3,1) = S_2\cdot \omega(2)- S_3\cdot \omega(1)+S_4$ with ${4 \choose 2,1,1} = \frac{4!}{2!\cdot 1!\cdot 1!}= 12$ repetitions.
\end{itemize}

Hence we get 
\begin{align}
    \omega(4) &= \frac{1}{4!}[S_1^4-S_4-4\cdot (S_3\cdot \omega(1)-S_4)-6\cdot(S_2^2-S_4) \\
    &-12\cdot(S_2\cdot \omega(2)- S_3\cdot \omega(1)+S_4)]\\
    &= \frac{1}{4!}[S_1^4-4\cdot S_3\cdot \omega(1)+4\cdot S_4-6\cdot S_2^2+6\cdot S_4-12 \cdot S_2 \omega(2) \\
    &+12\cdot S_3\cdot \omega(1)-12\cdot S_4 -S_4]\\
    &= \frac{1}{4!}\left[S_1^4-12\cdot S_2\cdot \omega(2)+ 8\cdot S_3\cdot \omega(1)-3\cdot (S_4+S_2^2)\right]\\
    &= \frac{S_1^4}{4!} -\frac{S_2}{2}\omega(2)+ \frac{S_3}{3}\omega(1)-\frac{(S_4+2\cdot S_2^2)}{8}
\end{align}
Notice how, as we progress with higher values of $s$, we build a recursive formula in $\omega(s')$ with $s'\leq s$.

\paragraph{Intuition.} Note that the coefficient of $\omega(2)$ for $s=4$, is the same as the coefficient for $\omega(1)$ for $s=3$, and is the same as the 'constant term' for $s=2$. Similarly, the coefficient of $\omega(1)$ for $s=4$ is the same as the 'constant term' for $s=3$. (By convention here, we don't consider the terms $\frac{S_1^s}{s!}$ to not be part of the 'constant term'.)

Hence, in the recursive formula for $\omega(s)$, we would expect the coefficient of $\omega(s')$  with $s'<s$ to be equal to the 'constant term' in the formula for $\omega(s-s')$. 

\paragraph{Notation.} For all $s> l\geq 0$, let us denote $\delta_{s,l}$ to be the coefficient of the term $\omega(l)$ in the recursive formula for $\omega(s)$. By convention, we denote $\delta_{s,0}$ for the 'constant term' in the recursive formula for $\omega(s)$. Hence for all $s\geq 1$, we have

\begin{align}\omega(s) = \frac{S_1^s}{s!}+ \delta_{s,s-1}\cdot \omega(s-1)+\delta_{s,s-2}\cdot \omega(s-2)+\ldots+ \delta_{s,1}\cdot \omega(1)+ \delta_{s,0}\end{align}

\paragraph{Hypothesis.} The hypothesis will thus rewrite as 

\begin{align}\delta_{s,l} = \delta_{s-l,0}\end{align}
for all $s>l\geq 0$, which will prove by induction on $s$ in the next lemma.\\

\end{remark}
\noindent\rule{\textwidth}{1pt}
\begin{lemma}
Let $s\geq 1$. Then 
\begin{align}\omega(s) = \frac{S_1^s}{s!}+ \delta_{1,0}\cdot \omega(s-1)+\delta_{2,0}\cdot \omega(s-2)+\ldots+ \delta_{s-1,0}\cdot \omega(1)+ \delta_{s,0} \end{align}
\end{lemma}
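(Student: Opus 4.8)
The plan is to prove the identity $\delta_{s,l} = \delta_{s-l,0}$ for all $s > l \geq 0$ by strong induction on $s$, since the claimed lemma is exactly the recursion for $\omega(s)$ rewritten once this identity is established. First I would set up the bookkeeping: for each composition type $(j_1,j_2,\ldots,j_r)$ with $j_1\geq j_2\geq\ldots\geq j_r\geq 1$ and $j_1+\ldots+j_r=s$, the term $(j_1,\ldots,j_r)$ (the sum of products using exactly $r$ distinct factors with the prescribed multiplicities) appears $\binom{s}{j_1,\ldots,j_r}$ times in the expansion of $S_1^s$. Then $\omega(s) = \frac{1}{s!}\left[S_1^s - \sum_{(j_1,\ldots,j_r)\neq (1,1,\ldots,1)} \binom{s}{j_1,\ldots,j_r}\,(j_1,\ldots,j_r)\right]$, because the only composition type contributing $s$ distinct factors each with multiplicity $1$ is $(1,\ldots,1)$, which corresponds to $s!\cdot\omega(s)$.

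The next step is to express each mixed term $(j_1,\ldots,j_r)$ recursively. If exactly one part exceeds $1$, say we peel off a factor of multiplicity $j_1 > 1$, then $(j_1,\ldots,j_r) = S_{j_1}\cdot \omega(s-j_1) - (\text{terms where that factor coincides with one already counted})$, and unwinding this telescoping relation shows every $(j_1,\ldots,j_r)$ can be written as an integer combination of products $S_{a_1}S_{a_2}\cdots S_{a_p}\cdot \omega(m)$ with $a_1,\ldots,a_p\geq 2$ and $m = s - (a_1+\ldots+a_p)$. Collecting the coefficient of $\omega(l)$ across all these expansions gives $\delta_{s,l}$ as a sum over multisets $\{a_1,\ldots,a_p\}$ of integers $\geq 2$ summing to $s-l$, weighted by combinatorial factors that depend only on $\{a_1,\ldots,a_p\}$ and their sum — crucially, \emph{not} on $s$ or $l$ individually, only on $s-l$. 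This is precisely the statement $\delta_{s,l}=\delta_{s-l,0}$, and the induction hypothesis on smaller values of $s$ lets me treat the lower-order $\omega$ terms uniformly so that the coefficient reorganization goes through.

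The main obstacle is controlling the combinatorics of the multinomial coefficients through the telescoping: one must verify that when the contributions from all composition types $(j_1,\ldots,j_r)$ are regrouped by the "partition into blocks of size $\geq 2$" that governs the $S$-factors, the $\binom{s}{j_1,\ldots,j_r}$ weights combine into something depending only on $s-l$. The clean way to see this is to note that choosing a composition type and then a chain of coincidences is the same as first choosing which multiset of $\omega$-less blocks (each of size $\geq 2$) to split off — and the number of ways to embed such blocks into $s$ positions, divided by the $s!$ normalization, leaves a factor $\frac{1}{(\text{block sizes})!}$ times a factor depending only on the remaining $l$ positions, which is exactly the $\omega(l)$-part and factors out. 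I would verify the base cases $s=1,2,3,4$ directly (already computed in the preceding remark: $\delta_{2,0}=-S_2/2$, $\delta_{3,0}=S_3/3$, $\delta_{4,0}=-(S_4+2S_2^2)/8$, and $\delta_{3,1}=\delta_{2,0}$, $\delta_{4,1}=\delta_{3,0}$, $\delta_{4,2}=\delta_{2,0}$), which both anchor the induction and confirm the indexing conventions.
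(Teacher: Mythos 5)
Your proposal is correct and follows essentially the same route as the paper: both reduce the lemma to the identity $\delta_{s,l}=\delta_{s-l,0}$, prove it by induction on $s$ with the base cases $s\leq 4$ checked by hand, and both hinge on the same key observation that the normalized multinomial weight $\binom{s}{j_1,\ldots,j_k}/s! = 1/(j_1!\cdots j_k!)$ is insensitive to parts equal to $1$. The only organizational difference is that the paper appends a single unit part via the map $(j_1,\ldots,j_k)\mapsto(j_1,\ldots,j_k,1)$ to obtain $\delta_{s+1,r'+1}=\delta_{s,r'}$ one step at a time, whereas you strip all $l$ unit parts at once and regroup by the multiset of parts $\geq 2$ summing to $s-l$ (note only that the total weight of a monomial such as $S_4$ in $\delta_{4,0}$ aggregates contributions from several partitions, e.g.\ $(4)$, $(3,1)$, $(2,2)$, $(2,1,1)$, so it is not simply $1/\prod a_i!$); both versions leave the same structural sub-claim about how the $\omega$-expansion of a mixed term shifts under appending a unit part at a comparable heuristic level of detail.
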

\begin{proof}
Let us prove by induction on $s$ that for all $s> l\geq 0$, we have \begin{align}\delta_{s,l}= \delta_{s-l,0}\end{align}.
We already verified the cases $s=1,2,3,4$ in the previous remark. Thus let us suppose the induction hypothesis is true for $s$, and consider the mapping 
\begin{align}\Upsilon: (j_1,j_2,\ldots,j_k) \mapsto (j_1,j_2,\ldots,j_k,1)\end{align}
where $j_1\geq j_2\geq\ldots \geq j_k\geq 1$ and $s= j_1+j_2+\ldots+j_k$, mapping a partition of $s$ onto a partition of $s+1$.

If we suppose that $(j_1,j_2,\ldots,j_k)$ consists of exactly $r$ $1$'s, then we can write 
\begin{align}(j_1,j_2,\ldots,j_k) = c_r \cdot \omega(r)+ c_{r-1}\cdot \omega(r-1)+\ldots +c_1\cdot  \omega(1)+ c_0\end{align} for some coefficients $c_r,c_{r-1},\ldots,c_1,c_0$, and with \begin{align}{s \choose j_1,j_2,\ldots,j_k}= \frac{s!}{j_1!\cdot j_2\cdot \ldots \cdot j_k!}\end{align} repetitions in the expansion of $S_1^s$.

The contribution of $(j_1,j_2,\ldots,j_k)$ to the coefficient $\delta_{s,r'}$ of $\omega(r')$ with $r'\leq r<s$, in the final recursive formula of $\omega(s)$ will be \begin{align}\frac{c_{r'}}{j_1!\cdot j_2!\cdot \ldots \cdot j_k!}\end{align} (keeping in mind that we are dividing by $s!$ after having done all the substractions from $S_1^s$). 

Meanwhile, 
\begin{align}(j_1,j_2,\ldots,j_k,1) = c_r\cdot  \omega(r+1) + c_{r-1}\cdot \omega(r)+ \ldots + c_1 \cdot \omega(2)+ c_0\cdot \omega(1)+ \tilde{c_0} \end{align}
for some coefficient $\tilde{c_0}$, with \begin{align}{s+1 \choose j_1,j_2,\ldots,j_k,1}= \frac{(s+1)!}{j_1!\cdot j_2\cdot \ldots \cdot j_k!}\end{align} repetitions in the expansion of $S_1^{s+1}$.

The contribution of $(j_1,j_2,\ldots,j_k,1)$ to the coefficient $\delta_{s+1,r'+1}$ of $\omega(r'+1)$ with $r'\leq r<s$, in the final recursive formula of $\omega(s+1)$ will be \begin{align}\frac{c_{r'}}{j_1!\cdot j_2!\cdot \ldots \cdot j_k!}\end{align} (keeping in mind that we are dividing by $(s+1)!$ after having done all the substractions from $S_1^{s+1}$).

Conversely, the coefficient $\delta_{s+1,r'+1}$ only receives contributions from partitions of $(s+1)$ having at least $(r'+1)$ $1$'s, which correspond exactly to the contributions from the partitions of $s$ having at least $r'$ $1$'s. Hence 
\begin{align}\delta_{s+1,r'+1} = \delta_{s,r'}\end{align} Then by the induction hypothesis, we have $\delta_{s,r'} = \delta_{s-r',0}$. In other words
\begin{align}\delta_{s+1,r'+1} = \delta_{s-r',0}\end{align}
which completes the proof by induction. \\
\end{proof}
\noindent\rule{\textwidth}{1pt}
\begin{remark}
Note that all the coefficients $\delta_{s,l}$ consist of linear combination of products with factors equal to $S_j$ with $j\geq 2$, which are known to converge as $T\rightarrow \infty$. Thus those can be considered constants when doing an asymptotic analysis in the subsequent sub-subsections. Also note that $\delta_{s,s-1}= \delta_{1,0}=0$.\\ 
\end{remark}
\noindent\rule{\textwidth}{1pt} 
\begin{proposition} \label{omega_lemma}
For all $s\geq 1$, we have 
\begin{align}\omega(s)= \sum_{r=0}^s \psi_{s-r}\frac{S_1^r}{r!} \end{align}
where for $l \geq 2$
\begin{align}\psi_l \myeq \sum_{k=1}^{l-1} \sum_{(j_1,j_2,\ldots,j_k)\in \Psi_{l,k}} \delta_{j_1,0}\cdot \ldots \cdot \delta_{j_k,0}\end{align}
with \begin{align}\Psi_{l,k} \myeq \{(j_1,j_2,\ldots,j_k) \textrm{ with } j_1\geq \ldots \geq j_k >1 \textrm{ and } j_1+\ldots+j_k =l\}\end{align}
and where we define $\psi_0=1$ and $\psi_1=0$. 
\end{proposition}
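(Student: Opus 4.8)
The plan is to induct on $s$, feeding the recursion for $\omega(s)$ from the preceding lemma into the inductive hypothesis. That lemma gives, for every $s\geq 1$,
\begin{align}
\omega(s)=\frac{S_1^s}{s!}+\sum_{l=1}^{s-1}\delta_{l,0}\,\omega(s-l)+\delta_{s,0},
\end{align}
and we know $\delta_{1,0}=0$. It is cleanest to set $\omega(0):=1$ (the empty product) and read the trailing $\delta_{s,0}$ as $\delta_{s,0}\,\omega(0)$, so that the recursion becomes $\omega(s)=\tfrac{S_1^s}{s!}+\sum_{l=1}^{s}\delta_{l,0}\,\omega(s-l)$ for all $s\geq 1$. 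The base case $s=0$ is immediate since $\sum_{r=0}^{0}\psi_{0}\tfrac{S_1^0}{0!}=\psi_0=1=\omega(0)$ (one may also check $s=1,2$ by hand against $\omega(1)=S_1$ and $\omega(2)=\tfrac12(S_1^2-S_2)$, using $\psi_1=0$ and $\psi_2=\delta_{2,0}$).

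For the inductive step I would substitute the claimed formula $\omega(s-l)=\sum_{r=0}^{s-l}\psi_{s-l-r}\tfrac{S_1^r}{r!}$ (valid since $s-l<s$) into the recursion, then interchange the two finite sums to collect, for each $r\in\{0,\dots,s-1\}$, the coefficient of $\tfrac{S_1^r}{r!}$; this coefficient is $\sum_{l=1}^{s-r}\delta_{l,0}\,\psi_{s-r-l}$, while the leading term $\tfrac{S_1^s}{s!}$ fills the $r=s$ slot as $\psi_0\tfrac{S_1^s}{s!}$. Comparing with the target $\sum_{r=0}^{s}\psi_{s-r}\tfrac{S_1^r}{r!}$ reduces the entire statement to the single convolution identity
\begin{align}
\psi_n=\sum_{l=1}^{n}\delta_{l,0}\,\psi_{n-l}\qquad (n\geq 1),
\end{align}
together with $\psi_0=1$, i.e. to checking that the combinatorially-defined $\psi_\bullet$ in the statement obeys this recursion.

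That identity follows from the definition $\psi_l=\sum_{k\geq 1}\sum_{(j_1,\dots,j_k)\in\Psi_{l,k}}\delta_{j_1,0}\cdots\delta_{j_k,0}$ by peeling off one distinguished part: deleting a part of size $l$ from a tuple summing to $n$ leaves a tuple summing to $n-l$, and since the $\delta$'s commute the product factors accordingly, so summing over the deleted part reassembles $\sum_{l}\delta_{l,0}\psi_{n-l}$. Equivalently — and this is the shortcut I would actually present — one can bypass the induction with generating functions: put $\Omega(x)=\sum_{s\geq 0}\omega(s)x^s$ and $\Delta(x)=\sum_{l\geq 2}\delta_{l,0}x^l$; the recursion above reads $\Omega(x)=e^{S_1x}+\Delta(x)\Omega(x)$, hence $\Omega(x)=e^{S_1x}/(1-\Delta(x))$, and expanding $1/(1-\Delta(x))=\sum_{k\geq 0}\Delta(x)^k=\sum_{l\geq 0}\psi_l x^l$ and taking the Cauchy product with $e^{S_1x}=\sum_{r\geq 0}\tfrac{S_1^r}{r!}x^r$ gives exactly $[x^s]\,\Omega(x)=\sum_{r=0}^{s}\psi_{s-r}\tfrac{S_1^r}{r!}$.

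The main obstacle is not analytic but purely combinatorial bookkeeping: one must be careful about the passage between ordered compositions into parts $\ge 2$ and (non-increasing) partitions together with the attendant multiplicities, so that the closed form for $\psi_l$ is consistent with the convolution recursion that actually drives the telescoping of the $\tfrac{S_1^r}{r!}$-coefficients; once that matching is pinned down, the interchange of summations and the base case are routine. Note also that every $\delta_{l,0}$ is a fixed linear combination of products of the $S_j$ with $j\geq 2$, which converge as $T\to\infty$, so $\psi_l$ is a constant for the later asymptotic analysis.
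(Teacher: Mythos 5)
Your proposal is correct and follows essentially the same route as the paper: the paper's proof first establishes the convolution identity $\psi_l=\sum_{j=1}^{l}\delta_{j,0}\psi_{l-j}$ by peeling one part off the tuples defining $\psi_l$, and then runs exactly the induction on $s$ you describe, substituting the inductive hypothesis into the recursion from the preceding lemma and swapping the two finite sums. Your generating-function remark is just a repackaging of that same convolution identity, and your caution about compositions versus non-increasing partitions is well placed, since the convolution only closes cleanly under the composition reading of $\Psi_{l,k}$ that the paper's manipulation implicitly adopts.
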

\begin{proof}
For $l\geq 2$, we have 
\begin{align}
    \psi_l &= \sum_{k=1}^{l-1} \sum_{(j_1,j_2,\ldots,j_k)\in \Psi_{l,k}} \delta_{j_1,0}\cdot \ldots \cdot \delta_{j_k,0}\\
    &= \delta_{l,0} + \sum_{k=1}^{l-1} \left(\sum_{j=2}^{l-2} \sum_{(j_2,\ldots,j_k)\in \Psi_{l-j,k-1}} \delta_{j,0}\cdot \delta_{j_2,0}\cdot \ldots \cdot \delta_{j_k,0} \right)\\
    &= \delta_{l,0} + \sum_{j=2}^{l-2}  \left(\sum_{k=1}^{l-j} \sum_{(j_2,\ldots,j_k)\in \Psi_{l-j,k-1}} \delta_{j,0}\cdot \delta_{j_2,0}\cdot \ldots \cdot \delta_{j_k,0} \right)\\
    &= \delta_{l,0} + \sum_{j=2}^{l-2} \delta_{j,0}\cdot \left(\sum_{k=1}^{l-j} \sum_{(j_2,\ldots,j_k)\in \Psi_{l-j,k-1}}  \delta_{j_2,0}\cdot \ldots \cdot \delta_{j_k,0} \right)\\
    &= \delta_{l,0} + \sum_{j=2}^{l-2} \delta_{j,0}\cdot \psi_{l-j}\\
    &= \sum_{j=1}^{l} \delta_{j,0}\cdot \psi_{l-j}\\
\end{align}
In other words, we have shown that for all $l\geq 2$,
\begin{align}\psi_l = \sum_{j=0}^{l-1} \delta_{l-j} \psi_j\end{align}
Let us now prove the proposition by induction on $s$.

The case $\underline{s=1}$ is trivial by the definition of $\psi_0$ and $\psi_1$.

Let us now assume the formula is true for $s$, and let us prove it for $s+1$. By the previous lemma \ref{omega_lemma}, we know that 
\begin{align}
    \omega(s+1) &= \frac{S_1^{s+1}}{(s+1)!}+ \sum_{l=1}^s \delta_{s+1-l,0} \cdot \omega(l) + \delta_{s+1,0}\\
    &= \frac{S_1^{s+1}}{(s+1)!}+ \sum_{l=1}^s \delta_{s+1-l,0} \cdot \left(\sum_{r=0}^l \psi_{l-r} \cdot \frac{S_1^r}{r!}\right) + \delta_{s+1,0}\\
    &= \frac{S_1^{s+1}}{(s+1)!}+ \sum_{l=1}^s \sum_{r=0}^l \delta_{s+1-l,0}\cdot \psi_{l-r} \cdot \frac{S_1^r}{r!} + \delta_{s+1,0}\\
    &= \frac{S_1^{s+1}}{(s+1)!}+ \sum_{l=0}^s \sum_{r=0}^l \delta_{s+1-l,0}\cdot \psi_{l-r} \cdot \frac{S_1^r}{r!}\\
    &= \frac{S_1^{s+1}}{(s+1)!}+ \sum_{r=0}^s\sum_{l=r}^s  \delta_{s+1-l,0}\cdot \psi_{l-r} \cdot \frac{S_1^r}{r!}\\
    &= \frac{S_1^{s+1}}{(s+1)!}+ \sum_{r=0}^s\sum_{l'=0}^{s-r}  \delta_{s+1-r-l',0}\cdot \psi_{l'} \cdot \frac{S_1^r}{r!}\\
    &= \frac{S_1^{s+1}}{(s+1)!}+ \sum_{r=0}^s\psi_{s+1-r} \cdot \frac{S_1^r}{r!}\\
    &=  \sum_{r=0}^{s+1}\psi_{s+1-r} \cdot \frac{S_1^r}{r!}
\end{align}
completing the proof by induction.
\end{proof}
\noindent\rule{\textwidth}{1pt}
\begin{remark}
Hence we have shown that for all $s\geq 1$ 
\begin{align}\omega(s) = \sum_{r=0}^{s} \psi_{s-r}\frac{S_1^r}{r!}= \frac{S_1^s}{s!}+ \sum_{r=0}^{s-2} \psi_{s-r}\frac{S_1^r}{r!}\end{align}
or in other words 
\begin{align}\omega(s) = \frac{(\ln{\beta_{t-1,t+k-1}})^s}{s!}+ \sum_{r=0}^{s-2} \psi_{s-r}\frac{(\ln{\beta_{t-1,t+k-1}})^r}{r!} \sim \frac{(\ln{T})^s}{s!}+ \sum_{r=0}^{s-2} \psi_{s-r}\frac{(\ln{T})^r}{r!}\end{align}
as $t+k=T \rightarrow \infty$, which is roughly the polynomial in $\ln{T}$ of degree $s$ we were anticipating. 
\end{remark}
\noindent\rule{\textwidth}{1pt}
\subsubsection{Estimating $\theta$} \label{subsub_theta}
\begin{remark}
Let us now recall the definition for all $s\geq 1$, 
\begin{align}\theta_{l:k}^{(t)}(s) &\myeq \sum_{l\leq i_1<\ldots<i_s<k} \left(\frac{1}{t+i_s}+1_{i_s-i_{s-1}=1}\right)\cdot \left(\frac{1}{t+i_{s-1}}+1_{i_{s-1}-i_{s-2}=1}\right)\cdot \ldots \\
&\ldots\cdot \left(\frac{1}{t+i_2}+1_{i_2-i_1=1}\right)\cdot  \left(\frac{1}{t+i_1}+1_{i_1=0}\right)\end{align}
which we would like to estimate using $\omega_{l:k}^{(t)}(s)$.

In order to build a first intuition, let us look at how it plays out for small values for $s$. 

\paragraph{Notation.} In this subsection we omit the superscript $(t)$ notation because there is no ambiguity. We will also occasionally do the abuse of notation and assume $\omega_{l:k}(0)=1$ for all $l<k$.

For $\underline{s=1}$, we get 
\begin{align}\theta_{0:k}(1) = 1 + \omega_{0:k}(1)\end{align}

For $\underline{s=2}$, we get 
\begin{align}\theta_{0:k}(2) = 1 + \omega_{1:k}(1)+ \omega_{0:k-1}(1) + \omega_{0:k}(2)\end{align}

In what follows, we will use the following recursive formula quite frequently
\begin{align} \label{recc_theta}
    \theta_{0:k}(s+1) &= \theta_{0:k-1}(s)+ \sum_{j=s}^{k-1} \frac{1}{t+j} \theta_{0:j}(s)
\end{align}
Hence for $\underline{s=3}$, we get 
\begin{align}\theta_{0:k}(3) &= 1 + \omega_{1:k-1}(1)+ \omega_{0:k-2}(1) +
\omega_{2:k}(1) +\omega_{0:k-1}(2)\\
&+ \omega_{1:k}(2) + \sum_{j=2}^{k-1} \frac{\omega_{0:j-1}(1)}{t+j} + \omega_{0:k}(3)\end{align}
Now let us further observe that for all $s \geq 1$ and $0\leq r \leq l$, we have 
\begin{align} \label{ineq_omega}
    \omega_{l+r:k+r}(s) \leq \omega_{l:k}(s) \leq \omega_{l-r:k-r}(s)
\end{align}
This implies that 
\begin{align}1+2\cdot \omega_{1:k}(1)+ \omega_{0:k}(2) \leq \theta_{0:k}(2)\leq 1+2\cdot \omega_{0:k-1}(1)+ \omega_{0:k}(2)\end{align}
and, similarly,
\begin{align}1+3 \cdot \omega_{2:k}(1)+3\cdot \omega_{1:k}(2)+ \omega_{0:k}(3) \leq \theta_{0:k}(3) \leq 1+3 \cdot \omega_{0:k-2}(1)+3\cdot \omega_{0:k-1}(2)+ \omega_{0:k}(3)\end{align}
\paragraph{Hypothesis.} We can thus see the binomial coefficients arising, and we would expect that in general, we have 
\begin{align}\sum_{r=0}^s {s \choose r} \cdot  \omega_{0:k-s+r}(r) \geq \theta_{0:k}(s)\geq \sum_{r=0}^s {s \choose r}\cdot \omega_{s-r:k}(r)\end{align}
\end{remark}
\noindent\rule{\textwidth}{1pt}
\begin{lemma} \label{ineq_theta}
For all $k \geq s \geq 1$, we have 
\begin{align}\sum_{r=0}^s {s \choose r} \cdot  \omega_{0:k-s+r}^{(t)}(r) \geq \theta_{0:k}^{(t)}(s)\geq \sum_{r=0}^s {s \choose r}\cdot \omega_{s-r:k}^{(t)}(r)\end{align}
\end{lemma}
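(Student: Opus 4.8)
The plan is to argue by induction on $s\ge 1$, feeding the induction hypothesis into the recursion (\ref{recc_theta}) for $\theta$ and then collapsing the two resulting pieces with Pascal's identity ${s+1\choose r}={s\choose r}+{s\choose r-1}$. Besides (\ref{recc_theta}) and the monotonicity (\ref{ineq_omega}), the only other ingredient is the elementary recursion $\omega_{l:k}^{(t)}(r+1)=\sum_{j=l+r}^{k-1}\tfrac{1}{t+j}\,\omega_{l:j}^{(t)}(r)$, which is immediate from the definition of $\omega$ (with the convention $\omega_{l:k}^{(t)}(0)=1$); I would record this and the $s=1$ expansion of $\theta$ first, as they form the backbone of the argument. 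The base case $s=1$ is a direct expansion: $\theta_{0:k}^{(t)}(1)=1+\omega_{0:k}^{(t)}(1)$, and both claimed bounds, $\sum_{r=0}^1{1\choose r}\omega_{0:k-1+r}^{(t)}(r)$ and $\sum_{r=0}^1{1\choose r}\omega_{1-r:k}^{(t)}(r)$, collapse to the same expression, so the inequalities hold there with equality.

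For the upper bound in the inductive step, (\ref{recc_theta}) and the induction hypothesis give $\theta_{0:k}^{(t)}(s+1)\le \sum_{r=0}^s{s\choose r}\omega_{0:k-(s+1)+r}^{(t)}(r)+\sum_{j=s}^{k-1}\tfrac{1}{t+j}\sum_{r=0}^s{s\choose r}\omega_{0:j-s+r}^{(t)}(r)$. The first sum is already in the target shape. For the second, I would reindex $j'=j-s+r$ and use $s\ge r$ to bound $\tfrac{1}{t+j'+s-r}\le\tfrac{1}{t+j'}$, so that $\sum_{j=s}^{k-1}\tfrac{1}{t+j}\,\omega_{0:j-s+r}^{(t)}(r)\le\omega_{0:k-(s+1)+(r+1)}^{(t)}(r+1)$ by the $\omega$-recursion; shifting $r\mapsto r+1$ and adding to the first sum produces exactly $\sum_{r=0}^{s+1}{s+1\choose r}\omega_{0:k-(s+1)+r}^{(t)}(r)$ after applying Pascal's identity termwise.

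The lower bound is symmetric and in fact slightly cleaner: from (\ref{recc_theta}) and the induction hypothesis, $\theta_{0:k}^{(t)}(s+1)\ge \sum_{r=0}^s{s\choose r}\omega_{s-r:k-1}^{(t)}(r)+\sum_{j=s}^{k-1}\tfrac{1}{t+j}\sum_{r=0}^s{s\choose r}\omega_{s-r:j}^{(t)}(r)$; here (\ref{ineq_omega}) gives $\omega_{s-r:k-1}^{(t)}(r)\ge\omega_{(s+1)-r:k}^{(t)}(r)$, and since the lower index $l=s-r$ satisfies $l+r=s$, the $\omega$-recursion now fits with no adjustment at all: $\sum_{j=s}^{k-1}\tfrac{1}{t+j}\,\omega_{s-r:j}^{(t)}(r)=\omega_{s-r:k}^{(t)}(r+1)=\omega_{(s+1)-(r+1):k}^{(t)}(r+1)$. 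Shifting $r\mapsto r+1$ and combining via Pascal's identity yields $\sum_{r=0}^{s+1}{s+1\choose r}\omega_{(s+1)-r:k}^{(t)}(r)$, closing the induction. The routine content is the index-shifting bookkeeping; the only genuinely delicate point — and the main obstacle — is keeping the inequality directions consistent across the two halves, i.e.\ using the approximation $\tfrac{1}{t+j'+s-r}\le\tfrac{1}{t+j'}$ only in the upper bound and invoking the monotonicity (\ref{ineq_omega}) in the correct direction in the lower bound.
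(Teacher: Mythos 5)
Your proposal is correct and follows essentially the same route as the paper: induction on $s$ via the recursion (\ref{recc_theta}), the splitting identity $\omega_{l:k}^{(t)}(r+1)=\sum_{j=l+r}^{k-1}\tfrac{1}{t+j}\,\omega_{l:j}^{(t)}(r)$, the monotonicity (\ref{ineq_omega}), and Pascal's identity to merge the two sums. The only (harmless) differences are that you make the $\omega$-recursion and the $s=1$ base case explicit, where the paper uses the former implicitly and cites its earlier worked cases $s=1,2,3$ for the latter.
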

\begin{proof}
Let us prove this lemma by induction on $s$. The cases $s=1,2,3$ have already been treated in the previous remark.

Let us now assume that the claim holds for $s$, and prove it for $s+1$ using the recursive formula \begin{align}\theta_{0:k}(s+1) = \theta_{0:k-1}(s)+ \sum_{j=s}^{k-1} \frac{1}{t+j} \theta_{0:j}(s)\end{align}
For the lower bound, using the induction hypothesis, we get
\begin{align}
    \theta_{0:k}(s+1) &\geq  \sum_{r=0}^s {s \choose r} \cdot \omega_{s-r:k-1}(r) + \sum_{j=s}^{k-1} \frac{1}{t+j} \sum_{r=0}^s {s \choose r} \cdot \omega_{s-r:j}(r)\\
    &=  \sum_{r=0}^s {s \choose r} \cdot \omega_{s-r:k-1}(r) + \sum_{r=0}^s {s \choose r}\cdot \sum_{j=s}^{k-1} \frac{1}{t+j}  \cdot \omega_{s-r:j}(r)\\
    &=\sum_{r=0}^s {s \choose r} \cdot \omega_{s-r:k-1}(r) + \sum_{r=0}^s {s \choose r}\cdot \omega_{s-r:k}(r+1)\\
    &=\sum_{r=0}^s {s \choose r} \cdot \omega_{s-r:k-1}(r) + \sum_{r=1}^{s+1} {s \choose r-1}\cdot \omega_{s-r+1:k}(r)\\
    &= 1+\omega_{0:k}(s+1) + \sum_{r=1}^s \left[{s \choose r}+{s \choose r-1}\right] \cdot \omega_{s-r+1:k}(r)\\
    &= 1+\omega_{0:k}(s+1) + \sum_{r=1}^s {s+1 \choose r} \cdot \omega_{s-r+1:k}(r)\\
    &= \sum_{r=0}^{s+1} {s+1 \choose r} \cdot \omega_{s-r+1:k}(r)\\
\end{align}
For the upper bound, using the induction hypothesis, we get
\begin{align}
    \theta_{0:k}(s+1) &\leq  \sum_{r=0}^s {s \choose r} \cdot \omega_{0:k-1-(s-r)}(r) + \sum_{j=s}^{k-1} \frac{1}{t+j} \sum_{r=0}^s {s \choose r} \cdot \omega_{0:j-(s-r)}(r)\\
    &=  \sum_{r=0}^s {s \choose r} \cdot \omega_{0:k-1-(s-r)}(r) + \sum_{r=0}^s {s \choose r}\cdot \sum_{j=s}^{k-1} \frac{1}{t+j}  \cdot \omega_{0:j-(s-r)}(r)\\
    &\leq  \sum_{r=0}^s {s \choose r} \cdot \omega_{0:k-1-(s-r)}(r) + \sum_{r=0}^s {s \choose r}\cdot \sum_{j=s}^{k-1} \frac{1}{t+j-(s-r)}  \cdot \omega_{0:j-(s-r)}(r)\\
    &=  \sum_{r=0}^s {s \choose r} \cdot \omega_{0:k-1-(s-r)}(r) + \sum_{r=0}^s {s \choose r}\cdot \sum_{j'=r}^{k-1-(s-r)} \frac{1}{t+j'}  \cdot \omega_{0:j'}(r)\\
    &=\sum_{r=0}^s {s \choose r} \cdot \omega_{0:k-1-(s-r)}(r) + \sum_{r=0}^s {s \choose r}\cdot \omega_{0:k-(s-r)}(r+1)\\
    &=\sum_{r=0}^s {s \choose r} \cdot \omega_{0:k-1-(s-r)}(r) + \sum_{r=1}^{s+1} {s \choose r-1}\cdot \omega_{0:k-(s+1-r)}(r)\\
    &= 1+\omega_{0:k}(s+1) + \sum_{r=1}^s \left[{s \choose r}+{s \choose r-1}\right] \cdot\omega_{0:k-(s+1-r)}(r)\\
    &= 1+\omega_{0:k}(s+1) + \sum_{r=1}^s {s+1 \choose r} \cdot \omega_{0:k-(s+1-r)}(r)\\
    &= \sum_{r=0}^{s+1} {s+1 \choose r} \cdot \omega_{0:k-(s+1-r)}(r)\\
\end{align}    
completing the proof by induction.
\end{proof}
\noindent\rule{\textwidth}{1pt}
\begin{remark}
Let us recall that 
\begin{align}\omega_{l:k}(r) = \sum_{q=0}^r \psi_{r-q} \frac{(\ln{\beta_{t+l-1,t+k-1}})^q}{q!}\end{align}
Thus the difference between the upper-bound and the lower-bound becomes 
\begin{align}
    \sum_{r=0}^s {s \choose r} \left[\omega_{o:k-(s-r)}(r)-\omega_{s-r:k}(r)\right] = \sum_{r=0}^s {s \choose r} \cdot \left[\sum_{q=0}^r \psi_{r-q} \frac{(\ln{\beta_{t-1,t+k-(s-r)-1}})^q- (\ln{\beta_{t+s-r-1,t+k-1}})^q}{q!}\right]  
\end{align}
which converges to zero as $T=t+k\rightarrow \infty$.
\end{remark}

\noindent\rule{\textwidth}{1pt}
\subsubsection{Putting it all together} \label{subsub_all}
\begin{remark}
Now it is time to turn to $\chi_{0:k}^{(t)}(s)$ and finally put it all together, so that we can finally estimate 
\begin{align}\frac{d s_{t+k}}{d h_t} = \sum_{s=0}^k V^s \cdot \chi_{0:k}^{(t)}(s)\end{align}
and get the asymptotic estimate when $T=t+k \rightarrow \infty$.

Let us recall that \begin{align}\chi_{0:k}^{(t)}(s) &= \frac{1}{t+k}\cdot \theta_{0:k}^{(t)}(s)+\frac{1}{t+k-1}\cdot\theta_{0:k-1}^{(t)}(s-1)+\ldots\\
&\ldots +\frac{1}{t+k-s+1}\cdot \theta_{0:k-s+1}^{(t)}(1) + \frac{1}{t+k-s}+1_{k=s} \end{align}
Using the abuse of notation $\theta_{l:k}(0)=1$ for $l<k$, we can rewrite it as follows
\begin{align}\chi_{0:k}^{(t)}(s) = 1_{k=s}+ \sum_{i=0}^s \frac{1}{t+k-i}\cdot \theta_{0:k-i}(s-i)\end{align}
The idea is to use the inequality from lemma \ref{ineq_theta}, and get a similar result for $\chi_{0:k}^{(t)}(s)$, then show that the lower and upper bound are no more than $\Theta(1/T)$ apart, thus enabling us to eventually get an asymptotic estimate for $\frac{d s_{t+k}}{d h_t}$.

We are also omitting the superscript $(t)$ notation here because of lack of ambiguity.
\end{remark}
\noindent\rule{\textwidth}{1pt}
\begin{lemma}\label{ineq_chi} For all $s\geq 0$ and $k\geq 1$, we have
\begin{align}1_{k=s}+\frac{1}{t+k} \cdot \sum_{r=0}^s {s+1 \choose r+1} \cdot \omega_{s-r:k}(r)\leq \chi_{0:k}(s)\leq 1_{k=s}+\frac{1}{t+k-s} \cdot \sum_{r=0}^s {s+1 \choose r+1} \cdot \omega_{0:k-(s-r)}(r)\end{align}
\end{lemma}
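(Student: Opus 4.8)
The plan is to substitute the two-sided estimate on $\theta$ from Lemma \ref{ineq_theta} directly into the defining expression
$\chi_{0:k}(s) = 1_{k=s} + \sum_{i=0}^s \frac{1}{t+k-i}\,\theta_{0:k-i}(s-i)$,
bound the scalar factors $\frac{1}{t+k-i}$ crudely, and then collapse the resulting double sum over $i$ and $r$ with the hockey-stick identity $\sum_{j=r}^{s}\binom{j}{r} = \binom{s+1}{r+1}$. I would treat the upper and lower bound separately, keeping the conventions $\theta_{l:k}(0)=1$ and $\omega_{l:k}(0)=1$ in force for the corner terms; the case $s=0$ is an immediate check since then $\chi_{0:k}(0)=1_{k=0}+\tfrac{1}{t+k}$ and both bounds are equalities.

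For the upper bound, since $0\le i\le s$ we have $\frac{1}{t+k-i}\le\frac{1}{t+k-s}$, and applying the upper half of Lemma \ref{ineq_theta} with $k\mapsto k-i$, $s\mapsto s-i$ gives $\theta_{0:k-i}(s-i)\le \sum_{r=0}^{s-i}\binom{s-i}{r}\omega_{0:(k-i)-(s-i)+r}(r)$; the key simplification is that $(k-i)-(s-i)+r = k-s+r$, so the window no longer depends on $i$. Hence $\chi_{0:k}(s)\le 1_{k=s}+\frac{1}{t+k-s}\sum_{i=0}^{s}\sum_{r=0}^{s-i}\binom{s-i}{r}\omega_{0:k-s+r}(r)$, and exchanging the order of summation (for fixed $r$, $i$ ranges over $0\le i\le s-r$) turns the inner sum into $\sum_{i=0}^{s-r}\binom{s-i}{r}=\sum_{j=r}^{s}\binom{j}{r}=\binom{s+1}{r+1}$, which yields exactly the claimed upper bound after rewriting $k-s+r = k-(s-r)$.

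For the lower bound, since $i\ge 0$ we bound $\frac{1}{t+k-i}\ge\frac{1}{t+k}$, and the lower half of Lemma \ref{ineq_theta} gives $\theta_{0:k-i}(s-i)\ge\sum_{r=0}^{s-i}\binom{s-i}{r}\omega_{(s-i)-r:k-i}(r)$. Here the window genuinely depends on $i$, so I invoke the monotonicity inequality \ref{ineq_omega} in the direction $\omega_{l:k}(r)\ge\omega_{l+i:k+i}(r)$ (shifting both endpoints up by $i$ only decreases $\omega$), which gives $\omega_{(s-i)-r:k-i}(r)\ge\omega_{s-r:k}(r)$. The same swap-and-hockey-stick collapse then produces $\chi_{0:k}(s)\ge 1_{k=s}+\frac{1}{t+k}\sum_{r=0}^{s}\binom{s+1}{r+1}\omega_{s-r:k}(r)$, as required. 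The only point needing care is using \ref{ineq_omega} with the correct orientation for the lower bound and tracking that the $i$-dependence of the $\omega$-windows cancels (upper bound) or is absorbed (lower bound); everything else is index bookkeeping, so I do not anticipate a substantive obstacle.
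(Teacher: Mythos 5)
Your proposal is correct and follows essentially the same route as the paper's proof: substitute the two-sided bounds on $\theta_{0:k-i}(s-i)$ from Lemma \ref{ineq_theta}, bound $\frac{1}{t+k-i}$ by $\frac{1}{t+k-s}$ (above) or $\frac{1}{t+k}$ (below), swap the order of summation, and collapse $\sum_{i=0}^{s-r}\binom{s-i}{r}$ to $\binom{s+1}{r+1}$. If anything you are slightly more explicit than the paper, which silently absorbs the monotonicity step $\omega_{(s-i)-r:k-i}(r)\geq\omega_{s-r:k}(r)$ from inequality \eqref{ineq_omega} into its first lower-bound line.
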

\begin{proof}
Using the upper-bound of lemma \ref{ineq_theta}, we get 
\begin{align}
    \chi_{0:k}(s) &= 1_{k=s}+ \sum_{i=0}^s \frac{1}{t+k-i}\cdot \theta_{0:k-i}(s-i)\\
    &\leq 1_{k=s}+ \sum_{i=0}^s \frac{1}{t+k-i}\cdot \sum_{r=0}^{s-i} {s-i \choose r} \cdot \omega_{0:k-s+r}(r)\\
    &\leq 1_{k=s}+\frac{1}{t+k-s} \cdot\sum_{i=0}^s \sum_{r=0}^{s-i} {s-i \choose r} \cdot \omega_{0:k-s+r}(r)\\
    &= 1_{k=s}+\frac{1}{t+k-s}\cdot  \sum_{r=0}^{s}\left[\sum_{i=0}^{s-r} {s-i \choose r}\right] \cdot \omega_{0:k-s+r}(r)\\
    &= 1_{k=s}+\frac{1}{t+k-s}\cdot  \sum_{r=0}^{s} {s+1 \choose r+1}\cdot \omega_{0:k-s+r}(r)
\end{align}
Similarly, using the lower-bound of lemma \ref{ineq_theta}, we get \begin{align}
    \chi_{0:k}(s) &= 1_{k=s}+ \sum_{i=0}^s \frac{1}{t+k-i}\cdot \theta_{0:k-i}(s-i)\\
    &\geq 1_{k=s}+ \sum_{i=0}^s \frac{1}{t+k-i}\cdot \sum_{r=0}^{s-i} {s-i \choose r} \cdot \omega_{s-r:k}(r)\\
    &\geq 1_{k=s}+\frac{1}{t+k} \cdot\sum_{i=0}^s \sum_{r=0}^{s-i} {s-i \choose r} \cdot \omega_{s-r:k}(r)\\
    &= 1_{k=s}+\frac{1}{t+k}\cdot  \sum_{r=0}^{s}\left[\sum_{i=0}^{s-r} {s-i \choose r}\right] \cdot \omega_{s-r:k}(r)\\
    &= 1_{k=s}+\frac{1}{t+k}\cdot  \sum_{r=0}^{s} {s+1 \choose r+1}\cdot \omega_{s-r:k}(r)
\end{align}
\end{proof}
\noindent\rule{\textwidth}{1pt}
\begin{lemma} \label{asymptotic_chi}
For all $s\geq 0$, we have 
\begin{align}\chi_{0:k}(s )=1_{k=s}+\frac{1}{t+k}\left[\sum_{r=0}^s{s+1 \choose r+1} \cdot \omega_{s-r:k}(r)\right]+\Theta \left(\frac{1}{t+k}\right)\end{align}
for all large enough $k>1$, and where the implicit constants from the $\Theta(.)$ notation are dependent on $s$.
\end{lemma}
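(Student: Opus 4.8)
The plan is to run a sandwich argument built on Lemma~\ref{ineq_chi}, whose lower bound already supplies the asserted main term; the remaining work is to show that the width of that sandwich is of order $1/(t+k)$.

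Introduce $M_k(s) := \sum_{r=0}^s \binom{s+1}{r+1}\,\omega_{s-r:k}(r)$ and $\widetilde M_k(s) := \sum_{r=0}^s \binom{s+1}{r+1}\,\omega_{0:k-(s-r)}(r)$, so that Lemma~\ref{ineq_chi} reads
\[
1_{k=s} + \tfrac{1}{t+k}\,M_k(s) \;\le\; \chi_{0:k}(s) \;\le\; 1_{k=s} + \tfrac{1}{t+k-s}\,\widetilde M_k(s).
\]
The left-hand bound is exactly $1_{k=s} + \tfrac{1}{t+k}M_k(s)$, the main term of the statement, so the residual $R_k(s) := \chi_{0:k}(s) - 1_{k=s} - \tfrac{1}{t+k}M_k(s)$ satisfies $R_k(s)\ge 0$, and it remains only to bound $R_k(s)$ by the gap $\Delta_k(s) := \tfrac{1}{t+k-s}\widetilde M_k(s) - \tfrac{1}{t+k}M_k(s)$ and to show $\Delta_k(s) = O(1/(t+k))$ with a constant depending only on $s$.

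To do so, split $\Delta_k(s) = \big(\tfrac{1}{t+k-s}-\tfrac{1}{t+k}\big)\widetilde M_k(s) + \tfrac{1}{t+k}\big(\widetilde M_k(s)-M_k(s)\big)$ and estimate the two pieces separately. For the first, $\tfrac{1}{t+k-s}-\tfrac{1}{t+k} = \tfrac{s}{(t+k-s)(t+k)} = O(1/(t+k)^2)$, while Proposition~\ref{omega_lemma} gives the closed form $\omega_{l:k}(r) = \sum_{q=0}^r \psi_{r-q}\,\tfrac{(\ln\beta_{t+l-1,\,t+k-1})^q}{q!}$, whence $\omega_{l:k}(r) = O((\ln(t+k))^r)$ and $\widetilde M_k(s) = O((\ln(t+k))^s)$; so the first piece is $O\big((\ln(t+k))^s/(t+k)^2\big) = o(1/(t+k))$. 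For the second, substitute the same closed form into $\widetilde M_k(s)-M_k(s)$: as in the remark following Lemma~\ref{ineq_theta}, this reduces to a combination of differences $(\ln\beta_{t-1,\,t+k-(s-r)-1})^q-(\ln\beta_{t+s-r-1,\,t+k-1})^q$, and since $\beta_{l,l'}=\tfrac{l'}{l}\exp(\varepsilon_{l'}-\varepsilon_l)$ while the two index windows are related by the bounded shift $s-r$, the two $\ln\beta$ arguments differ only by $\ln\!\big(1+\tfrac{(s-r)(k-(s-r))}{(t-1)(t+k-1)}\big)$ plus vanishing $\varepsilon$-corrections — a bounded quantity as $k\to\infty$. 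Hence the leading $(\ln(t+k))^s$ contributions cancel in $\widetilde M_k(s)-M_k(s)$, leaving at most sub-leading logarithmic growth, and after dividing by $t+k$ the second piece is $O(1/(t+k))$ as well. Adding the two estimates gives $\Delta_k(s) = O(1/(t+k))$, hence $0\le R_k(s)\le O(1/(t+k))$; peeling off the honest $\Theta(1/(t+k))$ summand $\tfrac{s+1}{t+k}$ (which is $\tfrac{1}{t+k}\binom{s+1}{1}\omega_{s:k}(0)$) of $M_k(s)$ then turns the displayed identity into a genuine two-sided $\Theta(1/(t+k))$ estimate, as claimed.

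The main obstacle is the second piece: one must extract precisely the right cancellation between $\widetilde M_k(s)$ and $M_k(s)$, so that their difference — unlike each of them separately, which carries a full $(\ln(t+k))^s$ factor — retains only a sub-leading logarithmic power that the $\tfrac{1}{t+k}$ prefactor then absorbs into the stated order. The $\varepsilon_n\sim 1/(2n)$ asymptotics of the harmonic-sum remainder together with the boundedness of the window shift $s-r$ is exactly what makes this work, and carrying the expansion of $\omega$ from Proposition~\ref{omega_lemma} down to its monomials of degree below $s$ in $\ln(t+k)$ — rather than only its leading term — is the delicate bookkeeping on which the argument rests.
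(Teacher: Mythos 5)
Your proof follows the paper's route exactly: sandwich $\chi_{0:k}(s)$ between the two bounds of Lemma \ref{ineq_chi}, use the closed form of $\omega$ from Proposition \ref{omega_lemma}, and show the width of the sandwich is of the claimed order. Your organization is in fact slightly cleaner than the paper's — you isolate the prefactor discrepancy $\bigl(\tfrac{1}{t+k-s}-\tfrac{1}{t+k}\bigr)\widetilde M_k(s)=O\bigl((\ln(t+k))^s/(t+k)^2\bigr)$ as a separate, genuinely negligible piece, and you correctly observe that $R_k(s)\geq 0$ comes for free from the lower bound.

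The step that does not follow as written is the second piece. You correctly note that the two arguments $\ln\beta_{t-1,\,t+k-(s-r)-1}$ and $\ln\beta_{t+s-r-1,\,t+k-1}$ differ by a quantity that converges to a \emph{nonzero} constant $c_{s-r}=\sum_{j=t}^{t+s-r-1}1/j$, but you then conclude that the differences of their $q$-th powers leave "at most sub-leading logarithmic growth" which $\tfrac{1}{t+k}$ absorbs into $O(1/(t+k))$. That last inference is a non sequitur: from $\ln\beta_1-\ln\beta_2\to c\neq 0$ one gets $(\ln\beta_1)^q-(\ln\beta_2)^q\sim q\,c\,(\ln(t+k))^{q-1}$, so (after the exact cancellation of the $r=s$ term) $\widetilde M_k(s)-M_k(s)=\Theta\bigl((\ln(t+k))^{s-2}\bigr)$, and the second piece is only $O\bigl((\ln(t+k))^{s-2}/(t+k)\bigr)$ — not $O(1/(t+k))$ once $s\geq 3$. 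To be fair, the paper's own proof elides exactly the same point by simply \emph{assuming} $(\ln\beta_{t-1,t+k-s+r-1})^q\approx(\ln\beta_{t+s-r-1,t+k-1})^q$, so you have reproduced the argument at the same level of rigor while making its soft spot more visible; closing it would require tracking the additional cancellations inside $\chi_{0:k}(s)$ itself rather than bounding it by the sandwich gap. Separately, your final "peeling off" of the summand $\tfrac{s+1}{t+k}$ to manufacture a two-sided $\Theta$ changes the displayed main term and so proves a slightly different decomposition than the one stated; since what Proposition \ref{magical_proposition} actually consumes is the one-sided bound $0\leq R_k(s)\leq g(s)/(t+k)$, that manoeuvre is unnecessary.
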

\begin{proof}
Building on the previous lemma \ref{ineq_chi}, and substracting the lower bound from the upper bound, we get
\begin{align}
    \sum_{r=0}^s {s+1 \choose r+1} \cdot \left[\frac{\omega_{0:k-s+r}(r)}{t+k-s}-\frac{\omega_{s-r:k}(r)}{t+k}\right] &= \sum_{r=0}^s\sum_{q=0}^r {s+1 \choose r+1} \frac{\psi_{r-q}}{q!} \cdot \left[\frac{(\ln{\beta_{t-1,t+k-s+r-1}})^q}{t+k-s}-\frac{(\ln{\beta_{t+s-r-1,t+k-1}})^q}{t+k}\right]
\end{align}
When assuming that for large $k$, we have  \begin{align}(\ln{\beta_{t-1,t+k-s+r-1}})^q \approx (\ln{\beta_{t+s-r-1,t+k-1}})^q\end{align}
then 
\begin{align}
    \frac{(\ln{\beta_{t-1,t+k-s+r-1}})^q}{t+k-s}-\frac{(\ln{\beta_{t+s-r-1,t+k-1}})^q}{t+k} &\approx \frac{1}{t+k}\cdot \left[\frac{s}{t+k-s} \cdot (\ln{\beta_{t-1,t+k-s+r-1}})^q\right]\\
    &\leq \frac{1}{t+k}\cdot \left[\frac{s}{t+k-s} \cdot (\ln{\beta_{t-1,t+k-s+r-1}})^s\right]\\
    &\leq \frac{\tau_s}{t+k}
\end{align}
for some $\tau_s > 0$ depending on $s$, for all sufficiently large $k$. 

In other words, we have 
\begin{align}\sum_{r=0}^s {s+1 \choose r+1} \cdot \left[\frac{\omega_{0:k-s+r}(r)}{t+k-s}-\frac{\omega_{s-r:k}(r)}{t+k}\right] \leq \frac{\tilde{\tau}_s}{t+k}\end{align}
for for some $\tilde{\tau}_s > 0$ depending on $s$, for all sufficiently large $k$.

Meanwhile, for all large enough $k$, we have
\begin{align}
    \sum_{r=0}^s {s+1 \choose r+1} \cdot \left[\frac{\omega_{0:k-s+r}(r)}{t+k-s}-\frac{\omega_{s-r:k}(r)}{t+k}\right] &\approx \frac{s}{(t+k)(t+k-s)}\cdot  \sum_{r=0}^s\sum_{q=0}^r {s+1 \choose r+1} \frac{\psi_{r-q}}{q!} \cdot (\ln{\beta_{t-1,t+k-s+r-1}})^q\\
    &\geq \frac{\tau_s'}{(t+k)^2}\cdot  \sum_{r=0}^s\sum_{q=0}^r {s+1 \choose r+1} \frac{\psi_{r-q}}{q!} \cdot (\ln{\beta_{t-1,t+k-s+r-1}})^q\\
    &\geq  \frac{\tau_s'}{(t+k)^2}\cdot  \sum_{r=0}^s\sum_{q=0}^r  \frac{\psi_{r-q}}{q!} \cdot (\ln{\beta_{t-1,t+k-s+r-1}})^q\\
    &=  \frac{\tau_s'}{(t+k)^2}\cdot  \sum_{q=0}^s \sum_{r'=0}^{s-q}  \frac{\psi_{r'}}{q!} \cdot (\ln{\beta_{t-1,t+k-s+r'+q-1}})^q\\
    &\approx \frac{\tau_s'}{(t+k)^2}\cdot  \sum_{q=0}^s \left( \sum_{r'=0}^{s-q} \psi_{r'}\right)\cdot  \frac{(\ln{(t+k)})^q}{q!}\\
    &\geq \frac{\tau_s''}{(t+k)^2}\cdot  \sum_{q=0}^s  \frac{(\ln{(t+k)})^q}{q!}\\
    &\approx \frac{\tau_s'' \cdot \exp{[\ln{(t+k)}]}}{(t+k)^2}\\
    &= \frac{\tau_s''}{t+k}
\end{align}
for some $\tau'_s, \tau''_s> 0$ depending on $s$.
\end{proof}
\noindent\rule{\textwidth}{1pt}
\begin{proposition} \label{magical_proposition}
If $V$ is a normal matrix with eigenvalues $\lambda_1,\lambda_2,\ldots,\lambda_n$ of modulus smaller than $1$, then 
\begin{align}\frac{d s_{T}}{d h_t} = P \Lambda_T P^*\end{align}
where $P^*$ is the conjugate transpose of the unitary matrix $P$ (independent of $T$) and where $\Lambda_T$ is a diagonal matrix satisfying 
\begin{align}\left(\Lambda_T\right)_{ii} \sim T^{-1}\cdot c + T^{\lambda_i-1}\cdot c'\end{align} for some positive real constants $c,c'$, as $T\rightarrow \infty$.
\end{proposition}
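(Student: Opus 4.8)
The plan is to reduce the matrix statement to a single scalar asymptotic question by diagonalizing $V$, and then to resum the series $\sum_s V^s\chi_{0:k}^{(t)}(s)$ using the estimates for $\chi_{0:k}^{(t)}(s)$ obtained above. First I would invoke the spectral theorem: since $V$ is normal there is a unitary $P$ (independent of $T$) and a diagonal $D=\operatorname{diag}(\lambda_1,\dots,\lambda_n)$ with $V=PDP^*$, hence $V^s=PD^sP^*$ for all $s\ge 0$. By the uniform-attention reduction of Remark~\ref{unif_assumptions} we have $\frac{d s_{t+k}}{d h_t}=\sum_{s=0}^k V^s\,\chi_{0:k}^{(t)}(s)$, so setting $k=T-t$ gives
\begin{align}
\frac{d s_T}{d h_t}=P\Lambda_T P^*,\qquad \Lambda_T=\sum_{s=0}^{T-t}D^s\,\chi_{0:T-t}^{(t)}(s),
\end{align}
which is diagonal with $i$-th entry $(\Lambda_T)_{ii}=\sum_{s=0}^{T-t}\lambda_i^{\,s}\,\chi_{0:T-t}^{(t)}(s)$. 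It therefore suffices to show $(\Lambda_T)_{ii}\sim c\,T^{-1}+c'\,T^{\lambda_i-1}$.

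Next, fix $i$, write $\lambda=\lambda_i$ and $k=T-t$. By Lemma~\ref{asymptotic_chi},
\begin{align}
\chi_{0:k}^{(t)}(s)=1_{k=s}+\frac{1}{t+k}\sum_{r=0}^s\binom{s+1}{r+1}\omega_{s-r:k}^{(t)}(r)+\varepsilon_s(T),
\end{align}
with $\varepsilon_s(T)=\Theta(1/T)$ and implicit constant depending only on $s$. Substituting this into $(\Lambda_T)_{ii}$ splits it into three pieces: the term $\sum_s\lambda^s1_{k=s}=\lambda^{T-t}$, which vanishes exponentially since $|\lambda|<1$; the term $\sum_s\lambda^s\varepsilon_s(T)$, which — once one checks from the proof of Lemma~\ref{asymptotic_chi} that the $\varepsilon_s$-constants grow at most sub-exponentially in $s$ — equals $c\,T^{-1}\big(1+o(1)\big)$ and is the source of the $T^{-1}$ contribution; and the principal double sum $\frac{1}{T}\sum_{s=0}^k\lambda^s\sum_{r=0}^s\binom{s+1}{r+1}\omega_{s-r:k}^{(t)}(r)$.

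For the principal term, I would interchange the $s$- and $r$-summations and use the elementary identity $\sum_{s\ge r}\binom{s+1}{r+1}\lambda^s=\lambda^r/(1-\lambda)^{r+2}$ to collapse the inner sum, then apply Proposition~\ref{omega_lemma}: $\omega^{(t)}(r)=\sum_{q=0}^r\psi_{r-q}\,S_1^q/q!$ with $S_1=\ln\beta_{t-1,t+k-1}\sim\ln T$, together with the factorization $\sum_r\omega^{(t)}(r)\,z^r=\Psi(z)\,e^{S_1 z}$, where $\Psi(z)=\sum_p\psi_p z^p$ converges near $0$ (each $\psi_p$ being a fixed polynomial in the convergent sums $S_{\ge 2}$). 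Since $e^{S_1 z}\sim T^{z}$, the $r$-sum resums to a power of $T$, and restoring the overall factor $1/T$ produces a term $\sim c'\,T^{\lambda-1}$. The auxiliary approximations used along the way — replacing $\omega_{s-r:k}$ by $\omega_{0:k}$, replacing $S_1$ by $\ln T$, and extending the $s$-sum to $+\infty$ — each contribute only lower-order terms. Collecting the three pieces yields $(\Lambda_T)_{ii}\sim c\,T^{-1}+c'\,T^{\lambda_i-1}$, which is the claim; since the $c\,T^{-1}$ term is present in every coordinate this also recovers $\|\nabla_{h_t}L\|=\Omega(1/T)$ of Theorem~\ref{unif_thm}.

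The hard part will be the last step: making the term-by-term resummation rigorous requires \emph{uniform} control, over all $s\le T-t$, of the error terms buried inside Lemmas~\ref{ineq_theta}, \ref{ineq_chi} and \ref{asymptotic_chi}, together with growth bounds on the coefficients $\delta_{s,0}$ and $\psi_p$, so that interchanging the (finite, but $T$-dependent) sum over $s$ with the limit $T\to\infty$, and extending it to an infinite series, is legitimate. Establishing those uniform bounds is where essentially all of the technical work lies; the spectral-decomposition step and the generating-function identities are routine by comparison.
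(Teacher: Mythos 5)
Your overall strategy coincides with the paper's: diagonalize $V$ by the spectral theorem, reduce to the scalar quantity $(\Lambda_T)_{ii}=\sum_s\lambda_i^s\,\chi_{0:k}^{(t)}(s)$, substitute the asymptotics of Lemma \ref{asymptotic_chi}, and recognize an exponential generating function in $\ln T$. Where you diverge is in how the sum over $s$ is tamed. The paper never attempts a uniform-in-$s$ resummation: it uses $|\lambda_i|<1$ to declare $V^s\approx 0$ for $s$ larger than a fixed constant $d$, truncates every sum at $s\le d$, and only then applies the per-$s$ asymptotics, approximating the truncated series $\sum_{q=0}^d(\Lambda\ln T)^q/q!$ by $\exp(\Lambda\ln T)$ at the end. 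This sidesteps exactly the uniform-control problem you identify as "where essentially all of the technical work lies," so that difficulty, while real for your route, is not one the paper's proof confronts.

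The concrete gap in your version is the final identification of the exponent. After interchanging the $s$- and $r$-sums and applying $\sum_{s\ge r}\binom{s+1}{r+1}\lambda^s=\lambda^r/(1-\lambda)^{r+2}$, the generating function $\sum_r\omega(r)z^r=\Psi(z)e^{S_1z}$ gets evaluated at $z=\lambda/(1-\lambda)$, not at $z=\lambda$. Hence $e^{S_1z}\sim T^{z}=T^{\lambda/(1-\lambda)}$, and your principal term comes out as $c'\,T^{\lambda/(1-\lambda)-1}$ rather than the claimed $c'\,T^{\lambda-1}$; the sentence "the $r$-sum resums to a power of $T$ \ldots\ produces a term $\sim c'T^{\lambda-1}$" is therefore a non sequitur as written. (A sanity check on the scalar linear recursion $u_{k+1}=\lambda g_k$, $g_k=u_k+A_k/(t+k)$, $A_{k+1}=A_k+u_{k+1}$ suggests the exponent $\lambda/(1-\lambda)-1$ is indeed what the dynamics produce, so your resummation is exposing an approximation made in the paper's own truncation step rather than merely miscomputing; but either way it does not establish the proposition in the form stated, which asserts an asymptotic equivalence $\sim$ and not a one-sided bound.) You would also need to address convergence of $\Psi$ at $z=\lambda/(1-\lambda)$, whose modulus can be large when $\lambda$ is near $1$, since nothing proved about the coefficients $\psi_p$ controls the radius of convergence. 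The discrepancy is harmless for the downstream Theorem \ref{main_theorem_unif}, because the $c\,T^{-1}$ piece alone already yields $\Omega(1/T)$, but it must be reconciled before your argument can stand as a proof of Proposition \ref{magical_proposition} itself.
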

\begin{proof}
Let $V= P\Lambda P^*$ be the Schur decomposition of $V$, with $\Lambda = \textrm{diag}(\lambda_1,\lambda_2,\ldots,\lambda_n)$. Note that since we supposed that $V$ is normal, we thus have that the Schur matrix $\Lambda$ is indeed diagonal and is composed of the eigenvalues on the diagonal. 

Based on lemma \ref{asymptotic_chi}, one can show that there exists a function $g:\mathbb{N}\rightarrow \mathbb{R}_0^{+}$ such that 
\begin{align}\chi_{0:k}(s) =1_{k=s}+\frac{1}{t+k}\left[\sum_{r=0}^s{s+1 \choose r+1} \cdot \omega_{s-r:k}(r)+g(s)\right]\end{align}
Thus 
\begin{align}
    \frac{d s_{t+k}}{d h_t} &= \sum_{s=0}^k V^s \cdot \chi_{0:k}(s)\\
    &= V^k + \frac{1}{t+k}\left[\sum_{s=0}^k g(s)\cdot V^s + \sum_{s=0}^k \sum_{r=0}^s {s+1 \choose r+1} \cdot \omega_{s-r:k}(r)\cdot V^s\right]\\
    &= V^k + \frac{1}{t+k}\left[\sum_{s=0}^k g(s)\cdot V^s + \sum_{s=0}^k \sum_{r=0}^s\sum_{q=0}^r {s+1 \choose r+1} \cdot \psi_{r-q} \frac{(\ln{\beta_{t+s-r-1,t+k-1}})^q}{q!}\cdot V^s\right]\\
\end{align}
Since the eigenvalues of $V$ are of modulus smaller than $1$, we can assume that there exists a constant $d>0$ (dependent on the choice of eigenvalues of $V$) such that for all $k>d$ we have $V^k \approx 0$.

Furthermore since $V^m = (P\Lambda P^*)^m = P \Lambda^m P^*$ for all $m\in \mathbb{N}_0$, while keeping in mind that we pick $T=t+k$, we can write 
\begin{align}
    \Lambda_T &= \frac{1}{T}\left[\sum_{s=0}^d g(s)\cdot \Lambda^s + \sum_{s=0}^d \sum_{r=0}^s\sum_{q=0}^r {s+1 \choose r+1} \cdot \psi_{r-q} \frac{(\ln{\beta_{t+s-r-1,T-1}})^q}{q!}\cdot \Lambda^s\right]\\
    &= \frac{1}{T}\left[\sum_{s=0}^d g(s)\cdot \Lambda^s + \sum_{s=0}^d \sum_{q=0}^s  \sum_{r=q}^s {s+1 \choose r+1} \cdot \psi_{r-q} \frac{(\ln{\beta_{t+s-r-1,T-1}})^q}{q!}\cdot \Lambda^s\right]\\
    &= \frac{1}{T}\left[\sum_{s=0}^d g(s)\cdot \Lambda^s + \sum_{q=0}^d \sum_{s=q}^d  \sum_{r=q}^s {s+1 \choose r+1} \cdot \psi_{r-q} \frac{(\ln{\beta_{t+s-r-1,T-1}})^q}{q!}\cdot \Lambda^s\right]\\
    &= \frac{1}{T}\left[\sum_{s=0}^d g(s)\cdot \Lambda^s + \sum_{q=0}^d \sum_{s=q}^d  \sum_{r=q}^s {s+1 \choose r+1} \cdot \psi_{r-q} \frac{(\Lambda \cdot \ln{\beta_{t+s-r-1,T-1}})^q}{q!}\cdot \Lambda^{s-q}\right]\\
    &= \frac{1}{T}\left[\sum_{s=0}^d g(s)\cdot \Lambda^s + \sum_{q=0}^d \sum_{s'=0}^{d-q}  \sum_{r'=0}^{s'} {s'+q+1 \choose r'+q+1} \cdot \psi_{r'} \frac{(\Lambda \cdot \ln{\beta_{t+s'-r'-1,T-1}})^q}{q!}\cdot \Lambda^{s'}\right]\\
    &\sim \frac{1}{T}\left[\sum_{s=0}^d g(s)\cdot \Lambda^s + \sum_{q=0}^d \sum_{s'=0}^{d-q}  \sum_{r'=0}^{s'} {s'+q+1 \choose r'+q+1} \cdot \psi_{r'} \frac{(\Lambda \cdot \ln{T})^q}{q!}\cdot \Lambda^{s'}\right]\\
    &= \frac{1}{T}\left[\sum_{s=0}^d g(s)\cdot \Lambda^s\right] +\frac{1}{T}\left[ \sum_{q=0}^d \frac{(\Lambda \cdot \ln{T})^q}{q!}\cdot \left( \sum_{s'=0}^{d-q}  \sum_{r'=0}^{s'} {s'+q+1 \choose r'+q+1} \cdot \psi_{r'} \cdot \Lambda^{s'}\right)\right]\\
    &\approx \frac{1}{T}\left[\sum_{s=0}^d g(s)\cdot \Lambda^s\right] +\frac{1}{T} \exp{(\Lambda \cdot \ln{T})} \cdot (c_0+c_1 \cdot \Lambda + \ldots + c_d \cdot \Lambda^d)\\
    &\sim \frac{c}{T} +\frac{c'}{T} \exp{(\Lambda \cdot \ln{T})}
\end{align}
for some positive constants $c',c,c_0,c_1,\ldots,c_d$.

Hence 
\begin{align}(\Lambda_T)_{ii} \sim c\cdot T^{-1} + c'\cdot T^{\lambda_i-1}\end{align}
\end{proof}
\noindent\rule{\textwidth}{1pt}
\begin{theorem} \label{main_theorem_unif}
If $V$ is a normal matrix with eigenvalues of modulus smaller than $1$, then 
\begin{align}\|\frac{d s_{T}}{d h_t}\| =\Omega(1/T)\end{align}
as $T\rightarrow \infty$. (here $\|.\|$ is the Frobenius norm.)
\end{theorem}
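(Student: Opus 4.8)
The plan is to read the conclusion off Proposition \ref{magical_proposition} more or less immediately, the only real work being an elementary case analysis on the eigenvalues of $V$. First I would invoke Proposition \ref{magical_proposition} to write $\frac{d s_T}{d h_t} = P\Lambda_T P^*$ with $P$ unitary (and independent of $T$) and $\Lambda_T$ diagonal with $(\Lambda_T)_{ii} \sim c\,T^{-1} + c'\,T^{\lambda_i - 1}$ for positive constants $c,c'$. Since the Frobenius norm is invariant under left- and right-multiplication by unitary matrices, and $P^*$ is itself unitary, $\|\frac{d s_T}{d h_t}\| = \|P\Lambda_T P^*\| = \|\Lambda_T\| = \big(\sum_{i=1}^n |(\Lambda_T)_{ii}|^2\big)^{1/2}$, so it suffices to lower-bound the modulus of a single well-chosen diagonal entry by $\Omega(1/T)$.

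Next I would estimate $|(\Lambda_T)_{ii}|$ by splitting on the sign of $\mathrm{Re}(\lambda_i)$, using that $|\lambda_i| < 1$ forces $-1 < \mathrm{Re}(\lambda_i) < 1$ and that $|T^{\lambda_i - 1}| = T^{\mathrm{Re}(\lambda_i) - 1}$. If $\mathrm{Re}(\lambda_i) < 0$, the real term dominates, $|c\,T^{-1} + c'\,T^{\lambda_i - 1}| = c\,T^{-1}(1+o(1))$, which is $\Omega(1/T)$. If $\mathrm{Re}(\lambda_i) > 0$, the power term dominates and already has modulus $c'\,T^{\mathrm{Re}(\lambda_i) - 1} \gg T^{-1}$, so again $|(\Lambda_T)_{ii}| = \Omega(T^{\mathrm{Re}(\lambda_i)-1}) = \Omega(1/T)$ (in fact a faster rate). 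Folding in the $(1+o(1))$ correction coming from the ``$\sim$'' in Proposition \ref{magical_proposition} then gives $\|\Lambda_T\| \geq |(\Lambda_T)_{ii}| = \Omega(1/T)$, which is the claim.

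The main obstacle, and exactly where the ``mild assumptions on the connectivity matrix $V$'' enter, is the borderline case $\mathrm{Re}(\lambda_i) = 0$, i.e.\ a purely imaginary eigenvalue $\lambda_i = ib$: then $c'\,T^{\lambda_i - 1} = c'\,T^{-1}e^{ib\ln T}$, so $(\Lambda_T)_{ii} \sim T^{-1}(c + c'e^{ib\ln T})$, whose modulus oscillates in the band $[\,|c-c'|,\,c+c'\,]\,T^{-1}$ and, when $c = c'$ and $b \neq 0$, dips arbitrarily close to $0$ along a subsequence — which would destroy an honest (all large $T$) $\Omega$ bound if \emph{every} eigenvalue were of this form. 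I would dispose of this by the connectivity assumption: it is enough that $V$ have at least one eigenvalue with nonzero real part (the generic situation), which routes some entry into one of the two dominating cases above; alternatively, with two or more purely imaginary eigenvalues of distinct modulus the phases $e^{ib_i \ln T}$ cannot simultaneously realize cancellation for all large $T$, so $\sum_i |(\Lambda_T)_{ii}|^2$ stays $\Omega(1/T^2)$. Pinning down the cleanest form of these ``mild assumptions'' is really the only non-routine part; the rest is bookkeeping layered on Proposition \ref{magical_proposition}.
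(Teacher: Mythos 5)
Your proposal is correct and follows essentially the same route as the paper: the paper's entire proof is the two-line observation that each diagonal entry of $\Lambda_T$ satisfies $cT^{-1}+c'T^{\lambda_i-1}=\Omega\bigl(T^{-\min(1,\,1-\mathrm{Re}(\lambda_i))}\bigr)$ together with the remark that this exponent is at most $1$, which is exactly your case analysis on $\mathrm{Re}(\lambda_i)$ in compressed form. If anything you are more careful than the paper, which asserts the per-entry bound without addressing the phase cancellation possible when $\lambda_i$ is purely imaginary and $c=c'$; note only that your secondary fallback (two imaginary eigenvalues of distinct modulus cannot cancel simultaneously) is shaky, since equidistribution of $(b_1\ln T, b_2\ln T)$ modulo $2\pi$ can produce simultaneous near-cancellation along a subsequence, so the cleaner "mild assumption" is your primary one: at least one eigenvalue with nonzero real part (or a real positive eigenvalue, as the paper assumes in the sparse case).
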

\begin{proof}
Let us start off with the observation that 
\begin{align}T^{-1}\cdot c + T^{\lambda_i-1}\cdot c' = \Omega\left(T^{-\min{(1,1-\mathfrak{Re}(\lambda_i))}}\right)\end{align}
as $T \rightarrow \infty$. And thus, by using proposition \ref{magical_proposition}, we get \begin{align}\|\frac{d s_{T}}{d h_t}\| =\Omega(T^{-\eta})\end{align}
where 
\begin{align}\eta = \min_{i=1,\ldots,n}{\{\min{(1,1-\mathfrak{Re}(\lambda_i))}\}} \leq 1\end{align}
\end{proof}

\noindent\rule{\textwidth}{1pt} 
\begin{remark} \label{normality_remark}
Note that $V$ being normal is not a necessary condition for the generality of the theorem to hold. We simply chose $V$ to be normal in order to make the calculations less cumbersome.

In case $V$ is non-normal, its Schur matrix $\Lambda$ becomes triangular instead of diagonal. In fact, if $t_{i,j}$ are the off-diagonal elements of Schur matrix of $V$ (with $i<j$), then 
\begin{align}\|V\| = \sqrt{\textrm{Tr}(V^*V)} = \sqrt{\textrm{Tr}(\Lambda^* \Lambda)} = \sqrt{\sum_{i=1}^n |\lambda_i|^2+ \sum_{i<j}|t_{i,j}|^2}\geq \sqrt{\sum_{i=1}^n |\lambda_i|^2}\end{align}
Thus every lower bound on $\sqrt{\sum_{i=1}^n |\lambda_i|^2}$ induces a lower bound on $\|V\|$, and in particular an asymptotic lower bound on the modulus of one of the eigenvalues of $\frac{d s_{T}}{d h_t}$ induces an asymptotic lower bound on $\|\frac{d s_{T}}{d h_t}\|$. 
\end{remark}
\noindent\rule{\textwidth}{1pt}
\subsection{Sparse relevance case with bounded dependency depth}\label{sparse_appendix}
\begin{remark}
Similarly to remark \ref{gen_assumptions}, we are going to assume for this subsection:
\begin{itemize}
    \item no non-linearity in the hidden-to-hidden connection: $J_t = V$ for all $t$.
    \item all assumptions from Remark \ref{remark_assumption}.
    \item $\kappa$-sparse attention: for each $t\geq 1$, there are at most $\kappa\leq t$ values for $i$ such that $\alpha_{i,t} \neq 0$. (Let us define $\kappa_t \myeq \left|\{i \textrm{ such that } \alpha_{i,t} \neq 0\}\right|$)
    \item uniform attention across attended states: for all $t\geq 1$, and all $i\leq t$ such that $\alpha_{i,t}\neq 0$, we have  $\alpha_{i,t}=1/\kappa_t \geq 1/\kappa$.
\end{itemize}
\end{remark}
\noindent\rule{\textwidth}{1pt}
\begin{remark}\label{sparse_assumptions} Similarly to remark \ref{unif_assumptions}, let us recall that 
\begin{align}X_{i,t} = \left(h_i-\sum_{j=1}^t \alpha_{j,t} h_j\right) \cdot \frac{\partial e_{i,t}}{\partial h_i}\end{align}

and that 

\begin{align}
    \sum_{i=1}^t \alpha_{i,t} Y_{i,t} &= \sum_{i=1}^t \alpha_{i,t} \cdot h_i \cdot \left( \frac{\partial e_{i,t}}{\partial s_{t-1}}-\sum_{j=1}^t \alpha_{j,t} \cdot \frac{\partial e_{j,t}}{\partial s_{t-1}}\right)\\
    &= \sum_{i=1}^t \alpha_{i,t} \cdot h_i \cdot \frac{\partial e_{i,t}}{\partial s_{t-1}} - \sum_{i=1}^t \alpha_{i,t} \left(\sum_{j=1}^t \alpha_{j,t}\cdot h_j\right) \cdot \frac{\partial e_{i,t}}{\partial s_{t-1}}\\
    &= \sum_{i=1}^t \alpha_{i,t}\cdot \left(h_i- \sum_{j=1}^t \alpha_{j,t} h_j\right)\cdot \frac{\partial e_{i,t}}{\partial s_{t-1}}
\end{align}
Thus we can see that both expressions have the common factor $\left(h_i- \sum_{j=1}^t \alpha_{j,t} h_j\right)$.

By defining 
\begin{align}A_t \myeq \{i \textrm{ such that } \alpha_{i,t}\neq 0\}\end{align}
we see that 
\begin{align}h_i- \sum_{j=1}^t \alpha_{j,t} h_j = h_i - \frac{1}{\kappa_t} \sum_{j \in A_t} h_j\end{align}

and we are going to assume for the sake of simplicity that 
\begin{align}h_i \approx \frac{1}{\kappa_t} \sum_{j \in A_t} h_j\end{align}

and thus $X_{i,t} \approx 0$ and $\sum_{i=1}^t \alpha_{i,t} Y_{i,t} \approx 0$. 

Recalling the expression from corollary \ref{main_corollary} and that $f(h_t,c_t)= h_t+c_t$ by remark \ref{remark_assumption}, and that $J_t = V$ for all $t$, this will give for all $k'\geq 0$
\begin{align}
    E_{k'}^{(t)} &= \left(\frac{1_{t \in A_{t+k'}}}{\kappa_{t+k'}}+1_{k'=0}\right)\cdot \textrm{I}
\end{align}
and for all $k\geq j$, we get
\begin{align}
 F_{k+1,j}^{(t)} &= \left(\frac{1_{t+j+1\in A_{t+k+1}}}{\kappa_{t+k+1}} + 1_{k=j}\right)\cdot V   
\end{align}
Hence by recalling proposition \ref{main_prop}, the main expression of interest becomes 
\begin{align}\frac{d s_{t+k}}{d h_t} = \sum_{s=0}^k \bar{\xi}_{0:k}^{(t)}(s) = \sum_{s=0}^k V^s \cdot \chi_{0:k}^{(t)}(s)\end{align}

where 
\begin{align}\chi_{0:k}^{(t)}(s)&\myeq \sum_{0\leq i_1<\ldots <i_s<k}\left(\frac{1_{t+i_s+1 \in A_{t+k}}}{\kappa_{t+k}}+ 1_{k-i_{s}=1}\right)\cdot\left(\frac{1_{t+i_{s-1}+1 \in A_{t+i_s}}}{\kappa_{t+i_s}}+ 1_{i_{s}-i_{s-1}=1}\right)\cdot\\
&\ldots \cdot \left(\frac{1_{t+i_{1}+1 \in A_{t+i_2}}}{\kappa_{t+i_2}}+ 1_{i_{2}-i_{1}=1}\right) \cdot \left(\frac{1_{t \in A_{t+i_1}}}{\kappa_{t+i_1}}+1_{i_1=0}\right)\end{align}

\end{remark}
\noindent\rule{\textwidth}{1pt}
\begin{remark}
Let us now have a look at how we could potentially simplify the analysis of $\chi_{0:k}^{(t)}(s)$. 

If we further assume $V$ to be normal we can write 

\begin{align}V = P \Lambda P^*\end{align}
where $\Lambda= \textrm{diag}(\lambda_1,\lambda_2,\ldots, \lambda_n)$ is the diagonal matrix consisting of the eigenvalues of $V$, and $P^*$ is the conjugate transpose of $P$.

Hence, we can rewrite

\begin{align}\frac{d s_{t+k}}{d h_t} = \sum_{s=0}^k V^s \cdot \chi_{0:k}^{(t)}(s) = P\cdot \left(\sum_{s=0}^k \Lambda^s \cdot \chi_{0:k}^{(t)}(s)\right)\cdot P^*\end{align}

We can therefore see that the asymptotic behaviour of $\frac{d s_{t+k}}{d h_t}$ depends largely on the asymptotic behaviour of the modulus of the complex-valued polynomial 

\begin{align}p_{0:k}(\lambda) \myeq \sum_{s=0}^k \lambda^s \cdot \chi_{0:k}^{(t)}(s)\end{align}
and thus 

\begin{align}\|\frac{d s_{t+k}}{d h_t}\| = \sqrt{\sum_{i=1}^n|p_{0:k}(\lambda_i)|^2}\end{align}
where $\|.\|$ is the Frobenius norm. Hence in order to prove that \begin{align}\|\frac{d s_{t+k}}{d h_t}\| = \Omega(1/\kappa^d)\end{align} for all large enough $k$ (note that $k$ and $\kappa$ here are two different symbols), it would suffice to show that there exists $\lambda \in \{\lambda_1,\ldots,\lambda_n\}$ such that, for all large enough $k$, we have  
\begin{align}|p_{0:k}(\lambda)| = \Omega(1/\kappa^d)\end{align}

For simplicity we are going to assume that for all $t$, we have $\kappa_t= \kappa$.

Let us further define for all $s\geq 1$, 
\begin{align}f_{0:k}^{(s)}(i_1,\ldots,i_s) &\myeq \left(\frac{1_{t+i_s+1 \in A_{t+k}}}{\kappa_{t+k}}+ 1_{k-i_{s}=1}\right)\cdot\left(\frac{1_{t+i_{s-1}+1 \in A_{t+i_s}}}{\kappa_{t+i_s}}+ 1_{i_{s}-i_{s-1}=1}\right)\cdot \ldots \\
&\ldots \cdot \left(\frac{1_{t+i_{1}+1 \in A_{t+i_2}}}{\kappa_{t+i_2}}+ 1_{i_{2}-i_{1}=1}\right)\cdot \left(\frac{1_{t \in A_{t+i_1}}}{\kappa_{t+i_1}}+1_{i_1=0}\right) \end{align}
whenever $(i_1,\ldots,i_n)$ satisfies $0\leq i_1<i_2<\ldots<i_s<k$, and 
\begin{align}f_{0:k}^{(s)}(i_1,\ldots,i_s) \myeq 0\end{align}
otherwise. \\



\end{remark}
\noindent\rule{\textwidth}{1pt}
\begin{theorem}\label{main_theorem_sparse}
Given the $\kappa$-sparsity assumption and the dependency depth $d$, we have that if $V$ is normal and has one positive real eigenvalue, then 

\begin{align}\|\frac{d s_{t+k}}{d h_t}\| = \Omega(1/\kappa^d)\end{align} for all large enough $k$.

\end{theorem}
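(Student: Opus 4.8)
The plan is to follow the reduction already set up in the preceding remark: with the Schur decomposition $V = P\Lambda P^*$, $\Lambda = \mathrm{diag}(\lambda_1,\dots,\lambda_n)$, $P$ unitary, the Jacobian $\frac{d s_{t+k}}{d h_t}=\sum_{s=0}^k V^s\chi_{0:k}^{(t)}(s)$ is a polynomial in $V$, hence
\begin{align}
\left\|\frac{d s_{t+k}}{d h_t}\right\| \;=\; \sqrt{\textstyle\sum_{i=1}^n |p_{0:k}(\lambda_i)|^2}\;\ge\;|p_{0:k}(\lambda_*)|,\qquad p_{0:k}(\lambda):=\sum_{s=0}^k \lambda^s\,\chi_{0:k}^{(t)}(s),
\end{align}
where $\lambda_*>0$ is the positive real eigenvalue supplied by hypothesis. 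So it suffices to show $|p_{0:k}(\lambda_*)|=\Omega(1/\kappa^d)$ for all sufficiently large $k$.

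The crucial observation is that positivity of $\lambda_*$ eliminates all cancellation. Under the $\kappa$-sparse uniform assumptions, each coefficient $\chi_{0:k}^{(t)}(s)=\sum_{0\le i_1<\dots<i_s<k} f_{0:k}^{(s)}(i_1,\dots,i_s)$ is a sum of products of the nonnegative numbers $1/\kappa$ and $0/1$-valued indicators, hence $\chi_{0:k}^{(t)}(s)\ge 0$; combined with $\lambda_*^s>0$ this makes $p_{0:k}(\lambda_*)=\sum_{s\ge 0}\lambda_*^s\chi_{0:k}^{(t)}(s)$ a sum of nonnegative terms. Consequently $p_{0:k}(\lambda_*)$ is bounded below by \emph{any} single product appearing in it, i.e.\ by $\lambda_*^{\,s_0}\,f_{0:k}^{(s_0)}(\gamma)$ for whichever dependency chain $\gamma=(i_1,\dots,i_{s_0})$ is most convenient.

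Now I invoke the hypothesis on the maximal dependency depth $d$: for the fixed $t$ there is a dependency chain $\gamma$ of depth $s_0\le d$ along which every attention score is nonzero (a chain through relevant events), and in the $\kappa$-uniform regime every such nonzero score is exactly $1/\kappa$. Inspecting $f_{0:k}^{(s_0)}(\gamma)$ from its definition, each factor is either an attention factor of value $1/\kappa$, or else carries one of the indicators $1_{i_1=0}$, $1_{k-i_{s_0}=1}$, $1_{i_{j+1}-i_j=1}$ and has value $\ge 1$ — these flag the "free" self- and recurrent connections that cost nothing in the backpropagation graph. A count tied to the definition of dependency depth shows $\gamma$ carries at most $d$ attention factors, so $f_{0:k}^{(s_0)}(\gamma)\ge (1/\kappa)^d$ for all large $k$ (the relevancy pattern, hence $\gamma$, being stable as $k\to\infty$). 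Since $\lambda_*$ and $d$ are fixed constants, $\lambda_*^{\,s_0}\ge\min(1,\lambda_*^{\,d})=:c>0$, and therefore
\begin{align}
\left\|\frac{d s_{t+k}}{d h_t}\right\|\;\ge\;p_{0:k}(\lambda_*)\;\ge\;c\,(1/\kappa)^d\;=\;\Omega(1/\kappa^d).
\end{align}

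The main obstacle is the bookkeeping in the third paragraph: matching the combinatorial definition of the dependency depth $d$ to the number of genuine $1/\kappa$-factors in the product $f_{0:k}^{(s_0)}(\gamma)$ — correctly separating attention hops (each contributing $1/\kappa$) from free self/recurrent hops (contributing $\ge 1$), so that the exponent is exactly $d$ and not $d\pm 1$ — together with checking that one fixed chain $\gamma$ serves uniformly for all large $k$ so the $\Omega(\cdot)$ is truly asymptotic. The remaining ingredients (the unitary reduction and the no-cancellation argument) are soft. Finally, as with Remark~\ref{normality_remark}, normality of $V$ is only a convenience: since $\frac{d s_{t+k}}{d h_t}$ is a polynomial in $V$, its Frobenius norm is at least the modulus of any of its eigenvalues $p_{0:k}(\lambda_i)$, so the conclusion persists whenever $V$ has a positive real eigenvalue.
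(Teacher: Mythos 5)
Your proposal is correct and follows essentially the same route as the paper: reduce via the spectral decomposition of the normal matrix $V$ to lower-bounding $|p_{0:k}(\lambda_*)|$ at the positive real eigenvalue, then use nonnegativity of every coefficient $\chi_{0:k}^{(t)}(s)$ to drop to the single term guaranteed by the dependency-depth hypothesis, namely $f_{0:k}^{(s')}(\gamma)\geq (1/\kappa)^{s'+1}$ for some $s'\leq d$. The only differences are cosmetic bookkeeping (the paper's exponent is $d+1$ rather than your $d$, immaterial since $\kappa$ is held constant in the asymptotics over $k$), and your worry about a single chain $\gamma$ serving uniformly for all large $k$ is unnecessary --- the paper allows the chain to depend on $k$, since the resulting lower bound is uniform regardless.
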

\begin{proof}
By the hypothesis on the dependency depth $d$, we know that for each $k$, there exists $s'\leq d$ and $(i_1,i_2,\ldots,i_{s'})$ such that 
\begin{align}f_{0:k}^{(s')}(i_1,\ldots,i_{s'}) \geq \left(\frac{1}{\kappa}\right)^{s'+1} \geq  \left(\frac{1}{\kappa}\right)^{d+1}\end{align}
Hence if $\lambda$ is real and positive, then for all large enough $k$, we have  
\begin{align}|p_{0:k}(\lambda)| = \Omega(1/\kappa^d)\end{align}

Let us recall that, since $V$ is normal we can write
\begin{align}\|\frac{d s_{t+k}}{d h_t}\| = \sqrt{\sum_{i=1}^n|p_{0:k}(\lambda_i)|^2}\end{align}

where $\lambda_1,\ldots,\lambda_n$ are the eigenvalues of $V$.

Hence, if $V$ has at least one positive real eigenvalue then \begin{align}\|\frac{d s_{t+k}}{d h_t}\| = \Omega(1/\kappa^d)\end{align} for all large enough $k$.
\end{proof}
\noindent\rule{\textwidth}{1pt}
\begin{remark} As already mentioned, since $\kappa$ and $d$ are assumed to be constant, the theorem states that 
\begin{align}\|\frac{d s_{t+k}}{d h_t}\| = \Omega(1)\end{align}
The dependence on $\kappa$ and $d$ was simply given in order to get an intuition on how $\kappa$ and $d$ are influencing the lower bound, and that $d$ has more leverage on the lower bound than $\kappa$.

Regarding the normality of $V$, the same remark can be made as in remark \ref{normality_remark}.

Then note that if $V$ is a (real) $n\times n$ matrix, with $n$ odd, then we have at least one real eigenvalue. Thus the restriction of having at least one positive real eigenvalue is not that severe.

Further, one can show that the theorem holds in a slightly more general setting where one might not have at least one positive real eigenvalue. 

Let us consider the case where $\kappa=1$, $|\lambda|<1$ such that we could consider $\lambda^c \approx 0$ for some large enough positive integer $c$, and that all states between $T$ and $T-c$ have dependency depth of exactly $d$ (where $T=t+k$), then 

\begin{align}p_{0:k}(\lambda) = \frac{\lambda^d}{\kappa^d}\cdot (1+\lambda+\ldots+\lambda^{c-d}) = \frac{\lambda^d}{\kappa^d}\cdot \left(\frac{1-\lambda^{c-d+1}}{1-\lambda}\right)\end{align}

Hence we can see that if we can show that $\left|\frac{1-\lambda^{c-d+1}}{1-\lambda}\right|$ is lower bounded asymptotically by a constant, independent of $d$ and $\kappa$, (which it is in this case), then we have 
\begin{align}|p_{0:k}(\lambda)| = \Omega(1/\kappa^d)\end{align}
We also see that we would like $\lambda$ to be sufficiently bounded away from a small set of critical values such as the $(c-d)$-th roots of unity.\\

In a more general setting, we can rewrite 
\begin{align}p_{0:k}(\lambda) = \frac{\lambda^{d}}{\kappa^{d}}\cdot q_{0:k}(\lambda)\end{align}
for some polynomial $q_{0:k}$ with positive real coefficients, and we would like $\lambda$ to be such that $|q_{0:k}(\lambda)| = \Omega(1)$ for all sufficiently large $k$.

Our hypothesis is that the theorem holds as long as $\lambda$ is sufficiently bounded away from a small set of critical values in $\mathbb{C}\setminus \mathbb{R}^{+}$, or in other words, we would need only at least one eigenvalue to satisfy this condition. This set of critical values is a dependent on $\kappa$, $d$ and the overall configuration of the attention weights.
\end{remark}
\noindent\rule{\textwidth}{1pt}
\newpage 

\section{Effects of memory sparsity on basic reinforcement learning tasks}\label{minigrid}

We consider a few tasks from MiniGrid \cite{gym_minigrid} in the OpenAI gym \cite{gym} in which an agent must get to certain goal states. We use a partially observed formulation of the task, where the agent only sees a small number of squares ahead of it. 
Our goal is to compare generalization of the solutions learned by full and sparse memory-augmented models, by training on smaller version of an environment and testing it on a larger version. To do so, we compare the use of MemLSTM (full attention) and RelLSTM (sparse attention). We note that some purely recurrent models can perform well on these tasks where sequence lengths are rather short, but the scope of this experiment is to explicitly compare the effect of different memory densities.

\begin{table}[ht]
\centering
\small
\caption{Average Train and Test Rewards for MiniGrid Reinforcement Learning task. The models were trained on the smaller version of the environment and tested on the larger version to test to generalization of the solution learned.}
    \label{tab:rl_results}
\begin{tabular}{cccc}
        \toprule
         Environment & MemLSTM & RelLSTM \\
         \midrule
         & \multicolumn{2}{c}{\textbf{Train}} \\
         \midrule
         RedBlueDoors-6x6   & \bm{$0.97$} & \bm{$0.97$} \\
         GoToObject-6x6  & \bm{$0.85$} & $0.84$ \\
         MemoryS7  & $0.4$ & \bm{$0.94$}  \\
         GoToDoor-5x5  & $0.17$ & \bm{$0.25$} \\
         Fetch-5x5  & $0.42$ & \bm{$0.5$} \\
         DoorKey-5x5  & \bm{$0.94$} & $0.93$ \\
         \midrule
         & \multicolumn{2}{c}{\textbf{Test}} \\
         \midrule
         RedBlueDoors-8x8   & \bm{$0.95$} & \bm{$0.95$} \\
         GoToObject-8x8  & $0.66$ & \bm{$0.74$} \\
         MemoryS13 &  $0.24$ & \bm{$0.30$}  \\
         GoToDoor-8x8  & $0.11$ & \bm{$0.15$} \\
         Fetch-8x8 & $0.44$ & \bm{$0.45$} \\
         DoorKey-16x16  & $0.31$ & \bm{$0.44$} \\
         \bottomrule
    \end{tabular}
\end{table}

These tasks are difficult to solve with standard RL algorithms, due to (1) the partial observability of the environment and (2) the sparsity of the reward, given that the agent receives a reward only after reaching the goal. We use Proximal Policy Optimization (PPO, \cite{PPO}) along with MemLSTM, and RelLSTM as the recurrent modules. All models were each trained for $5000000$ steps on each environment. The hyperparameters used for RelLSTM are $\nu = 5$ and $\rho = 5$.
On the \textit{MiniGrid-DoorKey-5x5-v0} environment the average reward for MemLSTM is $0.94$ and RelLSTM is $0.93$. On transferring the learned solution to the \textit{16x16} version of that environment the average reward for MemLSTM is $0.31$ and RelLSTM is \bm{$0.44$}. As illustrated in \ref{tab:rl_results}, we find that transfer scores for RelLSTM are much higher than for MemLSTM across several environments.

\newpage 
\section{Tradeoff analysis between sparsity and gradient propagation}
\label{tradeoff_results}
As already discussed in Section \ref{heuristics}, the sparsity coefficient $\kappa$ verifies $\kappa = \nu + \rho \geq |S_t| + |R_t|$ for all time step $t$, where we denote $\nu$ for the size of the short-term buffer, and $\rho$ for the maximal size of the relevant sets $R_t$. In this section we would like to see how gradient propagation varies when changing sparsity. As already discussed at the end of Section \ref{Theoretical analysis} as well as at the end of Section \ref{heuristics}, decreasing $\kappa$, would increasingly force gradients to backpropagate through the recurrent connections, thus degrading gradient stability. Meanwhile, increasing $\kappa$ would increase the size of the computational graph. Thus we would like to find the optimal trade-off between sparsity and gradient propagation. This trade-off is clearly task-specific and needs to be determined experimentally. The only way to do so is by either changing $\nu$ or changing $\rho$ (or both). Hence we are going to analyze the effects on gradient propagation by separately changing $\nu$ and $\rho$. 

\begin{figure}[H]
    \centering
   \begin{subfigure}
   \centering
    \includegraphics[width=2.5in,  height=1.7in]{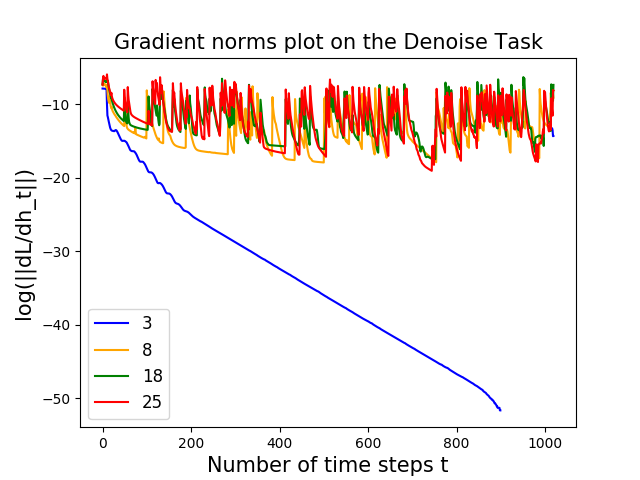}
    \end{subfigure}
    \begin{subfigure}
    \centering
    \includegraphics[width=2.5in,  height=1.7in]{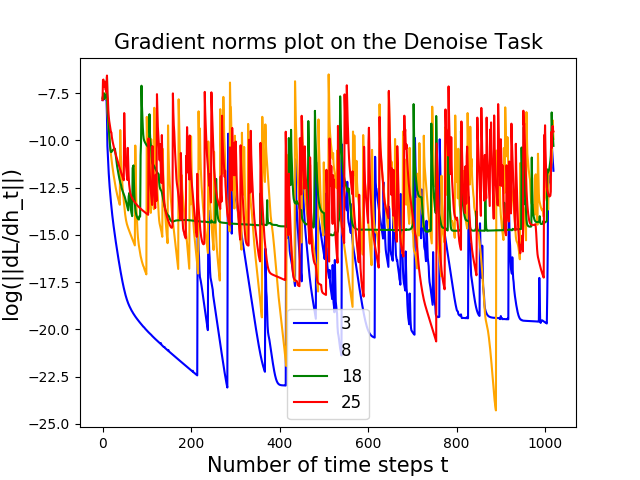}
    \end{subfigure}
    \caption{Both sides show gradient norm plots of $\|\nabla_{h_t}L\|$ in log scale after training for Denoise Task with $t$ ranging from 0 (latest time step) to 1000 (furthest time step). \textbf{(Left)} We took four MemLSTM models for $\rho=3,8,18,25$ while keeping $\nu=15$ fixed. \textbf{(Right)} We took four MemLSTM models for $\nu=3,8,18,25$ while keeping $\rho=15$ fixed. (Note that the $y$-axis of the two plots have different scales, as indicated in the plots.) }\label{fig:trade_off}
\end{figure}

For Figure \ref{fig:trade_off} (left), we can see that when choosing $\rho$ too small (here for instance $\rho=3$), gradient propagation becomes unstable, while larger values for $\rho$ all show stable gradient propagation. This confirms our initial intuition that we can decrease $\rho$ until a task-specific treshold and maintain stable gradient propagation, while decreasing $\rho$ beyond this treshold would cause gradient propagation to become unstable. 

For Figure \ref{fig:trade_off} (right), we can see that changing $\nu$ has much less leverage on gradient propagation than changing $\rho$. Gradient propagation stays relatively stable regardless of the values for $\nu$. The only difference is that for the extreme value of $\nu=3$, we can see that gradient propagation became slightly less stable, because with smaller $\nu$ predictions for future relevancy might become less accurate.
\newpage 
\section{Additional Results}
\label{extra_results}

\begin{figure*}[ht!]
    \centering
    \begin{subfigure}
        \centering
        \includegraphics[scale=0.25]{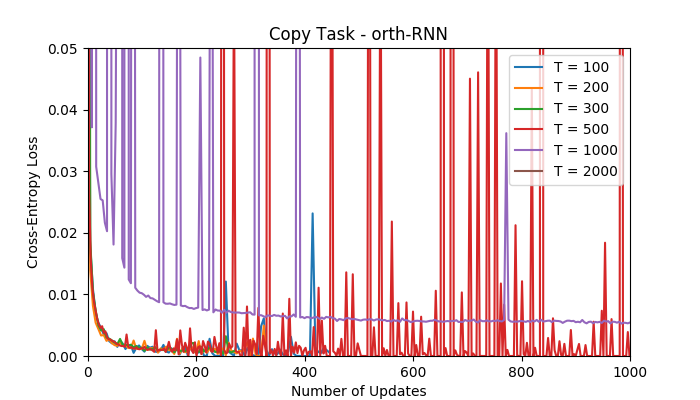}
    \end{subfigure}
    \centering
    \begin{subfigure}
        \centering
        \includegraphics[scale=0.25]{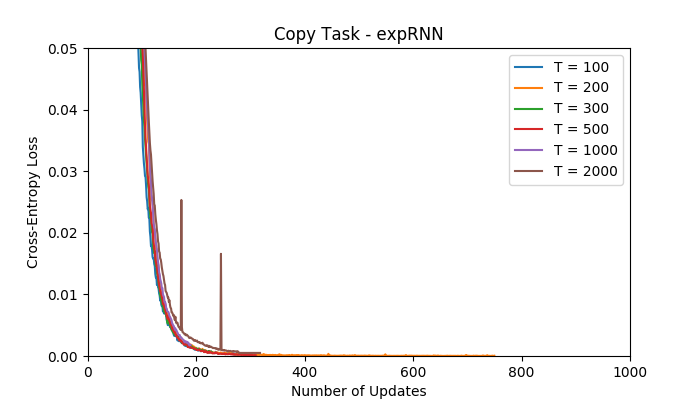}
    \end{subfigure}
    \centering
    \begin{subfigure}
        \centering
        \includegraphics[scale=0.25]{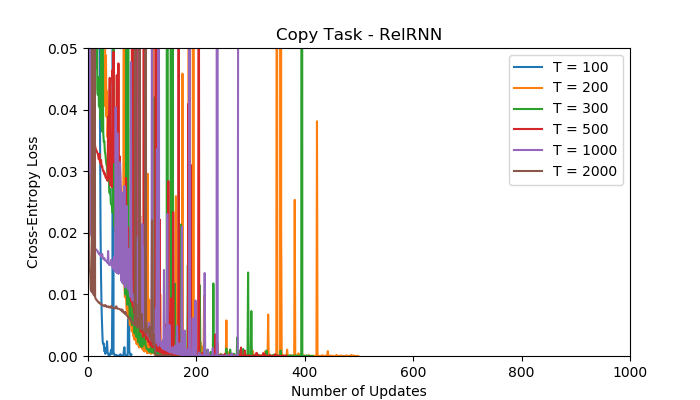}
    \end{subfigure}
    \centering
    \begin{subfigure}
        \centering
        \includegraphics[scale=0.25]{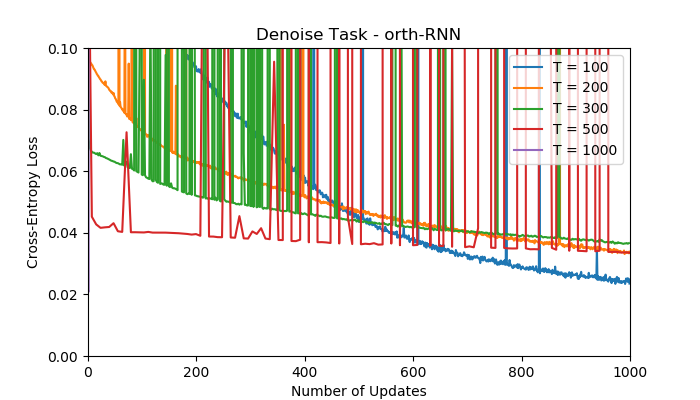}
    \end{subfigure}
    \begin{subfigure}
        \centering
        \includegraphics[scale=0.25]{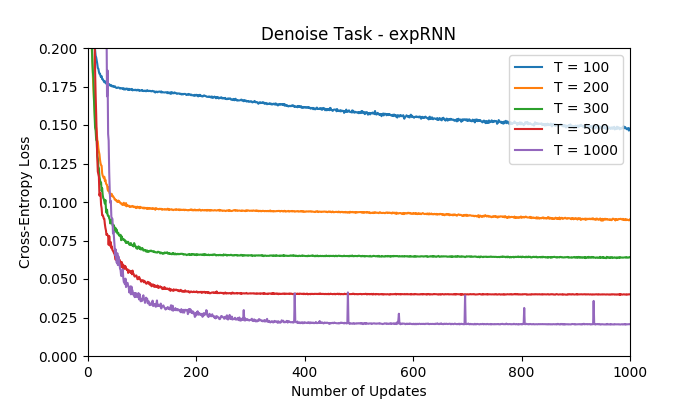}
    \end{subfigure}
    \begin{subfigure}
        \centering
        \includegraphics[scale=0.25]{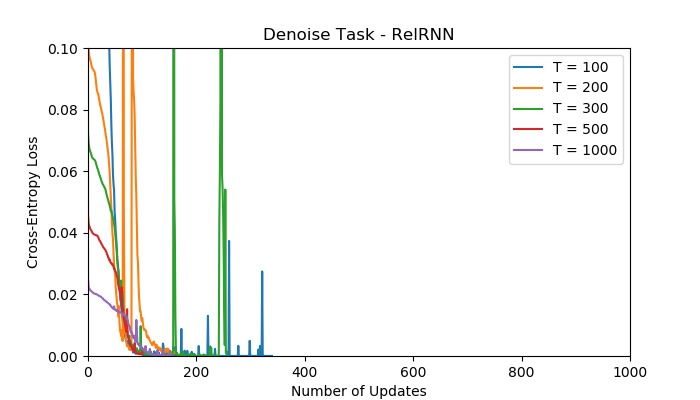}
    \end{subfigure}
    \caption{Cross-entropy vs training updates for Copy (top) and Denoise (bottom) tasks for $T = \{100, 200, 300, 500, 1000, 2000 \}$.  $1$ unit of the x-axis is equal to $100$ iterations of training with the exception of expRNN where 1 unit on the x-axis is $10$ iterations of training.}
    \label{fig:exp}
\end{figure*}

\begin{table}[ht]
    \centering
    \caption{Results for Copy Task}
    \begin{tabular}{cccccccc}
         \toprule
         $T$ & LSTM & orth-RNN & expRNN & MemRNN & SAB & RelRNN & RelLSTM \\
         \midrule
         $100$ & $100\%$ & $100\%$ & $100\%$ & $100\%$ & $100\%$ & $100\%$ & $100\%$ \\
         $200$ & $100\%$ & $100\%$ & $100\%$ & $100\%$ & $100\%$ & $100\%$ & $100\%$ \\
         $300$ & $100\%$ & $100\%$ & $100\%$ & $100\%$ & $100\%$ & $100\%$ & $100\%$\\
         $500$ & $12\%$ & $100\%$ & $100\%$ & $100\%$ & $100\%$ & $100\%$ & $100\%$ \\
         $1000$ & $12\%$ & $80\%$ & $100\%$ & $100\%$ & $100\%$ & $100\%$ & $100\%$ \\
         $2000$ & $12\%$ & $11\%$ & $100\%$ & OOM & $100\%$ & $100\%$ & $100\%$ \\
         \bottomrule
    \end{tabular}
    \label{tab:copy_table}
\end{table}

\begin{table}[ht]
    \centering
    \caption{Hyperparameters used for Copy task}
    \begin{tabular}{cccccc}
         \toprule
         Model & lr & optimizer& non-linearity & $\nu$ & $\rho$ \\
         \midrule
         orthRNN & $0.0002$ & RMSprop & modrelu & - & - \\
         expRNN & $0.0002$  & RMSprop & modrelu  & - & - \\
         LSTM & $0.0002$ & Adam & - & - & - \\
         RelRNN & $0.0002$ & Adam & tanh & $10$ & $10$ \\
         \bottomrule
    \end{tabular}
    \label{tab:hp_copy}
\end{table}

\begin{table}[ht]
    \centering
    \caption{Hyperparameters used for Denoise task}
    \begin{tabular}{cccccc}
         \toprule
         Model & lr & optimizer & non-linearity & $\nu$ & $\rho$ \\
         \midrule
         orthRNN & $0.0002$ & RMSprop & modrelu & - & - \\
         expRNN & $0.0002$  & RMSprop & modrelu  & - & - \\
         LSTM & $0.0002$ & Adam & - & - & - \\
         GORU & $0.001$ & RMSprop & - & - & - \\ 
         RelRNN & $0.0002$ & RMSprop & modrelu & $10$ & $10$ \\
         \bottomrule
    \end{tabular}
    \label{tab:hp_denoise}
\end{table}

\begin{table}[ht]
    \centering
    \caption{Hyperparameters used for sequential MNIST}
    \begin{tabular}{cccccc}
         \toprule
         Model & lr (lr orth)  & optimizer & non-linearity & $\nu$ & $\rho$  \\
         \midrule
         orthRNN & $0.0001$ & Adam & modrelu & - & - \\
         expRNN & $0.0001 (0.00001)$ & Adam  & modrelu  & - & - \\
         LSTM & $0.0002$ &  & - & - & - \\
         GORU & & &  - & - \\
         RelRNN & $0.0003$ & Adam & modrelu & $10$ & $10$ \\
         \bottomrule
    \end{tabular}
    \label{tab:smnist_table}
\end{table}

\begin{table}[ht]
    \centering
    \caption{Hyperparameters used for PTB}
    \begin{tabular}{cccccc}
         \toprule
         Model & lr (lr orth) & optimizer & non-linearity & $\nu$ & $\rho$ \\
         \midrule
         orthRNN & $0.001$ & Adam & tanh & - & - \\
         expRNN & $0.003 (0.0003)$ & Adam  & tanh  & - & - \\
         LSTM & $0.0002$ &  & - & - & - \\
         GORU & &  & - & - \\
         RelRNN & $0.0003$ & Adam  & tanh & $10$ & $5$ \\
         \bottomrule
    \end{tabular}
    \label{tab:smnist_table}
\end{table}


\begin{figure}
    \centering
    \includegraphics[scale=0.5]{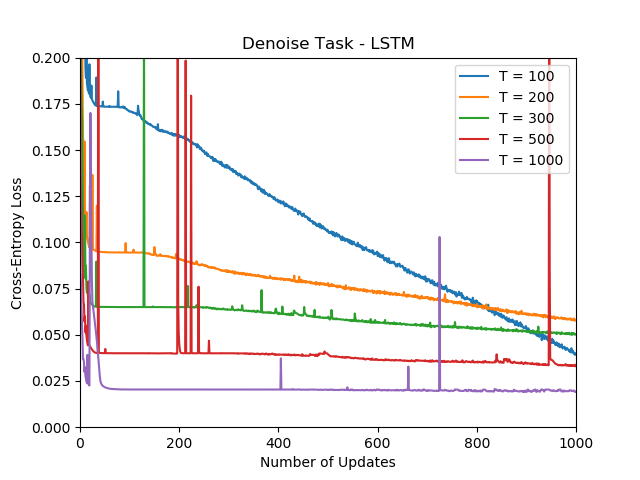}
    \caption{Training curves for LSTM on Denoise task}
    \label{fig:my_label}
\end{figure}

\begin{figure}
    \centering
    \includegraphics[scale=0.5]{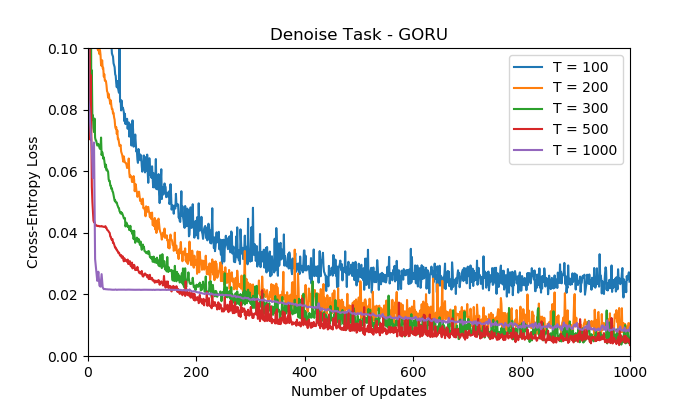}
    \caption{Training curves for GORU on Denoise task}
    \label{fig:my_label}
\end{figure}

\begin{figure}
    \centering
    \includegraphics[scale=0.5]{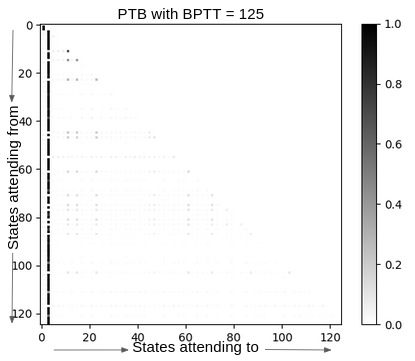}
    \caption{Heatmap of attention scores on PTB task training with full attention and BPTT of $125$}
    \label{fig:my_label}
\end{figure}

\begin{figure}
    \centering
    \includegraphics[scale=0.5]{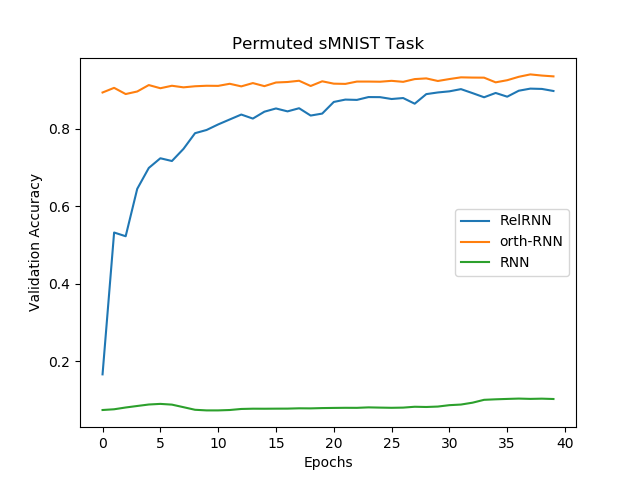}
    \caption{Validation accuracy curves for pMNIST}
    \label{fig:my_label}
\end{figure}

\begin{figure}
    \centering
    \includegraphics[scale=0.5]{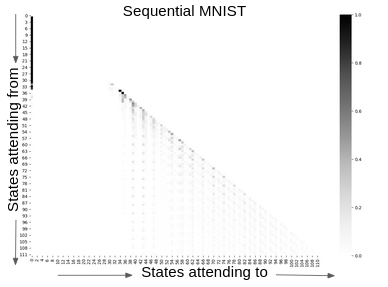}
    \caption{Heatmap of attention scores on MNIST digit classification. 7 pixels were grouped at each time step to make visualization of heatmap easier.}
    \label{fig:my_label}
\end{figure}

\begin{figure}
    \centering
    \includegraphics[scale=0.6]{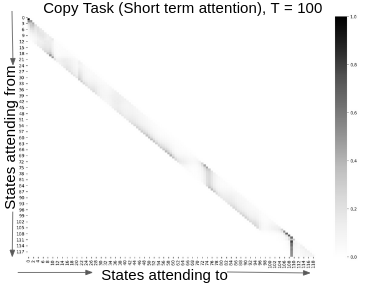}
    \caption{Heatmap of attention scores on Copy task when only doing attention over the Short Term Buffer.}
    \label{fig:my_label}
\end{figure}

\begin{figure}
    \centering
    \includegraphics[scale=0.6]{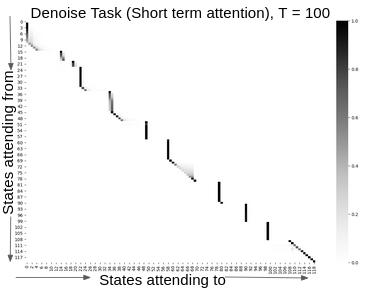}
    \caption{Heatmap of attention scores on Denoise task when only doing attention over the Short Term Buffer.}
    \label{fig:my_label}
\end{figure}


\end{document}